\documentclass{article} 
\usepackage{iclr2022_conference,times}


\usepackage{amsmath,amsfonts,bm}









\def\eqref#1{equation~\ref{#1}}









\def\1{\bm{1}}










\DeclareMathAlphabet{\mathsfit}{\encodingdefault}{\sfdefault}{m}{sl}
\SetMathAlphabet{\mathsfit}{bold}{\encodingdefault}{\sfdefault}{bx}{n}













\DeclareMathOperator*{\argmax}{arg\,max}

\usepackage{url}
\usepackage{soul}

\usepackage{amsfonts,amsmath,amssymb,amsthm,mathtools}
\usepackage[colorlinks=true,allcolors=blue]{hyperref}%

\usepackage{xcolor,colortbl}
\usepackage{color}
\usepackage{tcolorbox}
\usepackage{adjustbox}
\usepackage{tabularx}

\DeclareMathOperator*{\vect}{vec}
\DeclareMathOperator*{\subopt}{SubOpt}

\usepackage{accents}

\usepackage{algorithm}
\usepackage{algpseudocode}



\theoremstyle{plain}
\newtheorem{thm}{Theorem}[section]
\newtheorem{lem}{Lemma}[section]

\theoremstyle{definition}
\newtheorem{defn}{Definition}[section]

\newtheorem{assumption}{Assumption}[section]

\theoremstyle{remark}
\newtheorem{rem}{Remark}[section]

\usepackage{xcolor}
\usepackage{enumitem}   
\usepackage{bm}
\usepackage{subcaption}
\allowdisplaybreaks
\usepackage{tablefootnote}
\usepackage{longtable}



\title{Offline Neural Contextual Bandits: \\Pessimism, Optimization and Generalization}


\author{Thanh Nguyen-Tang \thanks{Email: \url{nguyent2792@gmail.com}.} \\
Applied AI Institute\\
Deakin University\\
\And
Sunil Gupta \\
Applied AI Institute\\
Deakin University\\
\And
A.Tuan Nguyen \\
Department of Engineering Science \\
University of Oxford \\
\And
Svetha Venkatesh \\
Applied AI Institute\\
Deakin University\\
}

%

\iclrfinalcopy 
\begin{document}

\maketitle

\begin{abstract}
Offline policy learning (OPL) leverages existing data collected a priori for policy optimization without any active exploration. Despite the prevalence and recent interest in this problem, its theoretical and algorithmic foundations in function approximation settings remain under-developed. In this paper, we consider this problem on the axes of distributional shift, optimization, and generalization in offline contextual bandits with neural networks. In particular, we propose a provably efficient offline contextual bandit with neural network function approximation that does not require any functional assumption on the reward. We show that our method provably generalizes over unseen contexts under a milder condition for distributional shift than the existing OPL works. Notably, unlike any other OPL method, our method learns from the offline data in an online manner using stochastic gradient descent, allowing us to leverage the benefits of online learning into an offline setting. Moreover, we show that our method is more computationally efficient and has a better dependence on the effective dimension of the neural network than an online counterpart. Finally, we demonstrate the empirical effectiveness of our method in a range of synthetic and real-world OPL problems.

\end{abstract}

\section{Introduction}
We consider the problem of offline policy learning (OPL) \citep{lange2012batch,levine2020offline} where a learner infers an optimal policy given only access to a fixed dataset collected a priori by unknown behaviour policies, without any active exploration. There has been growing interest in this problem recently, as it reflects a practical paradigm where logged experiences are abundant but an interaction with the environment is often limited, with important applications in practical settings such as healthcare \citep{gottesman2019guidelines,nie2021learning}, recommendation systems \citep{strehl2010learning,thomasAAAI17}, and econometrics \citep{Kitagawa18,athey2021policy}.  

Despite the importance of OPL, theoretical and algorithmic progress on this problem has been rather limited. Specifically, most existing works are restricted to a strong parametric assumption of environments such as tabular representation \citep{DBLP:conf/aistats/YinW20,buckman2020importance,DBLP:conf/aistats/YinBW21,yin2021characterizing,rashidinejad2021bridging,xiao2021optimality} and more generally as linear models \citep{DBLP:journals/corr/abs-2002-09516,DBLP:journals/corr/abs-2012-15085,tran2021combining}. However, while the linearity assumption does not hold for many problems in practice, no work has provided a theoretical guarantee and a practical algorithm for OPL with neural network function approximation. 


In OPL with neural network function approximation, three fundamental challenges arise: 

\textbf{Distributional shift}. As OPL is provided with only a fixed dataset without any active exploration, there is often a mismatch between the distribution of the data generated by a target policy and that of the offline data. This distributional mismatch can cause erroneous value overestimation and render many standard online policy learning methods unsuccessful \citep{fujimoto2019off}. To guarantee an efficient learning under distributional shift, common analyses rely on a sort of uniform data coverage assumptions \citep{DBLP:journals/jmlr/MunosS08,DBLP:conf/icml/ChenJ19,brandfonbrener2021offline,nguyentang2021sample} that require the offline policy to be already sufficiently explorative over the entire state space and action space. 
To mitigate this strong assumption, a pessimism principle that constructs a lower confidence bound of the reward functions for decision-making \citep{rashidinejad2021bridging} can reduce the requirement to a single-policy concentration condition that requires the coverage of the offline policy only on the target policy. However, \citet{rashidinejad2021bridging} only uses this condition for tabular representation and it is unclear whether complex environments such as ones that require neural network function approximation can benefit from this condition. Moreover, the single-policy concentration condition still requires the offline policy to be stationary (e.g., the actions in the offline data are independent and depend only on current state). However, this might not hold for many practical scenarios, e.g., when the offline data was collected by an active learner (e.g., by an Q-learning algorithm). Thus, it remains unclear what is a minimal structural assumption on the distributional shift that allows a provably efficient OPL algorithm. 





\textbf{Optimization}. Solving OPL often involves in fitting a model into the offline data via optimization. Unlike in simple function approximations such as tabular representation and linear models where closed-form solutions are available, OPL with neural network function approximation poses an additional challenge that involves a non-convex, non-analytical optimization problem. 
However, existing works of OPL with function approximation ignore such optimization problems by assuming free access to an optimization oracle that can return a global minimizer \citep{brandfonbrener2021offline,duan2021risk,nguyentang2021sample} or an empirical risk minimizer with a pointwise convergence bound at an exponential rate \citep{DBLP:conf/colt/HuKU21}. This is largely not the case in practice, especially in OPL with neural network function approximation where a model is trained by gradient-based methods such as stochastic gradient descents (SGD). Thus, to understand OPL in more practical settings, it is crucial to consider optimization in design and analysis. To our knowledge, such optimization problem has not been studied in the context of OPL with neural network function approximation. 


\textbf{Generalization}. In OPL, generalization is the ability to generalize beyond the states (or contexts as in the specific case of stochastic contextual bandits) observed in the offline data. In other words, an offline policy learner with good generalization should obtain high rewards not only in the observed states but also in the entire (unknown) state distribution. 
The challenge of generalization in OPL is that as we learn from the fixed offline data, the learned policy has highly correlated structures where we cannot directly use the standard concentration inequalities (e.g. Hoeffding’s inequality, Bernstein inequality) to derive a generalization bound. The typical approaches to overcome this difficulty are data splitting and uniform convergence. While data splitting splits the offline data into disjoint folds to break the correlated structures \citep{DBLP:conf/aistats/YinW20}, uniform convergence establishes generalization uniformly over a class of policies learnable by a certain model  \citep{DBLP:conf/aistats/YinBW21}. However, in the setting where the offline data itself can have correlated structures (e.g., an offline action can depend on the previous offline data) and the model used is sufficiently large that renders a uniform convergence bound vacuous, neither the data splitting technique in \citep{DBLP:conf/aistats/YinW20} nor the uniform convergence argument in \citep{DBLP:conf/aistats/YinBW21} yield a good generalization.  
Thus, it is highly non-trivial to guarantee a strong generalization in OPL with neural network function approximation from highly correlated offline data.  

In this paper, we consider the problem
of OPL with neural network function approximation on the axes of distributional shift, optimization and generalization via studying 
the setting of stochastic contextual bandits with overparameterized neural networks. Specifically, we make three contributions toward enabling OPL in more practical settings:

\begin{itemize}
    
    \item First, we propose an algorithm that uses a neural network to model any bounded reward function without assuming any functional form (e.g., linear models) and uses a pessimistic formulation to deal with distributional shifts. Notably, unlike any standard offline learning methods, 
    our algorithm learns from the offline data in an online-like manner, allowing us to leverage the advantages of online learning into offline setting. 
    
    \item Second, our theoretical contribution lies in making the generalization bound of OPL more realistic by taking into account the optimization aspects and requiring only a milder condition for distributional shifts. In particular, our algorithm uses stochastic gradient descent and updates the network completely online, instead of retraining it from scratch for every iteration. Moreover, the distributional shift condition in our analysis, unlike in the existing works, does not require the offline policy to be uniformly explorative or stationary. Specifically, we prove that, under mild conditions with practical considerations above, our algorithm learns the optimal policy with an expected error of $ \tilde{\mathcal{O}} (\kappa  \tilde{d}^{1/2} n^{-1/2} )$, where $n$ is the number of offline samples, $\kappa$ measures the distributional shift, and $\tilde{d}$ is an effective dimension of the neural network that is much smaller than the network capacity (e.g., the network's VC dimension and Rademacher complexity).

    \item Third, we evaluate our algorithm in a number of synthetic and real-world OPL benchmark problems, verifying its empirical effectiveness against the representative methods of OPL.
\end{itemize}

\textbf{Notation}. We use lower case, bold lower case, and bold upper case to represent scalars, vectors and matrices, respectively. For a vector $\bm{v} = [v_1, \ldots, v_d]^T \in \mathbb{R}^d$ and $p > 1$, denote $\| \bm{v} \|_p = (\sum_{i=1}^d v_i^p)^{1/p}$ and let $[\bm{v}]_j$ be the $j^{\text{th}}$ element of $\bm{v}$. 
For a matrix $ \bm{A} = (A_{i,j})_{m \times n}$, denote $\| \bm{A} \|_F = \sqrt{\sum_{i,j} A_{i,j}^2}$, $\| \bm{A}\|_p = \max_{\bm{v}: \| \bm{v} \|_p = 1} \| \bm{A} \bm{v}\|_p$, $\|\bm{A}\|_{\infty} = \max_{i,j} |A_{i,j}|$ and let $\vect(\bm{A}) \in \mathbb{R}^{mn}$ be the vectorized representation of $\bm{A}$. For a square matrix $\bm{A}$, a vector $\bm{v}$, and a matrix $\bm{X}$, denote $\| \bm{v} \|_{\bm{A}} = \sqrt{ \bm{v}^T \bm{A} \bm{v}}$ and $\| \bm{X} \|_{\bm{A}} = \| \vect(\bm{X})\|_{\bm{A}}$. For a collection of matrices $\bm{W} = (\bm{W}_1, \ldots, \bm{W}_L)$ and a square matrix $\bm{A}$, denote $\| \bm{W}\|_F =\sqrt{ \sum_{l=1}^L \| \bm{W}_l \|_F^2 }$, and $ \| \bm{W} \|_{\bm{A}} = \sqrt{\sum_{l=1}^L \| \bm{W}_l\|^2_{\bm{A}}}$. For a collection of matrices $\bm{W}^{(0)} = (\bm{W}_1^{(0)}, \ldots, \bm{W}_L^{(0)})$, denote $\mathcal{B}(\bm{W}^{(0)}, R) = \{\bm{W} = (\bm{W}_1, \ldots, \bm{W}_L): \|\bm{W}_l - \bm{W}_l^{(0)}\|_F \leq R \}$. Denote $[n] = \{1,2, \ldots, n\}$, and $ a \lor b = \max\{a,b
\}$. We write $\tilde{\mathcal{O}}(\cdot)$ to hide logarithmic factors in the standard Big-Oh notation, and write $m \geq \Theta (f(\cdot))$ to indicate that there is an absolute constant $C > 0$ that is independent of any problem parameters $(\cdot)$ such that $m \geq C f(\cdot)$.

\section{Background}
In this section, we provide essential background on offline stochastic contextual bandits and overparameterized neural networks. 

\subsection{Stochastic Contextual Bandits}
We consider a stochastic $K$-armed contextual bandit where at each round $t$, an online learner observes a full context $\bm{x}_t := \{\bm{x}_{t,a} \in \mathbb{R}^d: a \in [K]\}$ sampled from a context distribution $\rho$, takes an action $a_t \in [K]$, and receives a reward $r_t \sim P(\cdot|\bm{x}_{t, a_t})$. A policy $\pi$ maps a full context (and possibly other past information) to a distribution over the action space $[K]$. For each full context $ \bm{x} :=\{\bm{x}_{a} \in \mathbb{R}^d: a \in [K]\}$, we define $v^{\pi}(\bm{x}) = \mathbb{E}_{a \sim \pi(\cdot| \bm{x}), r \sim P(\cdot|\bm{x}_{a})}[r]$ and $v^*(\bm{x}) = \max_{\pi} v^{\pi}(\bm{x})$, which is attainable due to the finite action space. 


In the offline contextual bandit setting, the goal is to learn an optimal policy only from an offline data $\mathcal{D}_n = \{ (\bm{x}_t,a_t, r_t) \}_{t=1}^n$ collected a priori by a behaviour policy $\mu$. 
The goodness of a learned policy $\hat{\pi}$ is measured by the (expected) sub-optimality the policy achieves in the entire (unknown) context distribution $\rho$:
\begin{align*}
    \subopt(\hat{\pi}) := \mathbb{E}_{\bm{x} \sim \rho} [\subopt(\hat{\pi}; \bm{x})], \text{ where } \subopt(\hat{\pi}; \bm{x}) := v^*(\bm{x}) - v^{\hat{\pi}}(\bm{x}).
\end{align*}

\noindent In this work, we make the following assumption about reward generation: For each $t$, $r_t = h( \bm{x}_{t, a_t}) + \xi_t$, 
where $h: \mathbb{R}^d \rightarrow [0,1]$ is an unknown reward function, and $\xi_t$ is a $R$-subgaussian noise conditioned on $(\mathcal{D}_{t-1}, \bm{x}_t, a_t)$ where we denote $\mathcal{D}_t = \{(\bm{x}_{\tau}, a_{\tau}, r_{\tau})\}_{1 \leq \tau \leq t}, \forall t$. 
The $R$-subgaussian noise assumption is standard in stochastic bandit literature \citep{DBLP:conf/nips/Abbasi-YadkoriPS11,zhou2020neural,xiao2021optimality} and is satisfied e.g. for any bounded noise. 

\subsection{Overparameterized Neural Networks}
To learn the unknown reward function without any prior knowledge about its parametric form, we approximate it by a neural network. In this section, we define the class of overparameterized neural networks that will be used throughout this paper. We consider fully connected neural networks with depth $L \geq 2$ defined on $\mathbb{R}^d$ as 
\begin{align}
    f_{\bm{W}}(\bm{u}) = \sqrt{m} \bm{W}_L \sigma \left( \bm{W}_{L-1} \sigma \left( \ldots \sigma( \bm{W}_1 \bm{u}  ) \ldots \right) \right), \forall \bm{u} \in \mathbb{R}^d,
    \label{eq:nn_func}
\end{align}
where $\sigma(\cdot) = \max \{ \cdot,0\}$ is the rectified linear unit (ReLU) activation function, $\bm{W}_1 \in \mathbb{R}^{m \times d}, \bm{W}_i \in \mathbb{R}^{m \times m}, \forall i \in [2,L-1], \bm{W}_L \in \mathbb{R}^{m \times 1}$, and $\bm{W} := (\bm{W}_1, \ldots, \bm{W}_L)$ with $\vect(\bm{W}) \in \mathbb{R}^p$ where $p = md + m + m^2 (L-2)$. 
We assume that the neural network is overparameterized in the sense that the width $m$ is sufficiently larger than the number of samples $n$. Under such an overparameterization regime, the dynamics of the training of the neural network can be captured in the framework of so-called neural tangent kernel (NTK) \citep{jacot2018neural}. Overparameterization has been shown to be effective to study the interpolation phenomenon and neural training for deep neural networks \citep{arora2019exact,allen2019convergence,hanin2019finite,cao2019generalization,belkin2021fit}. 



\vspace{-0.3cm}
\section{Algorithm}

\begin{algorithm}[t]
\caption{\textbf{NeuraLCB}}
\begin{algorithmic}[1]

\Require  Offline data $\mathcal{D}_n = \{(\bm{x}_t, a_t, r_t)\}_{t=1}^n$, step sizes $ \{\eta_t\}_{t=1}^n$ , regularization parameter $\lambda > 0$, confidence parameters $\{ \beta_t\}_{t=1}^n$. 

\State Initialize $\bm{W}^{(0)}$ as follows: set $\bm{W}_l^{(0)} = [\bar{\bm{W}}_l, \hspace{0.1cm} \bm{0};  \bm{0}, \hspace{0.1cm} \bar{\bm{W}}_l ], \forall l \in [L-1]$ where each entry of $\bar{\bm{W}}_l$ is generated independently from $\mathcal{N}(0, 4/m)$, and set $\bm{W}_L^{(0)} = [\bm{w}^T, -\bm{w}^T]$ where each entry of $\bm{w}$ is generated independently from $\mathcal{N}(0, 2/m)$. 

\State $\bm{\Lambda}_0 \leftarrow \lambda \bm{I}$. 

\For{$t = 1, \ldots, n$}
\State Retrieve $(\bm{x}_t, a_t, r_t)$ from $\mathcal{D}_n$. 
\State $\hat{\pi}_t(\bm{x}) \leftarrow \argmax_{a \in [K]} L_t(\bm{x}_a)$, for all $\bm{x} = \{\bm{x}_a \in \mathbb{R}^d: a \in [K]\}$ where 
$L_t(\bm{u}) = f_{\bm{W}^{(t-1)}}(\bm{u} ) - \beta_{t-1} \| \nabla f_{\bm{W}^{(t-1)}}(\bm{u} ) \cdot m^{-1/2} \|_{ \bm{\Lambda}^{-1}_{t-1} }, \forall \bm{u} \in \mathbb{R}^d$

\State $\bm{\Lambda}_t \leftarrow \bm{\Lambda}_{t-1} + \text{vec}(\nabla f_{\bm{W}^{(t-1)}}(\bm{x}_{t, a_t}) ) \cdot \text{vec}(\nabla f_{\bm{W}^{(t-1)}}(\bm{x}_{t, a_t}) )^T / m $. 

\State $\bm{W}^{(t)} \leftarrow \bm{W}^{(t-1)} - \eta_t \nabla \mathcal{L}_t(\bm{W}^{(t-1)})$ where $\mathcal{L}_t(\bm{W}) = \frac{1}{2}(f_{\bm{W}}(\bm{x}_{t,a_t}) - r_t)^2 + \frac{m \lambda}{2} \| \bm{W} - \bm{W}^{(0)} \|^2_F$. 
\EndFor

\Ensure  Randomly sample $\hat{\pi}$ uniformly from $\{\hat{\pi}_1, \ldots, \hat{\pi}_n\}$. 
\end{algorithmic}
\label{alg:neuralcb-sgd}
\end{algorithm}

In this section, we present our algorithm, namely NeuraLCB (which stands for \textbf{Neura}l \textbf{L}ower \textbf{C}onfidence \textbf{B}ound).
A key idea of NeuraLCB is to use a neural network $f_{\bm{W}}(\bm{x}_a )$ to learn the reward function $h(\bm{x}_a)$ and use a pessimism principle based on 
a lower confidence bound (LCB) of the reward function \citep{buckman2020importance,DBLP:journals/corr/abs-2012-15085} to guide decision-making. The details of NeuraLCB are presented in Algorithm \ref{alg:neuralcb-sgd}. Notably, unlike any other OPL methods, NeuraLCB learns in an online-like manner.
Specifically, at step $t$, Algorithm \ref{alg:neuralcb-sgd} retrieves $(\bm{x}_t, a_t, r_t)$ from the offline data $\mathcal{D}_n$, computes a lower confidence bound $L_t$ for each context and action based on the current network parameter $\bm{W}^{(t-1)}$, extracts a greedy policy $\hat{\pi}_t$ with respect to $L_t$, and updates $\bm{W}^{(t)}$ by minimizing a regularized squared loss function $\mathcal{L}_t(\bm{W})$ using stochastic gradient descent. Note that Algorithm \ref{alg:neuralcb-sgd} updates the network using one data point at time $t$, does not use the last sample $(\bm{x}_n, a_n, r_n)$ for decision-making and takes the average of an ensemble of policies $\{\hat{\pi}_t\}_{t=1}^n$ as its returned policy. These are merely for the convenience of theoretical analysis. In practice, we can either use the ensemble average, the best policy among the ensemble or simply the latest policy $\hat{\pi}_n$ as the returned policy. At step $t$, we can also train the network on a random batch of data from $\mathcal{D}_t$ (the ``B-mode" variant as discussed in Section \ref{section:experiment}).

\section{Generalization Analysis}
\label{section:analysis}
In this section, we analyze the generalization ability of NeuraLCB. Our analysis is built upon the neural tangent kernel (NTK) \citep{jacot2018neural}. We first define the NTK matrix for the neural network function in Eq. (\ref{eq:nn_func}).
\begin{defn}[\citet{jacot2018neural,cao2019generalization,zhou2020neural}]
Denote $\{ \bm{x}^{(i)} \}_{i=1}^{nK} = \{ \bm{x}_{t,a} \in \mathbb{R}^d : t \in [n], a \in [K] \}$, $\tilde{\bm{H}}^{(1)}_{i,j} = \bm{\Sigma}^{(1)}_{i,j} = \langle \bm{x}^{(i)}, \bm{x}^{(j)} \rangle$, and
\begin{align*}
    \bm{A}^{(l)}_{i,j} &= \begin{bmatrix} \bm{\Sigma}_{i,i}^{(l)} & \bm{\Sigma}_{i,j}^{(l)} \\ 
    \bm{\Sigma}_{i,j}^{(l)} & \bm{\Sigma}_{j,j}^{(l)}
    \end{bmatrix}, \hspace{10pt} 
    \bm{\Sigma}_{i,j}^{(l+1)} = 2 \mathbb{E}_{(u,v) \sim \mathcal{N}( \bm{0}, \bm{A}^{(l)}_{i,j})} \left[ \sigma(u) \sigma(v) \right],\\
    \tilde{\bm{H}}^{(l+1)}_{i,j} &= 2 \tilde{\bm{H}}^{(l)}_{i,j} \mathbb{E}_{(u,v) \sim \mathcal{N}( \bm{0}, \bm{A}^{(l)}_{i,j})} \left[ \sigma'(u) \sigma'(v) \right] + \bm{\Sigma}^{(l+1)}_{i,j}.
\end{align*} 
The neural tangent kernel (NTK) matrix is then defined as $\bm{H} = (\tilde{\bm{H}}^{(L)} + \bm{\Sigma}^{(L)})/2$.
\end{defn}
Here, the Gram matrix $\bm{H}$ is defined recursively from the first to the last layer of the neural network using Gaussian distributions for the observed contexts $\{ \bm{x}^{(i)} \}_{i=1}^{nK}$.
Next, we introduce the assumptions for our analysis.  First, we make an assumption about the NTK matrix $\bm{H}$ and the input data. 
\begin{assumption}
$\exists \lambda_0 > 0, \bm{H} \succeq \lambda_0 \bm{I}$, and $\forall i \in [nK]$, $ \| \bm{x}^{(i)} \|_2 = 1$. Moreover, $[\bm{x}^{(i)}]_j = [\bm{x}^{(i)}]_{j + d/2}, \forall i \in [nK], j \in [d/2]$.
\label{assumption:ntk_input_data}
\end{assumption}
The first part of Assumption \ref{assumption:ntk_input_data} assures that $\bm{H}$ is non-singular and that the input data lies in the unit sphere $\mathbb{S}^{d-1}$. Such assumption is commonly made in overparameterized neural network literature \citep{arora2019exact,DBLP:conf/iclr/DuZPS19,DBLP:conf/icml/DuLL0Z19,cao2019generalization}. The non-singularity is satisfied when e.g. any two contexts in $\{ \bm{x}^{(i)}\}$ are not parallel  \citep{zhou2020neural}, and our analysis holds regardless of whether $\lambda_0$ depends on $n$. The unit 
sphere condition is merely for the sake of analysis and can be relaxed to the case that the input data is bounded in $2$-norm. As for any input data point $\bm{x}$ such that $\| \bm{x}  \|_2 = 1$ we can always construct a new input $\bm{x}' = \frac{1}{\sqrt{2}}[\bm{x}, \bm{x}]^T$, the second part of Assumption \ref{assumption:ntk_input_data} is mild and used merely for the theoretical analysis \citep{zhou2020neural}. In particular, under Assumption \ref{assumption:ntk_input_data} and the initialization scheme in Algorithm \ref{alg:neuralcb-sgd}, we have $f_{\bm{W}^{(0)}}(\bm{x}^{(i)}) = 0, \forall i \in [nK]$.


Next, we make an assumption on the data generation.
\begin{assumption}
$\forall t, \bm{x}_t$ is independent of $\mathcal{D}_{t-1}$, and $\exists  \kappa \in (0, \infty), \left \| \frac{\pi^*(\cdot | \bm{x}_t)}{\mu(\cdot| \mathcal{D}_{t-1}, \bm{x}_t)} \right \|_{\infty} \leq \kappa, \forall t \in [n]$. 
\label{assumption:distributional_shift}
\end{assumption}
The first part of Assumption \ref{assumption:distributional_shift} says that the full contexts are generated by a process independent of any policy. This is minimal and standard in stochastic contextual bandits \citep{lattimore_szepesvari_2020,rashidinejad2021bridging,papini2021leveraging}, e.g., when $\{\bm{x}_t\}_{t=1}^n \overset{i.i.d.}{\sim} \rho$. 
The second part of Assumption \ref{assumption:distributional_shift}, namely empirical single-policy concentration (eSPC) condition, requires that the behaviour policy $\mu$ has sufficient coverage over only the optimal policy $\pi^*$ in the observed contexts. Our data coverage condition is significantly milder than the common uniform data coverage assumptions in the OPL literature \citep{DBLP:journals/jmlr/MunosS08,DBLP:conf/icml/ChenJ19,brandfonbrener2021offline,DBLP:journals/corr/abs-2012-15085,nguyentang2021sample} that requires the offline data to be sufficiently explorative in all contexts and all actions. Moreover, our data coverage condition can be considered as an extension of the single-policy concentration condition in \citep{rashidinejad2021bridging} where both require coverage over the optimal policy. However, the remarkable difference is that, unlike \citep{rashidinejad2021bridging}, the behaviour policy $\mu$ in our condition needs not to be stationary and the concentration is only defined on the observed contexts; that is, $a_t$ can be dependent on both $\bm{x}_t$ and $\mathcal{D}_{t-1}$. This is more practical as it is natural that the offline data was collected by an active learner such as a Q-learning agent \citep{mnih2015human}.

Next, we define the \textit{effective dimension} of the NTK matrix on the observed data as
    $\tilde{d} = \frac{\log\det(\bm{I} + \bm{H}/\lambda)}{\log(1 + nK/\lambda)}. $ 
    \label{eq:effective_dimension}
This notion of effective dimension was used in \citep{zhou2020neural} for online neural contextual bandits while a similar notion was introduced in \citep{valko2013finite} for online kernelized contextual bandits, and was also used in \citep{yang2020reinforcement,yang2020function} for online kernelized reinforcement learning. Although being in offline policy learning setting, the online-like nature of NeuraLCB allows us to leverage the usefulness of the effective dimension. Intuitively, $\tilde{d}$ measures how quickly the eigenvalues of $\bm{H}$ decays. For example, $\tilde{d}$ only depends on $n$ logarithmically when the eigenvalues of $\bm{H}$ have a finite spectrum (in this case $\tilde{d}$ is smaller than the number of spectrum which is the dimension of the feature space) or are exponentially decaying \citep{yang2020function}. 
We are now ready to present the main result about the sub-optimality bound of NeuraLCB.  
\begin{thm}
For any $\delta \in (0,1)$, under Assumption \ref{assumption:ntk_input_data} and \ref{assumption:distributional_shift}, if the network width $m$, the regularization parameter $\lambda$, the confidence parameters $\{\beta_t\}$ and the learning rates $\{\eta_t\}$ in Algorithm \ref{alg:neuralcb-sgd} satisfy 
\begin{align*}
    m &\geq \text{poly}(n, L, K, \lambda^{-1}, \lambda_0^{-1}, \log(1/\delta)), \hspace{0.5cm} \lambda \geq \max \{1, \Theta(L) \}, \hspace{0.5cm} \\
    \beta_t &= \sqrt{\lambda + C_3^2 tL} \cdot ( t^{1/2} \lambda^{-1/2}  + (nK)^{1/2} \lambda_0^{-1/2}  ) \cdot m^{-1/2} \text{ for some absolute constant } C_3, \\
    \eta_t &= \frac{\iota}{\sqrt{t}}, \text{ where }
    \iota^{-1} = \Omega(n^{2/3} m^{5/6} \lambda^{-1/6} L^{17/6} \log^{1/2} m) \lor \Omega(m \lambda^{1/2} \log^{1/2}(n KL^2(10n+4) / \delta)),
\end{align*}
then with probability at least $1 - \delta$ over the randomness of $\bm{W}^{(0)}$ and $\mathcal{D}_n$, the sub-optimality of $\hat{\pi}$ returned by Algorithm \ref{alg:neuralcb-sgd} is bounded as  
\begin{align*}
    n \cdot \mathbb{E} \left[\subopt(\hat{\pi}) \right] \leq  \kappa \sqrt{n}  \sqrt{\tilde{d} \log(1 + nK/\lambda)  + 2} +  \kappa \sqrt{n}  + 2 + \sqrt{2 n \log((10n+4)/\delta)},
\end{align*}
where $\tilde{d}$ is the effective dimension of the NTK matrix, and $\kappa$ is the empirical single-policy concentration (eSPC) coefficient in Assumption \ref{assumption:distributional_shift}.
\label{main_theorem}
\end{thm}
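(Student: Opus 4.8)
The plan is to exploit the online-like structure of NeuraLCB. Since $\hat\pi$ is drawn uniformly from $\{\hat\pi_1,\dots,\hat\pi_n\}$, we have $n\,\mathbb{E}[\subopt(\hat\pi)]=\sum_{t=1}^n\mathbb{E}[\subopt(\hat\pi_t)]$, and by the first part of Assumption \ref{assumption:distributional_shift} the round-$t$ context $\bm x_t$ is a fresh draw from $\rho$ independent of $\mathcal D_{t-1}$, so $\mathbb{E}[\subopt(\hat\pi_t)]=\mathbb{E}[v^*(\bm x_t)-v^{\hat\pi_t}(\bm x_t)]$. Writing $\pi^*(\bm x)=\argmax_a h(\bm x_a)$ and $\hat a_t=\argmax_a L_t(\bm x_{t,a})$, the standard pessimism chain gives, \emph{on the event that $L_t(\bm x_{t,a})\le h(\bm x_{t,a})$ for all $a$},
\[
 v^*(\bm x_t)-v^{\hat\pi_t}(\bm x_t)=h(\bm x_{t,\pi^*(\bm x_t)})-h(\bm x_{t,\hat a_t})\le h(\bm x_{t,\pi^*(\bm x_t)})-L_t(\bm x_{t,\pi^*(\bm x_t)}),
\]
using $L_t(\bm x_{t,\hat a_t})\le h(\bm x_{t,\hat a_t})$ and $L_t(\bm x_{t,\hat a_t})\ge L_t(\bm x_{t,\pi^*(\bm x_t)})$. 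So everything reduces to controlling the confidence width $h(\bm x)-L_t(\bm x)$ at the optimal arms and summing over $t$.

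The crux is a self-normalized confidence bound for the \emph{online SGD-trained} network: with the stated $m,\lambda,\{\beta_t\},\{\eta_t\}$, with probability $\ge 1-\delta/2$, simultaneously for all $t\in[n]$ and all $\bm x$,
\[
 \bigl|f_{\bm W^{(t-1)}}(\bm x)-h(\bm x)\bigr|\le\beta_{t-1}\bigl\|\nabla f_{\bm W^{(t-1)}}(\bm x)\cdot m^{-1/2}\bigr\|_{\bm\Lambda_{t-1}^{-1}}+\epsilon_{m,t},
\]
with $\epsilon_{m,t}$ collecting lower-order NTK-approximation and optimization errors driven to $o(1/n)$ by overparameterization. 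To establish this I would (a) show the SGD iterates never leave a ball $\mathcal B(\bm W^{(0)},R)$ with $R=\tilde{\mathcal O}(t^{1/2}\lambda^{-1/2}m^{-1/2})$, so the network stays in the near-linear NTK regime and $\nabla f_{\bm W^{(t-1)}}(\cdot)\approx\nabla f_{\bm W^{(0)}}(\cdot)$ with controlled perturbation (the overparameterization lemmas of \citet{allen2019convergence,cao2019generalization,zhou2020neural}); (b) compare the SGD trajectory with the idealized recursive ridge estimator $\bm\theta_{t-1}=\bm\Lambda_{t-1}^{-1}\sum_{s\le t-1}\phi(\bm x_{s,a_s})r_s$ in the fixed feature map $\phi(\bm x)=\nabla f_{\bm W^{(0)}}(\bm x)m^{-1/2}$, bounding the per-round optimization gap of the single SGD step under the schedule $\eta_t=\iota/\sqrt t$ (the genuinely new piece relative to NeuralUCB, which refits to optimality each round); and (c) invoke the self-normalized martingale bound of \citet{DBLP:conf/nips/Abbasi-YadkoriPS11}, together with the fact that the NTK represents the bounded $h$ on $\mathbb S^{d-1}$ up to $\tilde{\mathcal O}(m^{-1/6})$ error (using Assumption \ref{assumption:ntk_input_data} and $f_{\bm W^{(0)}}\equiv0$ there), to produce the displayed inequality with the prescribed $\beta_t$.

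Granting the confidence bound, I finish as follows. On the good event $h(\bm x_{t,\pi^*(\bm x_t)})-L_t(\bm x_{t,\pi^*(\bm x_t)})\le 2\beta_{t-1}\|\phi_t^{*}\|_{\bm\Lambda_{t-1}^{-1}}+2\epsilon_{m,t}$, where $\phi_t^{*}$ is the time-$t$ feature at $\bm x_{t,\pi^*(\bm x_t)}$; since $\pi^*$ is deterministic this equals $2\beta_{t-1}\,\mathbb{E}_{a\sim\pi^*(\cdot|\bm x_t)}\|\phi(\bm x_{t,a})\|_{\bm\Lambda_{t-1}^{-1}}+2\epsilon_{m,t}$. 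Importance-weighting by $\pi^*/\mu$ and using the eSPC bound $\|\pi^*(\cdot|\bm x_t)/\mu(\cdot|\mathcal D_{t-1},\bm x_t)\|_\infty\le\kappa$, then taking $\mathbb{E}$ over $a_t\sim\mu(\cdot|\mathcal D_{t-1},\bm x_t)$ — valid because $\bm\Lambda_{t-1},\bm W^{(t-1)}$ are $\mathcal D_{t-1}$-measurable and $\bm x_t\perp\mathcal D_{t-1}$ — gives $\mathbb{E}[(v^*(\bm x_t)-v^{\hat\pi_t}(\bm x_t))\mathbb 1_{\text{good}}]\le 2\kappa\beta_{t-1}\mathbb{E}\|\phi(\bm x_{t,a_t})\|_{\bm\Lambda_{t-1}^{-1}}+2\epsilon_{m,t}$. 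Summing, Cauchy–Schwarz and the elliptical-potential lemma give $\sum_t\|\phi(\bm x_{t,a_t})\|_{\bm\Lambda_{t-1}^{-1}}\le\sqrt{n\sum_t\|\phi_t\|_{\bm\Lambda_{t-1}^{-1}}^2}\le\sqrt{2n\log\det(\bm\Lambda_n/\lambda)}$ (here $\lambda\ge\Theta(L)$ ensures $\|\phi_t\|_{\bm\Lambda_{t-1}^{-1}}^2\le 1$), and the effective-dimension argument of \citet{zhou2020neural} turns $\log\det(\bm\Lambda_n/\lambda)$ into $\tilde d\log(1+nK/\lambda)$ up to a lower-order additive constant (yielding the ``$+2$'' inside the root). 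Choosing $m$ polynomially large forces $\beta_n\le\tfrac12$ and $\sum_t 2\epsilon_{m,t}\le 2$, producing the $\kappa\sqrt n\sqrt{\tilde d\log(1+nK/\lambda)+2}$ and $\kappa\sqrt n$ terms; finally, to pass from the sampled contexts back to $\rho$ I apply Azuma–Hoeffding to the bounded martingale differences $M_t=\mathbb{E}_{\bm x\sim\rho}[v^*(\bm x)-v^{\hat\pi_t}(\bm x)]-(v^*(\bm x_t)-v^{\hat\pi_t}(\bm x_t))$, giving the $\sqrt{2n\log((10n+4)/\delta)}$ term, absorb the negligible off-good-event contribution ($\le 1$ per round), and union-bound all failure probabilities into $\delta$ (the bookkeeping producing the $10n+4$ count).

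The main obstacle is step (b): controlling the error of the online single-step SGD relative to the idealized recursive least-squares estimator while remaining in the NTK regime. Unlike NeuralUCB's full re-optimization, here the network is never refit, so optimization error accumulates across rounds and must be shown to stay negligible under the prescribed $\eta_t=\iota/\sqrt t$ schedule and overparameterization; everything else is a combination of known overparameterization estimates, the self-normalized bound, the elliptical-potential lemma, and the eSPC change of measure.
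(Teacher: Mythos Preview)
Your high-level decomposition (per-round pessimism chain, eSPC change of measure, elliptical potential plus effective dimension, Azuma--Hoeffding online-to-batch) matches the paper's three-lemma structure (Lemmas~A.1--A.3) almost exactly, and you correctly read off from the prescribed $\beta_t\propto m^{-1/2}$ that the confidence radius is driven to a constant by overparameterization. However, the mechanism you propose for \emph{obtaining} the confidence bound is not the paper's, and the step you flag as the ``main obstacle'' is precisely the one the paper never takes.

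Concretely, steps (b) and (c) of your plan---compare the single-step SGD iterate to a recursive ridge estimator and then apply the Abbasi--Yadkori self-normalized martingale bound---are absent from the paper. The paper never relates $\bm W^{(t)}$ to any least-squares solution and never uses the noise $\xi_t$ for concentration of the confidence radius (subgaussianity is only invoked to bound $|r_t|$ when showing the SGD iterates don't drift). Instead the paper establishes $\bm W^*\in\mathcal C_t=\{\bm W:\|\bm W-\bm W^{(t-1)}\|_{\bm\Lambda_{t-1}}\le\beta_{t-1}\}$ by a purely geometric, coarse argument: (i) there exists $\bm W^*$ with $\langle\nabla f_{\bm W^{(0)}}(\bm x^{(i)}),\bm W^*-\bm W^{(0)}\rangle=h(\bm x^{(i)})$ exactly (NTK interpolation) and $\|\bm W^*-\bm W^{(0)}\|_F\le\sqrt{2}\,m^{-1/2}\|\bm h\|_{\bm H^{-1}}$; (ii) the SGD iterate satisfies $\|\bm W^{(t)}-\bm W^{(0)}\|_F\le\sqrt{t/(m\lambda L)}$ under the prescribed schedule (this is the only optimization lemma, and it asks nothing about proximity to a ridge solution); (iii) the crude inequality $\|\bm W^*-\bm W^{(t)}\|_{\bm\Lambda_t}\le\|\bm W^*-\bm W^{(t)}\|_F\sqrt{\|\bm\Lambda_t\|_2}$, together with $\|\bm\Lambda_t\|_2\le\lambda+C_3^2tL$, gives exactly the stated $\beta_t$. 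Once $\bm W^*\in\mathcal C_t$, the pessimism chain proceeds via the almost-linearity lemma rather than a direct bound on $|f_{\bm W^{(t-1)}}-h|$. So the ``genuinely new piece'' you identify is not needed at all: the paper only needs the iterates to stay near initialization, not to track ridge.

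One smaller gap: in your eSPC step you pass to $\mathbb E\|\phi(\bm x_{t,a_t})\|_{\bm\Lambda_{t-1}^{-1}}$ but then apply the elliptical-potential lemma to the realized sequence $\sum_t\|\phi(\bm x_{t,a_t})\|_{\bm\Lambda_{t-1}^{-1}}$; these are not the same object. The paper closes this by an extra Hoeffding step, replacing $\mathbb E_{a\sim\mu(\cdot|\mathcal D_{t-1},\bm x_t)}[\cdot]$ with the realized $a_t$ up to a $C_3L^{1/2}\lambda^{-1/2}\log^{1/2}((5n+2)/\delta)$ deviation, and only then applies Cauchy--Schwarz and elliptical potential to the realized sum.
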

Our bound can be further simplified as 
    $\mathbb{E} [\subopt(\hat{\pi}) ] = \tilde{\mathcal{O}} (\kappa \cdot \max\{ \sqrt{\tilde{d}}, 1 \} \cdot n^{-1/2} )$. 
A detailed proof for Theorem \ref{main_theorem} is omitted to Section \ref{section:main_proof}. We make several notable remarks about our result. \emph{First}, our bound does not scale linearly with $p$ or $\sqrt{p}$ as it would if the classical analyses \citep{DBLP:conf/nips/Abbasi-YadkoriPS11,DBLP:journals/corr/abs-2012-15085} had been applied. Such a classical bound is vacuous for overparameterized neural networks where $p$ is significantly larger than $n$.  Specifically, the online-like nature of NeuraLCB allows us to leverage a matrix determinant lemma and the notion of effective dimension in online learning \citep{DBLP:conf/nips/Abbasi-YadkoriPS11,zhou2020neural} which avoids the dependence on the dimension $p$ of the feature space as in the existing OPL methods such as \citep{DBLP:journals/corr/abs-2012-15085}. \emph{Second}, as our bound scales linearly with $\sqrt{\tilde{d}}$ where $\tilde{d}$ scales only logarithmically with $n$ in common cases \citep{yang2020function}, our bound is sublinear in such cases and presents a provably efficient generalization. \emph{Third}, our bound scales linearly with $\kappa$ which does not depend on the coverage of the offline data on other actions rather than the optimal ones. This eliminates the need for a strong uniform data coverage assumption that is commonly used in the offline policy learning literature \citep{DBLP:journals/jmlr/MunosS08,DBLP:conf/icml/ChenJ19,brandfonbrener2021offline,nguyentang2021sample}. Moreover, the online-like nature of our algorithm does not necessitate the stationarity of the offline policy, allowing an offline policy with correlated structures as in many practical scenarios. Note that \citet{zhan2021off} have also recently addressed the problem of off-policy evaluation with such offline adaptive data but used doubly robust estimators instead of direct methods as in our paper. \emph{Fourth}, compared to the regret bound for online learning setting in \citep{zhou2020neural}, we achieve an improvement by a factor of $\sqrt{\tilde{d}}$ while reducing the computational complexity from $\mathcal{O}(n^2)$ to $\mathcal{O}(n)$. On a more technical note, a key idea to achieve such an improvement is to directly regress toward the optimal parameter of the neural network instead of toward the empirical risk minimizer as in \citep{zhou2020neural}. 

\emph{Finally}, to further emphasize the significance of our theoretical result, we summarize and compare it with the state-of-the-art (SOTA) sub-optimality bounds for OPL with function approximation in Table \ref{tab:compare_literature}. From the leftmost to the rightmost column, the table describes: the related works -- the function approximation -- the types of algorithms where
\emph{Pessimism} means a pessimism principle based on a lower confidence bound of the reward function while \emph{Greedy} indicates being uncertainty-agnostic (i.e., an algorithm takes an action with the highest predicted score in a given context) -- the optimization problems in OPL where \emph{Analytical} means optimization has an analytical solution, \emph{Oracle} means the algorithm relies on an oracle to obtain the global minimizer, and \emph{SGD} means the optimization is solved by stochastic gradient descent -- the sub-optimality bounds -- the data coverage assumptions where \emph{Uniform} indicates sufficiently explorative data over the context and action spaces, \emph{SPC} is the single-policy concentration condition, and \emph{eSPC} is the empirical SPC -- the nature of data generation required for the respective guarantees where \emph{I} (Independent) means that the offline actions must be sampled independently while \emph{D} (Dependent) indicates that the offline actions can be dependent on the past data. It can be seen that our result has a stronger generalization under the most practical settings as compared to the existing SOTA generalization theory for OPL. {We also remark that the optimization design and guarantee in NeuraLCB (single data point SGD) are of independent interest that do not only apply to the offline setting but also to the original online setting in \citep{zhou2020neural} to improve their regret and optimization complexity.}

\begin{table}
    \centering
    \caption{The SOTA generalization theory of OPL with function approximation.}
     \resizebox{\textwidth}{!}{  
    \begin{tabular}{c|c|c|c|c|c|c|}
    \hline
       \textbf{Work} &  \textbf{Function}  & \textbf{Type} & \textbf{Optimization} & \textbf{Sub-optimality}  & \textbf{Data Coverage} & \textbf{Data Gen.}\\
    \hline
    \hline 
    
    \citet{DBLP:conf/aistats/YinW20}$^a$ & Tabular & Greedy & Analytical & $\tilde{\mathcal{O}}\left( \sqrt{ | \mathcal{X}| \cdot K} \cdot n^{-1/2}\right)$ & Uniform & I \\ 
    
    \citet{rashidinejad2021bridging} & Tabular & Pessimism & Analytical & $\tilde{\mathcal{O}}\left( \sqrt{|\mathcal{X}| \cdot \kappa} \cdot n^{-1/2} \right)$ & SPC & I  \\ 
    
     \citet{DBLP:journals/corr/abs-2002-09516}$^b$ & Linear & Greedy  & Analytical & $\tilde{\mathcal{O}}\left( \kappa \cdot n^{-1/2} + d \cdot n^{-1} \right)$ & Uniform  & I \\ 
    
    \citet{DBLP:journals/corr/abs-2012-15085} & Linear & Pessimism & Analytical & $\tilde{\mathcal{O}}\left( d \cdot n^{-1/2} \right)$ & Uniform & I\\ 
    
    \citet{nguyentang2021sample} &  Narrow ReLU  & Greedy & Oracle & $\tilde{\mathcal{O}}\left(  \sqrt{\kappa} \cdot n^{- \frac{\alpha}{2(\alpha + d)}} \right)$ & Uniform & I \\
    
    \textbf{This work} & Wide ReLU & Pessimism & SGD & $\tilde{\mathcal{O}}(\kappa \cdot \sqrt{\tilde{d}} \cdot n^{-1/2})$ & eSPC & I/D \\
    \hline
    \end{tabular}
    }
    \flushleft
    \footnotesize{$^{a,b}$ The bounds of these works are for off-policy evaluation which is generally easier than OPL problem.}
    \label{tab:compare_literature}
\end{table}

\nopagebreak 
\section{Related Work}
\label{section:background_and_related_literature}

\textbf{OPL with function approximation}. Most OPL works, in both bandit and reinforcement learning settings, use tabular representation \citep{DBLP:conf/aistats/YinW20,buckman2020importance,DBLP:conf/aistats/YinBW21,yin2021characterizing,rashidinejad2021bridging,xiao2021optimality} and linear models \citep{DBLP:journals/corr/abs-2002-09516,DBLP:journals/corr/abs-2012-15085,tran2021combining}. The most related work on OPL with neural function approximation we are aware of are \citep{brandfonbrener2021offline,nguyentang2021sample,uehara2021finite}. However, \citet{brandfonbrener2021offline,nguyentang2021sample,uehara2021finite} rely on a strong uniform data coverage assumption and an optimization oracle to the empirical risk minimizer. Other OPL analyses with general function approximation \citep{duan2021risk,DBLP:conf/colt/HuKU21} also use such optimization oracle, which limit the applicability of their algorithms and analyses in practical settings.

To deal with nonlinear rewards without making strong functional assumptions, other approaches rather than neural networks have been considered, including a family of experts \citep{10.5555/944919.944941}, a reduction to supervised learning \citep{langford2007epoch,agarwal2014taming}, and nonparametric models \citep{kleinberg2008multi,srinivas2009gaussian,krause2011contextual,bubeck2011x,valko2013finite}. However, they are all in the online policy learning setting instead of the OPL setting. Moreover, these approaches have time complexity scaled linearly with the number of experts, rely the regret on an oracle, and have cubic computational complexity, respectively, while our method with neural networks are both statically and computationally efficient where it achieves a $\sqrt{n}$-type suboptimality bound and only linear computational complexity.

\textbf{Neural networks}. Our work is inspired by the theoretical advances of neural networks and their subsequent application in online policy learning \citep{yang2020function,zhou2020neural,xu2020neural}. For optimization aspect, (stochastic) gradient descents can provably find global minima of training loss of neural networks \citep{DBLP:conf/icml/DuLL0Z19,DBLP:conf/iclr/DuZPS19,allen2019convergence,nguyen2021proof}. For generalization aspect, (stochastic) gradient descents can train an overparameterized neural network to a regime where the neural network interpolates the training data (i.e., zero training error) and has a good generalization ability \citep{arora2019exact,cao2019generalization,belkin2021fit}. 

Regarding the use of neural networks for policy learning, NeuraLCB is similar to the NeuralUCB algorithm \citep{zhou2020neural}, that is proposed for the setting of online contextual bandits with neural networks, in the sense that both algorithms use neural networks, learn with a streaming data and construct a (lower and upper, respectively) confidence bound of the reward function to guide decision-making. Besides the apparent difference of offline and online policy learning problems, the notable difference of NeuraLCB from NeuralUCB is that while NeuralUCB trains a new neural network from scratch at each iteration (for multiple epochs), NeuraLCB trains a single neural network completely online. That is, NeuraLCB updates the neural network in light of the data at a current iteration from the trained network of the previous iteration. Such optimization scheme in NeuraLCB greatly reduces the computational complexity from $\mathcal{O}(n^2)$ \footnote{In practice, at each time step $t$, NeuralUCB trains a neural network for $t$ epochs using gradient descent in the entire data collected up to time $t$.} to $O(n)$, while still guaranteeing a provably efficient algorithm. Moreover, NeuraLCB achieves a suboptimality bound with a better dependence on the effective dimension than NeuralUCB.  

\section{Experiments}
\label{section:experiment}

In this section, we evaluate NeuraLCB and compare it with five representative baselines: (1) LinLCB \citep{DBLP:journals/corr/abs-2012-15085}, which also uses LCB but relies on linear models, (2) KernLCB, which approximates the reward using functions in a RKHS and is an offline counterpart of KernelUCB \citep{valko2013finite}, (3) NeuralLinLCB, which is the same as LinLCB except that it uses $\phi(\bm{x}_a) = \vect(\nabla f_{\bm{W}^{(0)}}(\bm{x}_a)) $ as the feature extractor for the linear model where $f_{\bm{W}^{(0)}}$ is the same neural network of NeuraLCB at initialization, (4) NeuralLinGreedy, which is the same as NeuralLinLCB except that it relies on the empirical estimate of the reward function for decision-making, and (5) NeuralGreedy, which is the same as NeuraLCB except that NeuralGreedy makes decision based on the empirical estimate of the reward. For more details and completeness, we present the pseudo-code of these baseline methods in Section \ref{section:algo_baseline}. Here, we compare the algorithms by their generalization ability via their expected sub-optimality. For each algorithm, we vary the number of (offline) samples $n$ from $1$ to $T$ where $T$ will be specified in each dataset, and repeat each experiment for $10$ times. We report the mean results and their $95\%$ confidence intervals. \footnote{Our code repos: \url{https://github.com/thanhnguyentang/offline_neural_bandits}.} 

\begin{figure}
\centering
\begin{minipage}[p]{0.32\linewidth}
\centering
\includegraphics[width=\linewidth]{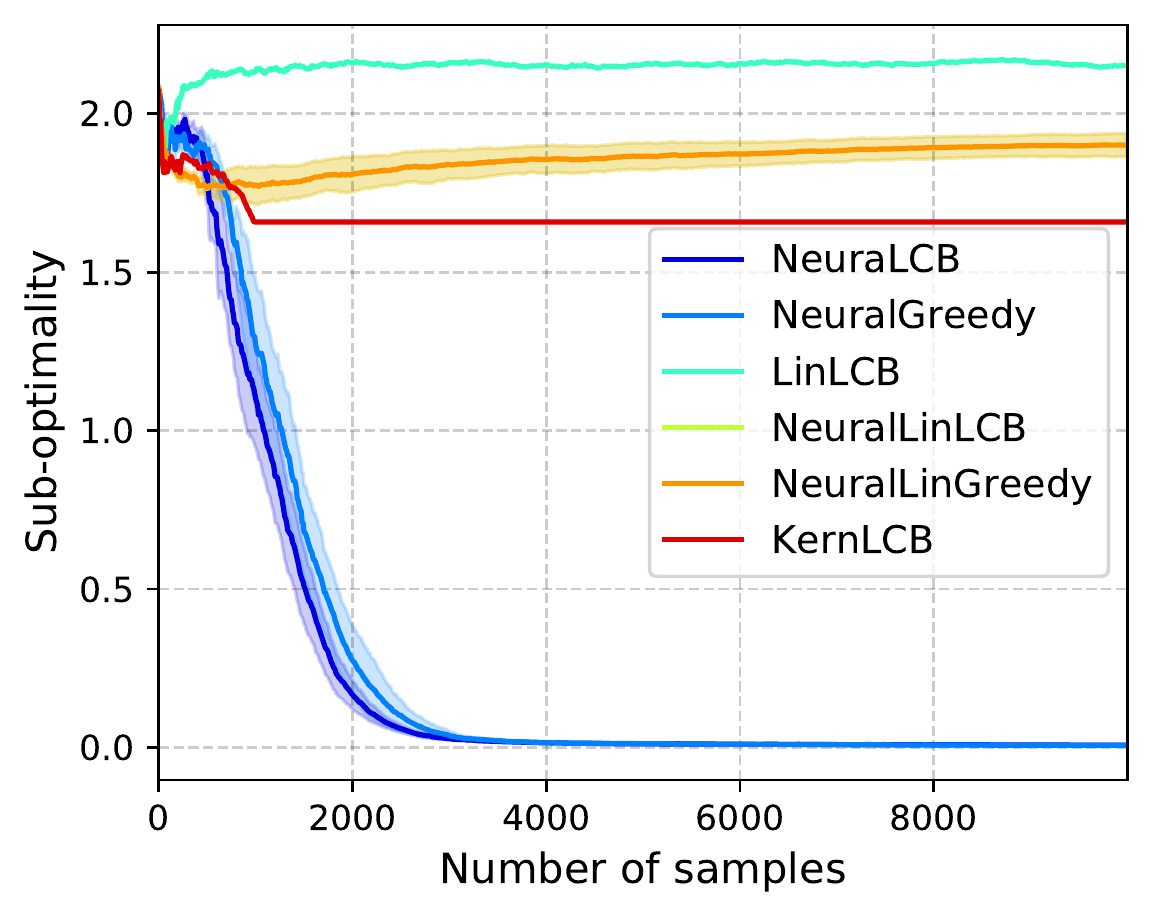}
\subcaption{$h(\bm{u}) = 10 (\bm{a}^T u)^2$} 
\label{fig:first}
\end{minipage}
\begin{minipage}[p]{0.32\linewidth}
\centering
\includegraphics[width=\linewidth]{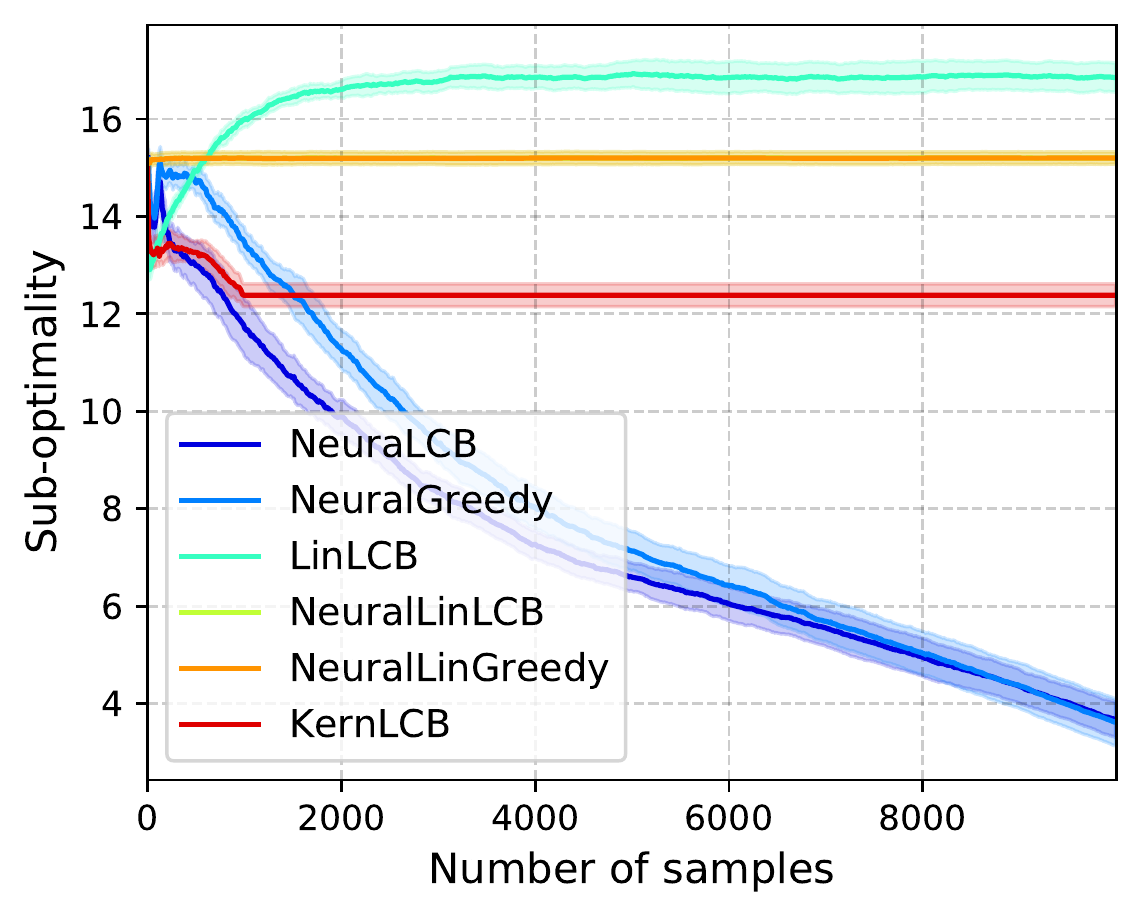}
\subcaption{$h(\bm{u}) = \bm{u}^T \bm{A}^{T} \bm{A} \bm{u}$}
\end{minipage}
\begin{minipage}[p]{0.32\linewidth}
\centering
\includegraphics[width=\linewidth]{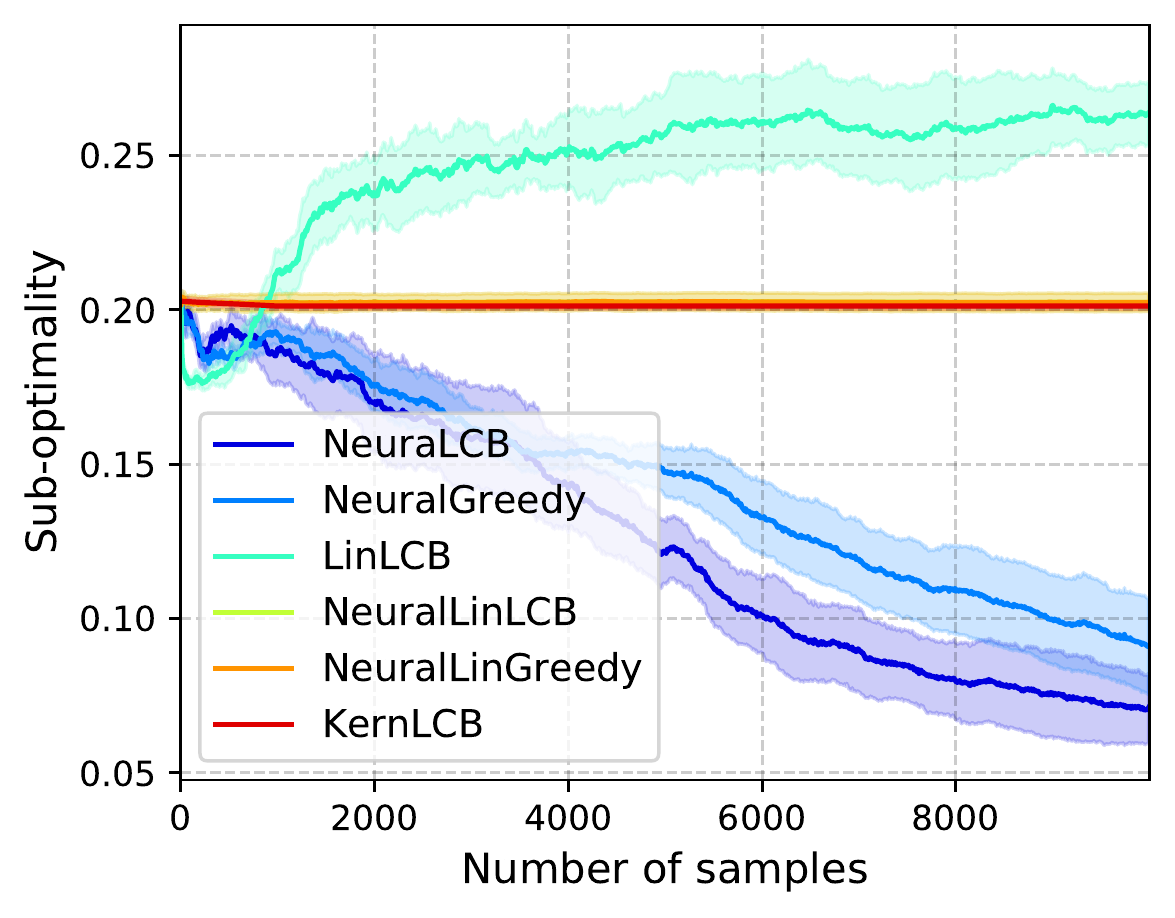}
\subcaption{$h(\bm{u}) = cos(3 \bm{a}^T \bm{u})$}
\end{minipage}
\caption{The sub-optimality of NeuraLCB versus the baseline algorithms on synthetic datasets.}
\label{fig:syn}
\end{figure}

\textbf{Approximation}. To accelerate computation, we follow \citep{riquelme2018deep,zhou2020neural} to approximate large covariance matrices and expensive kernel methods. Specifically, as NeuraLCB and NeuralLinLCB involve computing a covariance matrix $\bm{\Lambda}_t$ of size $p \times p$ where $p = md + m + m^2 (L-2)$ is the number of the neural network parameters which could be large, we approximate  $\bm{\Lambda}_t$ by its diagonal. Moreover, as KernLCB scales cubically with the number of samples, we use KernLCB fitted on the first $1,000$ samples if the offline data exceeds $1,000$ samples. 

\textbf{Data generation}. We generate offline actions using a fixed $\epsilon$-greedy policy with respect to the true reward function of each considered contextual bandit, where $\epsilon$ is set to $0.1$ for all experiments in this section. In each run, we randomly sample $n_{te}=10,000$ contexts from $\rho$ and use this same test contexts to approximate the expected sub-optimality of each algorithm. 

\textbf{Hyperparameters}. 
We fix $\lambda = 0.1$ for all algorithms. For NeuraLCB, we set $\beta_t = \beta$, and for NeuraLCB, LinLCB, KernLCB, and NeuralLinLCB, we do grid search over $\{0.01, 0.05, 0.1, 1, 5, 10\}$ for the uncertainty parameter $\beta$. For KernLCB, we use the radius basis function (RBF) kernel with parameter $\sigma$ and do grid search over $\{0.1, 1, 10\}$ for $\sigma$. For NeuraLCB and NeuralGreedy, we use Adam optimizer \citep{kingma2014adam} with learning rate $\eta$ grid-searched over $\{0.0001, 0.001\}$ and set the $l_2$-regularized parameter to $0.0001$. For NeuraLCB, for each $\mathcal{D}_t$, we use $\hat{\pi}_t$ as its final returned policy instead of averaging over all policies $\{ \hat{\pi}_{\tau}\}_{\tau=1}^t$. Moreover, we grid search NeuraLCB and NeuralGreedy over two training modes, namely $\{\text{S-mode}, \text{B-mode}\}$ where at each iteration $t$, S-mode updates the neural network for one step of SGD (one step of Adam update in practice) on one single data point $(\bm{x}_t, a_t, r_t)$ while B-mode updates the network for $100$ steps of SGD on a random batch of size $50$ of data $\mathcal{D}_t$ (details at Algorithm \ref{alg:neuralcb-sgd-bmode}). We remark that even in the B-mode, NeuraLCB is still more computationally efficient than its online counterpart NeuralUCB \citep{zhou2020neural} as NeuraLCB reuses the neural network parameters from the previous iteration instead of training it from scratch for each new iteration. For NeuralLinLCB, NeuralLinGreedy, NeuraLCB, and NeuralGreedy, we use the same network architecture with $L = 2$ and add layer normalization \citep{ba2016layer} in the hidden layers. The network width $m$ will be specified later based on datasets. 

\subsection{Synthetic Datasets}
For synthetic experiments, we evaluate the algorithms on contextual bandits with the synthetic nonlinear reward functions $h$ used in \citep{zhou2020neural}: 
\begin{align*}
    h_1(\bm{u}) = 10(\bm{u}^T \bm{a})^2, \hspace{8pt} h_2(\bm{u}) &= \bm{u}^T \bm{A}^T \bm{A} \bm{u}, \hspace{8pt} h_3(\bm{u}) = \cos(3 \bm{u}^T \bm{a}),
\end{align*}
where $\bm{a} \in \mathbb{R}^d$ is randomly generated from uniform distribution over the unit sphere, and each entry of $\bm{A} \in \mathbb{R}^{d \times d}$ is independently and randomly generated from $\mathcal{N}(0,1)$. For each reward function $h_i$, $r_t = h_i(\bm{x}_{t,a_t}) + \xi_t$ where $\xi_t \sim \mathcal{N}(0, 0.1)$. The context distribution $\rho$ for three cases is the uniform distribution over the unit sphere. All contextual bandit instances have context dimension $d = 20$ and $K = 30$ actions. Moreover, we choose the network width $m = 20$ and the maximum number of samples $T = 10,000$ for the synthetic datasets.
\subsection{Real-world Datasets}
We evaluate the algorithms on real-world datasets from UCI Machine Learning Repository \citep{UCIData}: \emph{Mushroom}, \emph{Statlog}, and \emph{Adult}, and {\emph{MNIST} \citep{lecun1998gradient}}. They represent a good range of properties: small versus large sizes, dominating actions, and stochastic versus deterministic rewards (see Section \ref{sec:datasets} in the appendix for details on each dataset). 
Besides the \emph{Mushroom} bandit, \emph{Statlog}, \emph{Adult}, and \emph{MNIST} are $K$-class classification datasets, which we convert into $K$-armed contextual bandit problems, following \citep{riquelme2018deep,zhou2020neural}. Specifically, for each input $\bm{x} \in \mathbb{R}^d$ in a $K$-class classification problem, we create $K$ contextual vectors $\bm{x}^1 = (\bm{x}, \bm{0}, \ldots, \bm{0}), \ldots, \bm{x}^{K} = (\bm{0}, \ldots, \bm{0}, \bm{x}) \in \mathbb{R}^{dK}$. The learner receives reward $1$ if it selects context $\bm{x}^y$ where $y$ is the label of $\bm{x}$, and receives reward $0$ otherwise. Moreover, we choose the network width $m = 100$ and the maximum sample number $T = 15,000$ for these datasets. 

\subsection{Results}
Figure \ref{fig:syn} and \ref{fig:realworld} show the expected sub-optimality of all algorithms on synthetic datasets and real-world datasets, respectively. First, due to the non-linearity of the reward functions, methods with linear models (LinLCB, NeuralLinLCB, NeuralLinGreedy, and KernLCB) fail in almost all tasks (except that LinLCB and KernLCB have competitive performance in \emph{Mushroom} and \emph{Adult}, respectively). In particular, linear models using neural network features without training (NeuralLinLCB and NeuralLinGreedy) barely work in any datasets considered here. In contrast, our method NeuraLCB outperforms all the baseline methods in all tasks. We remark that NeuralGreedy has a substantially lower sub-optimality in synthetic datasets (and even a comparable performance with NeuraLCB in $h_1$) than linear models, suggesting the importance of using trainable neural representation in highly non-linear rewards instead of using linear models with fixed feature extractors. On the other hand, our method NeuraLCB outperforms NeuralGreedy in real-world datasets by a large margin (even though two methods are trained exactly the same but different only in decision-making), confirming the effectiveness of pessimism principle in these tasks. Second, KernLCB performs reasonably in certain OPL tasks (\emph{Adult} and slightly well in \emph{Statlog} and \emph{MNIST}), but the cubic computational complexity of kernel methods make it less appealing in OPL with offline data at more than moderate sizes. Note that due to such cubic computational complexity, we follow \citep{riquelme2018deep,zhou2020neural} to learn the kernel in KernLCB for only the first $1,000$ samples and keep the fitted kernel for the rest of the data (which explains the straight line of KernLCB after $n=1,000$ in our experiments). Our method NeuraLCB, on the other hand, is highly computationally efficient as the computation scales only linearly with $n$ (even in the B-mode -- a slight modification of the original algorithm to include batch training). In fact, in the real-world datasets above, the S-mode (which trains in a single data point for one SGD step at each iteration) outperforms the B-mode, further confirming the effectiveness of the online-like nature in NeuraLCB. {In Section \ref{sec:add_expr}, we reported the performance of S-mode and B-mode together and evaluated on dependent offline data.




\begin{figure}
\centering
\begin{minipage}[p]{0.24\linewidth}
\centering
\includegraphics[width=\linewidth]{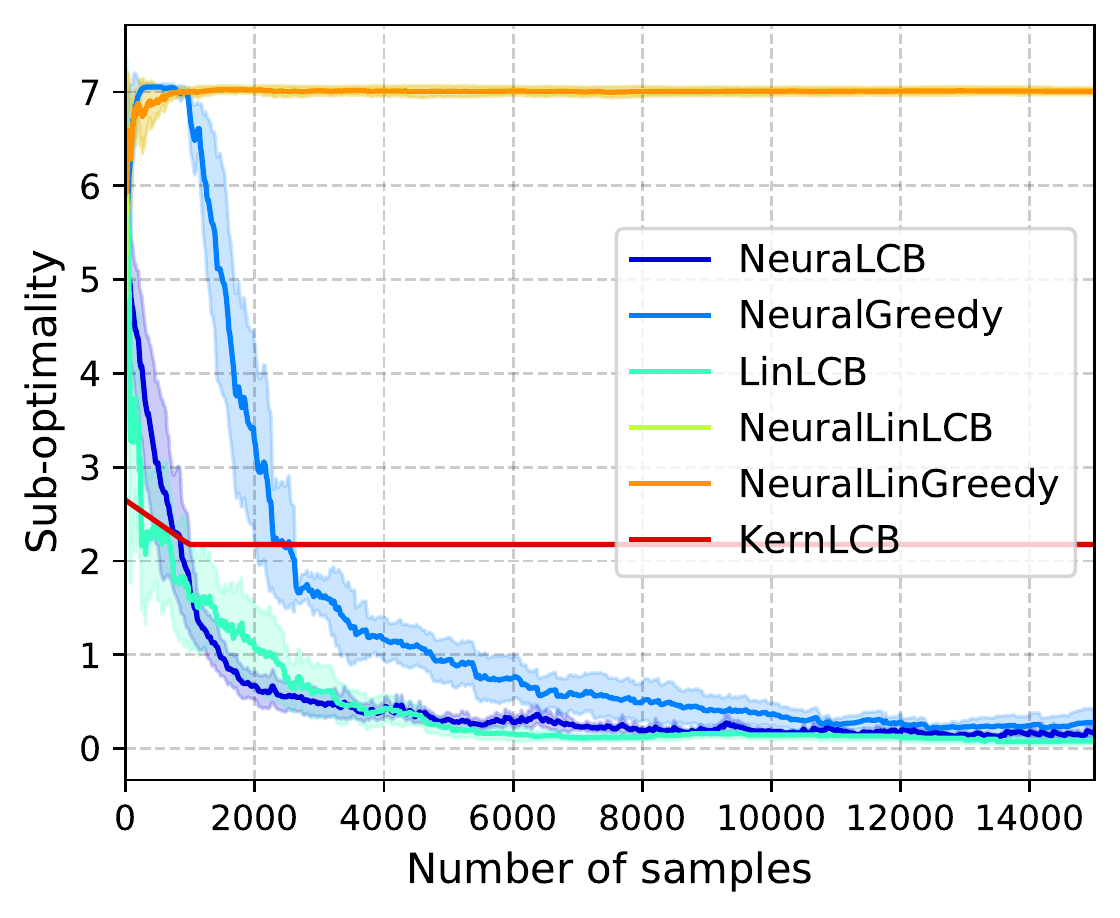}
\subcaption{Mushroom} 
\label{fig:first}
\end{minipage}
\begin{minipage}[p]{0.24\linewidth}
\centering
\includegraphics[width=\linewidth]{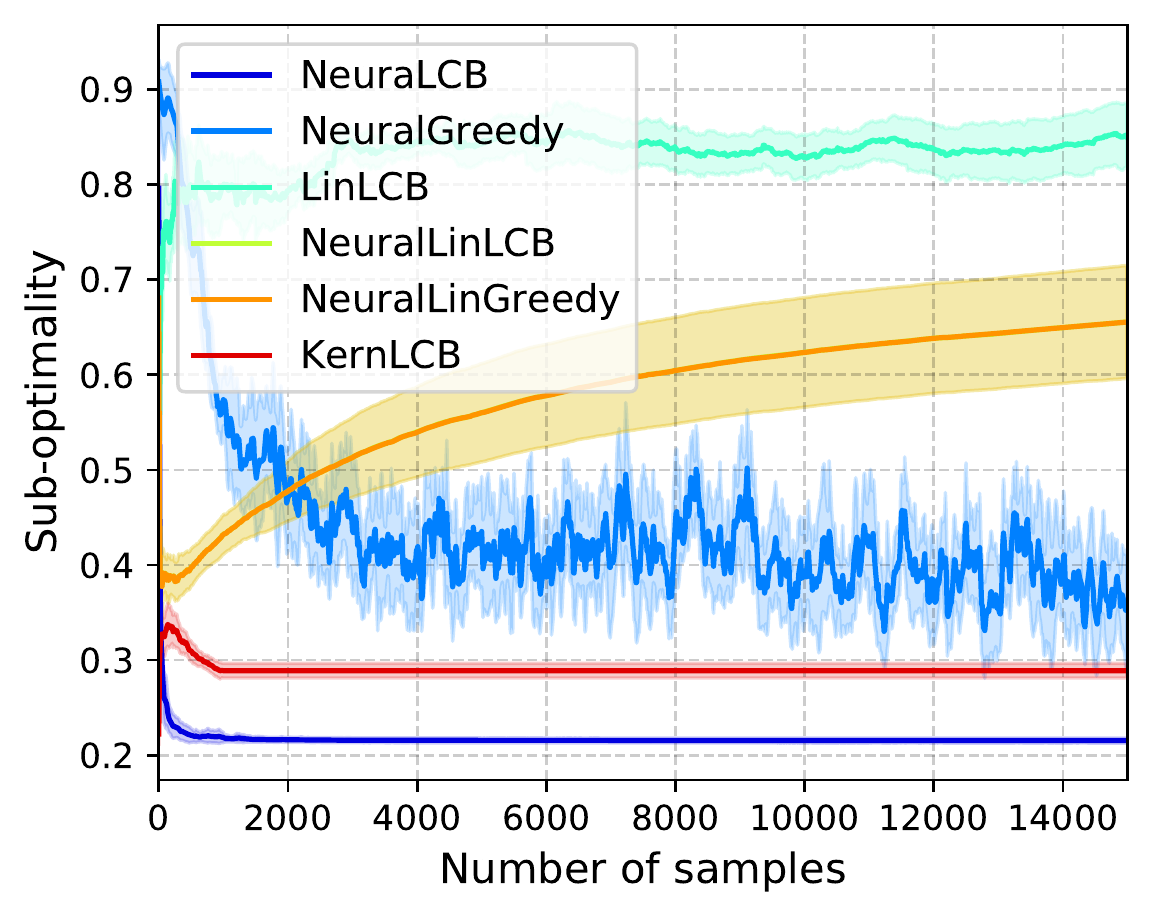}
\subcaption{Statlog}
\end{minipage}
\begin{minipage}[p]{0.24\linewidth}
\centering
\includegraphics[width=\linewidth]{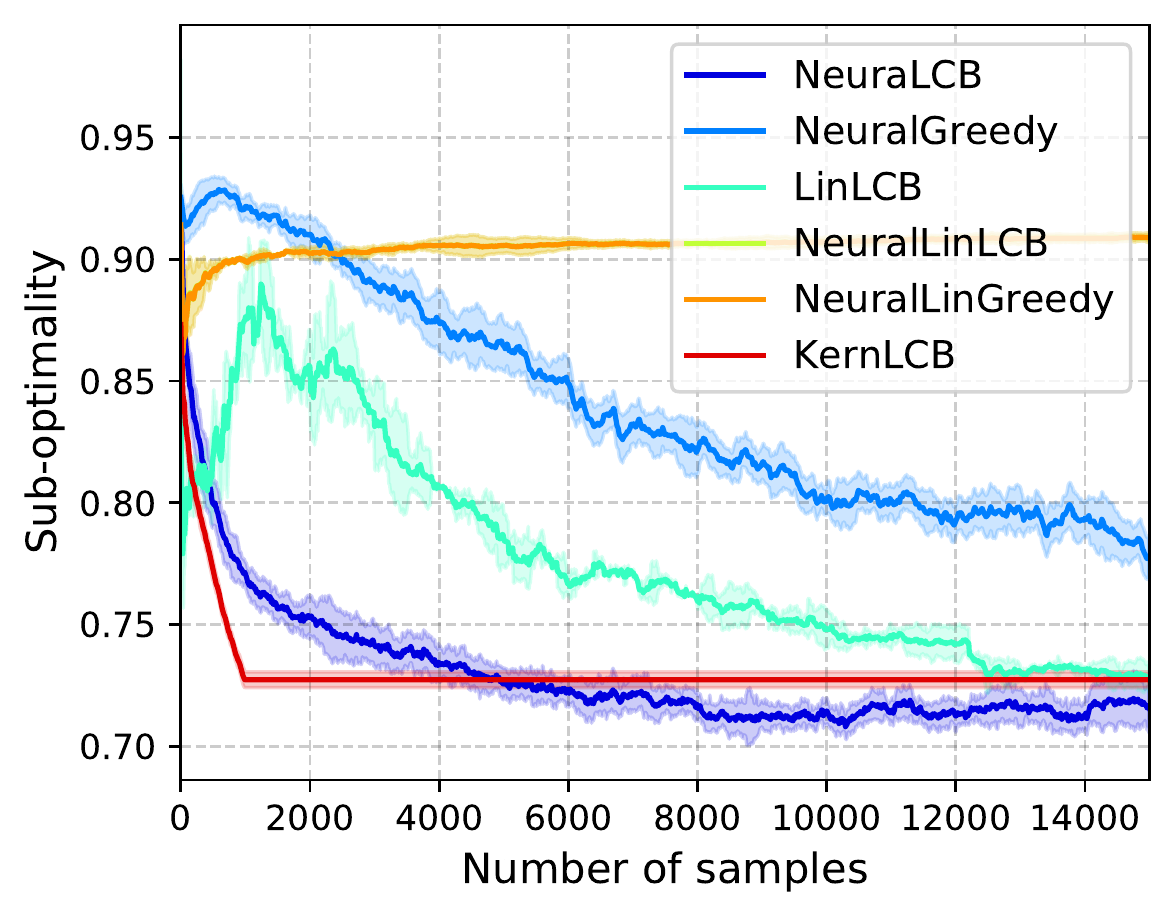}
\subcaption{Adult}
\end{minipage}
\begin{minipage}[p]{0.24\linewidth}
\centering
\includegraphics[width=\linewidth]{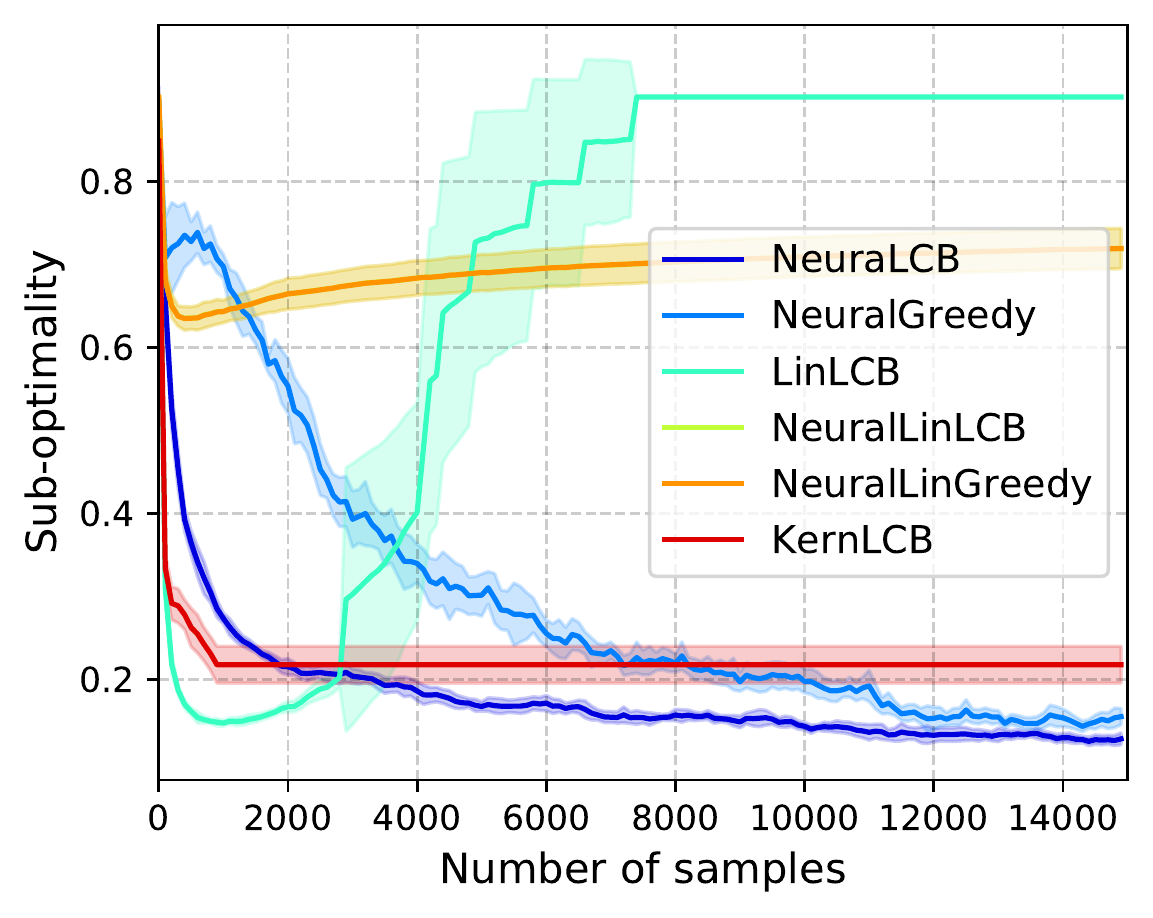}
\subcaption{MNIST}
\end{minipage}
\caption{The sub-optimality of NeuraLCB versus the baseline algorithms on real-world datasets.}
\label{fig:realworld}
\end{figure}
\section{Conclusion}
In this paper, we proposed NeuraLCB, a new algorithm for offline contextual bandits with neural network function approximation and pessimism principle. We proved that our algorithm achieves a sub-optimality bound of $\tilde{\mathcal{O}} (\kappa \cdot  \sqrt{\tilde{d}} \cdot n^{-1/2} )$ under our new data coverage condition that is milder than the existing ones. This is also the first provably efficient result of OPL with neural network function approximation that considers practical optimization. The promising empirical results in both synthetic and real-world datasets suggested the effectiveness of our algorithm in practice.


\section{Acknowledgement}
This research was partially funded by the Australian Government through the Australian Research Council (ARC). Prof. Venkatesh is the recipient of an ARC Australian Laureate Fellowship (FL170100006).
\bibliographystyle{iclr2022_conference}
\bibliography{main}
\newpage

\renewcommand{\thesection}{A}
\section{Proof of Theorem \ref{main_theorem}}
\label{section:main_proof}
In this section, we provide the proof of Theorem \ref{main_theorem}.

Let $\mathcal{D}_t = \{(\bm{x}_{\tau}, a_{\tau}, r_{\tau})\}_{1 \leq \tau \leq t}$. Note that $\hat{\pi}_t$ returned by Algorithm \ref{alg:neuralcb-sgd} is $\mathcal{D}_{t-1}$-measurable. Denote $\mathbb{E}_t[\cdot] = \mathbb{E}[\cdot | \mathcal{D}_{t-1}, \bm{x}_t]$. Let the step sizes $\{\eta_t\}$ defined as in Theorem \ref{main_theorem} and the confidence trade-off parameters $\{\beta_t\}$ defined as in Algorithm \ref{alg:neuralcb-sgd}, we present main lemmas below which will culminate into the proof of the main theorem. 

\begin{lem}
There exists absolute constants $C_1, C_2 > 0$ such that for any $\delta \in (0,1)$, if $m$ satisfies 
\begin{align*}
    m \geq \max \bigg \{ \Theta(n \lambda^{-1} L^{11} \log^6 m), \Theta(L^6 n^4 K^4 \lambda_0^{-4} \log(KL(5n+1)/\delta) )\\
    \Theta( L^{-1} \lambda^{1/2} (\log^{3/2}( nKL^2 (5n + 1)/ \delta) \lor \log^{-3/2} m ))
     \bigg \},
\end{align*}
then with probability at least $1 - \delta$, it holds uniformly over all $t \in [n]$ that 
\begin{align*}
    &\subopt(\hat{\pi}_t; \bm{x}_t) \leq  2\beta_{t-1} \mathbb{E}_{a^*_t \sim \pi^*(\cdot|\bm{x}_t)} \left[ \| \nabla f_{\bm{W}^{(t-1)}}(\bm{x}_{t,a^*_t} ) \cdot m^{-1/2} \|_{ \bm{\Lambda}^{-1}_{t-1} }  | \mathcal{D}_{t-1}, \bm{x}_t \right] \\ 
    &+ 2 C_1 t^{2/3} m^{-1/6} \log^{1/2} m L^{7/3} \lambda^{-1/2} + 2 \sqrt{2} C_2 \sqrt{nK} \lambda_0 ^{-1/2} m^{-11/6} \log^{1/2} m t^{1/6} L^{10/3} \lambda^{-1/6}.  
\end{align*}
\label{lemma:subopt_single_t_bound}
\end{lem}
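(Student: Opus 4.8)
The plan is to run the pessimism argument round-by-round, reduce the sub-optimality at round $t$ to the gap $h-L_t$ evaluated at the optimal action, and then control that gap through (i) the linearization of the online-trained network around its initialization, (ii) the NTK representation of the reward $h$, and (iii) a self-normalized bound encoded in $\beta_{t-1}$.

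\textbf{Step 1 (pessimism decomposition).} First I would use that $\hat\pi_t$ is greedy w.r.t.\ $L_t$: writing $\hat a_t=\argmax_{a\in[K]}L_t(\bm x_{t,a})$,
\begin{align*}
\subopt(\hat\pi_t;\bm x_t)
&=\mathbb{E}_{a^*_t\sim\pi^*(\cdot|\bm x_t)}\!\big[h(\bm x_{t,a^*_t})\big]-h(\bm x_{t,\hat a_t})\\
&=\underbrace{\mathbb{E}_{a^*_t}\!\big[h(\bm x_{t,a^*_t})-L_t(\bm x_{t,a^*_t})\big]}_{(\mathrm I)}
+\underbrace{\mathbb{E}_{a^*_t}\!\big[L_t(\bm x_{t,a^*_t})\big]-L_t(\bm x_{t,\hat a_t})}_{(\mathrm{II})}
+\underbrace{L_t(\bm x_{t,\hat a_t})-h(\bm x_{t,\hat a_t})}_{(\mathrm{III})}.
\end{align*}
Here $(\mathrm{II})\le 0$ because $L_t(\bm x_{t,\hat a_t})=\max_a L_t(\bm x_{t,a})\ge\mathbb{E}_{a^*_t}[L_t(\bm x_{t,a^*_t})]$. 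Hence the lemma reduces to establishing, uniformly over $t\in[n]$ and over every observed context $\bm u\in\{\bm x^{(i)}\}_{i=1}^{nK}$ (which contains $\bm x_{t,a^*_t}$ and $\bm x_{t,\hat a_t}$), the two-sided estimate
\begin{align*}
\big|f_{\bm W^{(t-1)}}(\bm u)-h(\bm u)\big|\;\le\;\beta_{t-1}\big\|\nabla f_{\bm W^{(t-1)}}(\bm u)\cdot m^{-1/2}\big\|_{\bm\Lambda_{t-1}^{-1}}+\epsilon_t,
\end{align*}
with $\epsilon_t:=C_1 t^{2/3}m^{-1/6}\log^{1/2}m\,L^{7/3}\lambda^{-1/2}+\sqrt2\,C_2\sqrt{nK}\,\lambda_0^{-1/2}m^{-11/6}\log^{1/2}m\,t^{1/6}L^{10/3}\lambda^{-1/6}$. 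Indeed this gives $(\mathrm{III})=f_{\bm W^{(t-1)}}(\bm x_{t,\hat a_t})-h(\bm x_{t,\hat a_t})-\beta_{t-1}\|\nabla f_{\bm W^{(t-1)}}(\bm x_{t,\hat a_t})m^{-1/2}\|_{\bm\Lambda_{t-1}^{-1}}\le\epsilon_t$ and $(\mathrm I)\le 2\beta_{t-1}\,\mathbb{E}_{a^*_t}\big[\|\nabla f_{\bm W^{(t-1)}}(\bm x_{t,a^*_t})m^{-1/2}\|_{\bm\Lambda_{t-1}^{-1}}\big]+\epsilon_t$, and adding yields the claim with total error $2\epsilon_t$, which is exactly the pair of stated terms.

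\textbf{Step 2 (uniform deviation bound).} To prove the displayed estimate I would assemble three ingredients in the NTK regime: (a) the NTK representation of the reward --- there is $\bm W^*$ with $\|\bm W^*-\bm W^{(0)}\|_F\le\sqrt{2\,\bm h^{T}\bm H^{-1}\bm h}\le\sqrt{2nK/\lambda_0}$ (using $\bm h=(h(\bm x^{(i)}))_{i=1}^{nK}$, $h\in[0,1]$ so $\|\bm h\|_2^2\le nK$, and $\bm H\succeq\lambda_0\bm I$ from Assumption~\ref{assumption:ntk_input_data}), up to $m$-dependent factors, such that $h(\bm x^{(i)})\approx\langle\nabla f_{\bm W^{(0)}}(\bm x^{(i)}),\bm W^*-\bm W^{(0)}\rangle$ for all $i$; (b) overparameterization lemmas controlling the linearization error $|f_{\bm W}(\bm u)-\langle\nabla f_{\bm W^{(0)}}(\bm u),\bm W-\bm W^{(0)}\rangle|$ (recall $f_{\bm W^{(0)}}(\bm u)=0$ on observed contexts under Assumption~\ref{assumption:ntk_input_data} and the initialization of Algorithm~\ref{alg:neuralcb-sgd}) and the gradient perturbation $\|\nabla f_{\bm W}(\bm u)-\nabla f_{\bm W^{(0)}}(\bm u)\|_2$, valid on a small ball around $\bm W^{(0)}$; (c) the SGD trajectory analysis showing that $\bm W^{(\tau)}$ remains in that ball for all $\tau\le n$ and that $\|\bm W^{(t-1)}-\bm W^*\|_{\bm\Lambda_{t-1}}$ is controlled by $\beta_{t-1}$ up to an SGD-versus-ridge error --- this is where the step sizes $\eta_t=\iota/\sqrt t$ enter and where regressing directly toward $\bm W^*$ (rather than toward the empirical risk minimizer, as in the online counterpart) is exploited. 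Then I decompose
\begin{align*}
f_{\bm W^{(t-1)}}(\bm u)-h(\bm u)=\big[f_{\bm W^{(t-1)}}(\bm u)-\langle\nabla f_{\bm W^{(0)}}(\bm u),\bm W^{(t-1)}-\bm W^{(0)}\rangle\big]+\langle\nabla f_{\bm W^{(0)}}(\bm u),\bm W^{(t-1)}-\bm W^*\rangle + \text{(error from (a))},
\end{align*}
bound the first bracket by (b), bound the inner product by Cauchy--Schwarz, $\langle a,b\rangle\le\|a\|_{\bm\Lambda_{t-1}^{-1}}\|b\|_{\bm\Lambda_{t-1}}$ (with the $m^{-1/2}$ normalization of Algorithm~\ref{alg:neuralcb-sgd}), so that it is at most $\|\nabla f_{\bm W^{(0)}}(\bm u)m^{-1/2}\|_{\bm\Lambda_{t-1}^{-1}}$ times the factor $\|\bm W^{(t-1)}-\bm W^*\|_{\bm\Lambda_{t-1}}$ that (c) controls by $\beta_{t-1}$; and finally replace $\nabla f_{\bm W^{(0)}}(\bm u)$ by $\nabla f_{\bm W^{(t-1)}}(\bm u)$ via (b), which costs an error proportional to $\|\bm W^*-\bm W^{(0)}\|_F$ of order $\sqrt{nK/\lambda_0}$ and produces the second term of $\epsilon_t$, whereas (b)/(c) produce the first. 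A union bound over $t\in[n]$ over the high-probability events from (a)--(c) and collecting the resulting constraints produces the stated lower bounds on $m$.

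\textbf{Main obstacle.} The hard part is ingredient (c) together with the final gradient swap: one must keep the SGD iterates inside the NTK linearization ball over all $t\le n$ rounds \emph{while} measuring their distance to a target $\bm W^*$ whose radius grows like $\sqrt{nK/\lambda_0}$. It is precisely this tension --- staying linearized versus reaching a far target, over a growing number of rounds --- that generates the atypical exponents $t^{2/3}m^{-1/6}$ and $\sqrt{nK}\,\lambda_0^{-1/2}t^{1/6}m^{-11/6}$ in $\epsilon_t$ and forces the width requirement $m\gtrsim n^4K^4\lambda_0^{-4}(\cdots)$. The remaining pieces --- the pessimism decomposition, Cauchy--Schwarz, and the union bound --- are routine.
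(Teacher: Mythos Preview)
Your proposal is correct and follows essentially the same route as the paper: the pessimism decomposition (your $(\mathrm I)+(\mathrm{II})+(\mathrm{III})$ is a repackaging of the paper's $U_t/L_t/\bar U_t/\bar L_t$ machinery), the NTK representation of $h$ (your (a) is Lemma~\ref{lemma:target_function_as_linear}), the linearization and gradient-perturbation bounds (your (b) is Lemmas~\ref{lemma:grad_difference_away_from_init} and~\ref{lemma:nn_almost_linear_near_init_in_training_data}), and the SGD-stays-near-initialization result (Lemma~\ref{lemma:W_t_away_from_init}).

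One clarification on your ingredient (c), where you over-complicate the actual argument: the paper obtains $\|\bm W^{(t-1)}-\bm W^*\|_{\bm\Lambda_{t-1}}\le\beta_{t-1}$ \emph{not} via any SGD-convergence-toward-$\bm W^*$ analysis, but by the crude chain
\[
\|\bm W^*-\bm W^{(t-1)}\|_{\bm\Lambda_{t-1}}\le\big(\|\bm W^*-\bm W^{(0)}\|_F+\|\bm W^{(t-1)}-\bm W^{(0)}\|_F\big)\sqrt{\|\bm\Lambda_{t-1}\|_2},
\]
and $\beta_{t-1}$ is then \emph{defined} to equal this product. There is no ``SGD-versus-ridge error'' to track; the step sizes enter only through Lemma~\ref{lemma:W_t_away_from_init} keeping $\|\bm W^{(t-1)}-\bm W^{(0)}\|_F\le\sqrt{(t-1)/(m\lambda)}$. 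The remark in Section~\ref{section:analysis} about ``regressing directly toward $\bm W^*$'' refers to this \emph{choice} of $\beta_t$ (distance to $\bm W^*$ rather than to an ERM), not to an optimization argument. A second minor difference in ordering: the paper swaps $\nabla f_{\bm W^{(0)}}\to\nabla f_{\bm W^{(t-1)}}$ \emph{before} applying the confidence ellipsoid, so the $\sqrt{nK}\,\lambda_0^{-1/2}$ factor appears directly as $\|\bm W^*-\bm W^{(0)}\|_F$ multiplying the gradient-perturbation bound of Lemma~\ref{lemma:grad_difference_away_from_init}; your order (swap after Cauchy--Schwarz) also works but routes this factor through $\beta_{t-1}$ instead.
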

Lemma \ref{lemma:subopt_single_t_bound} gives an upper bound on the sub-optimality of the returned step-dependent policy $\hat{\pi}_t$ on the observed context $\bm{x}_t$ for each $t \in [n]$. We remark that the upper bound depends on the rate at which the confidence width of the NTK feature vectors shrink along the direction of only the optimal actions, rather than any other actions. This is an advantage of pessimism where it does not require the offline data to be informative about any sub-optimal actions.  However, the upper bound depends on the unknown optimal policy $\pi^*$ while the offline data has been generated a priori by a different unknown behaviour policy. This distribution mismatch is handled in the next lemma. 
\begin{lem}
There exists an absolute constant $C_3 > 0 $ such that for any $\delta \in (0,1)$, if $m$ and $\lambda$ satisfy 
\begin{align*}
    \lambda \geq \max \{1, \Theta(L) \}, \hspace{0.5cm}  m \geq \max \bigg \{ \Theta( n \lambda^{-1} L^{11} \log^6 m), \Theta( L^6 n^4 K^4 \lambda_0^{-4} \log(nKL(5n+2)/\delta)) \\
    \Theta( L^{-1} \lambda^{1/2} (\log^{3/2}(nKL^2 (5n+2) / \delta) \lor \log^{-3/2} m ))
     \bigg \}, 
\end{align*}
then with probability at least $1 - \delta$, we have
\begin{align*}
    &\frac{1}{n} \sum_{t=1}^n \beta_{t-1} \mathbb{E}_{a^*_t \sim \pi^*(\cdot|\bm{x}_t)} \left[ \| \nabla f_{\bm{W}^{(t-1)}}(\bm{x}_{t, a^*_t} ) \cdot m^{-1/2} \|_{ \bm{\Lambda}^{-1}_{t-1} }  | \mathcal{D}_{t-1}, \bm{x}_t \right] \\
    &\leq \frac{\sqrt{2} \beta_n \kappa}{\sqrt{n}} \sqrt{ \tilde{d} \log(1 + nK/\lambda)  + 1 + 2 C_2 C_3^2   n^{3/2} m^{-1/6} (\log m)^{1/2} L^{23/6} \lambda^{-1/6}} \\
    &+ \beta_n \kappa ( C_3 / \sqrt{2}) L^{1/2} \lambda_0^{-1/2} \log^{1/2}((5n + 2)/\delta),
\end{align*}
where $C_2$ is from Lemma \ref{lemma:subopt_single_t_bound}. 
\label{lemma:bound_sum}
\end{lem}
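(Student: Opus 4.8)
The plan is to bound the average on the left by successively applying four ingredients: a distribution-change step based on the eSPC condition, a martingale decomposition that trades each conditional expectation for a realized value, an elliptical-potential (log-determinant) bound on the realized confidence widths, and an NTK approximation that rewrites that log-determinant in terms of the effective dimension $\tilde d$ of $\bm H$. Write $g_t(\bm u) := \|\nabla f_{\bm W^{(t-1)}}(\bm u)\cdot m^{-1/2}\|_{\bm\Lambda_{t-1}^{-1}}\ge 0$. First I would fix $t$ and note that $g_t$ and $\pi^*$ depend only on $(\mathcal D_{t-1},\bm x_t)$, so the inner expectation equals $\sum_a \pi^*(a|\bm x_t)\,g_t(\bm x_{t,a})$; since $g_t\ge 0$, the eSPC bound $\pi^*(a|\bm x_t)\le\kappa\,\mu(a|\mathcal D_{t-1},\bm x_t)$ from Assumption \ref{assumption:distributional_shift} turns this into $\kappa\,\mathbb{E}[g_t(\bm x_{t,a_t})\mid\mathcal D_{t-1},\bm x_t]$. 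Using monotonicity of $\{\beta_t\}$ to replace $\beta_{t-1}$ by $\beta_n$, the left-hand side is at most $\tfrac{\kappa\beta_n}{n}\sum_{t=1}^n\mathbb{E}[g_t(\bm x_{t,a_t})\mid\mathcal D_{t-1},\bm x_t]$.

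\textbf{Step 2 (martingale decomposition).} Next I would write $\mathbb{E}[g_t(\bm x_{t,a_t})\mid\mathcal D_{t-1},\bm x_t]=g_t(\bm x_{t,a_t})+\zeta_t$. Because $\bm x_t$ is independent of $\mathcal D_{t-1}$ (Assumption \ref{assumption:distributional_shift}, first part) and $g_t,\mu$ are $\sigma(\mathcal D_{t-1},\bm x_t)$-measurable, $\{\zeta_t\}$ is a martingale-difference sequence for the filtration $\{\sigma(\mathcal D_t)\}$, and the quantities $g_t(\bm u)$ are uniformly bounded using the near-initialization gradient control underlying Lemma \ref{lemma:subopt_single_t_bound}. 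An Azuma--Hoeffding bound on $\sum_t\zeta_t$ then contributes the additive term $\beta_n\kappa(C_3/\sqrt2)L^{1/2}\lambda_0^{-1/2}\log^{1/2}((5n+2)/\delta)$ on the right-hand side.

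\textbf{Step 3 (elliptical potential and NTK approximation).} For the realized part I would apply Cauchy--Schwarz, $\sum_t g_t(\bm x_{t,a_t})\le\sqrt n\,\big(\sum_t g_t(\bm x_{t,a_t})^2\big)^{1/2}$, and then the elliptical-potential lemma: writing $\phi_t=\mathrm{vec}(\nabla f_{\bm W^{(t-1)}}(\bm x_{t,a_t}))m^{-1/2}$ so that $\bm\Lambda_t=\bm\Lambda_{t-1}+\phi_t\phi_t^\top$, the requirement $\lambda\ge\Theta(L)$ forces $g_t(\bm x_{t,a_t})^2=\|\phi_t\|_{\bm\Lambda_{t-1}^{-1}}^2\le 1$, so $\sum_t g_t(\bm x_{t,a_t})^2\le 2\log(\det\bm\Lambda_n/\det\bm\Lambda_0)=2\log\det(\bm I+\tfrac1\lambda\Phi^\top\Phi)$ with $\Phi=[\phi_1,\dots,\phi_n]$. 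I would then replace the trained gradients by the initialization gradients and then by the NTK kernel, obtaining $\Phi^\top\Phi\preceq\bm H_{\mathcal S}+(\mathrm{err})\,\bm I$, where $\bm H_{\mathcal S}$ is the principal submatrix of $\bm H$ on the observed context--action pairs; this step reuses the perturbation estimates behind Lemma \ref{lemma:subopt_single_t_bound} (hence the same $C_2$) and is the source of the term $2C_2C_3^2 n^{3/2}m^{-1/6}(\log m)^{1/2}L^{23/6}\lambda^{-1/6}$ and the additive $1$ inside the square root. Finally $\log\det(\bm I+\tfrac1\lambda\bm H_{\mathcal S})\le\log\det(\bm I+\tfrac1\lambda\bm H)=\tilde d\log(1+nK/\lambda)$, by monotonicity of $\bm M\mapsto\log\det(\bm I+\bm M/\lambda)$ under passing to principal submatrices (a Schur-complement argument), and collecting the resulting realized contribution (which is $\tfrac{\sqrt2\,\kappa\beta_n}{\sqrt n}$ times $(\tilde d\log(1+nK/\lambda)+1+\mathrm{err})^{1/2}$) with the martingale term from Step 2 gives exactly the claimed bound under the stated hypotheses, which are those of Lemma \ref{lemma:subopt_single_t_bound} together with $\lambda\ge\max\{1,\Theta(L)\}$.

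\textbf{Main obstacle.} The delicate part is the NTK approximation in Step 3: one must simultaneously control the drift of $\nabla f_{\bm W^{(t-1)}}$ away from $\nabla f_{\bm W^{(0)}}$ accumulated over the $n$ online SGD updates --- which requires the iterates to remain in a ball $\mathcal B(\bm W^{(0)},R)$ of controlled radius throughout training --- and the NTK concentration $\langle\nabla f_{\bm W^{(0)}}(\bm x^{(i)}),\nabla f_{\bm W^{(0)}}(\bm x^{(j)})\rangle/m\approx\bm H_{ij}$, and then show both pass through $\log\det(\cdot)$ as merely additive, $m^{-\Omega(1)}$-small corrections with explicit $n,K,L$ dependence, so that $m\ge\mathrm{poly}(n,L,K,\lambda^{-1},\lambda_0^{-1},\log(1/\delta))$ suffices. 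A minor technicality is checking the independence/measurability conditions that make $\{\zeta_t\}$ a genuine martingale-difference sequence, and verifying $g_t(\bm x_{t,a_t})^2\le1$ so that the elliptical-potential lemma applies with the constant $2$.
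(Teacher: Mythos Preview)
Your proposal is correct and follows essentially the same route as the paper: (i) change of measure from $\pi^*$ to $\mu$ via the eSPC assumption; (ii) a concentration step to replace the conditional expectation under $\mu$ by the realized $g_t(\bm x_{t,a_t})$; (iii) Cauchy--Schwarz plus the elliptical-potential lemma to bound $\sum_t g_t(\bm x_{t,a_t})^2$ by $2\log\det(\bm\Lambda_n/\lambda\bm I)$; (iv) a perturbation argument passing from the trained gradients to the initialization gradients and then to the NTK, yielding the $\tilde d\log(1+nK/\lambda)$ term plus the $m^{-1/6}$ correction. The paper packages (iii)--(iv) into a separate lemma (their Lemma~\ref{lemma:bound_confidence_direction}), but the content matches what you outline, including the use of the gradient bound $\|\nabla f_{\bm W^{(t-1)}}(\bm x^{(i)})\|_F\le C_3\sqrt m$ and the log-det perturbation via the gradient-drift estimate.

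The one place your argument differs is step (ii): the paper applies Hoeffding's inequality \emph{per time step} $t$ and then union bounds over $t\in[n]$, obtaining a pointwise inequality $\mathbb E_{a\sim\mu}[g_t(\bm x_{t,a})\mid\mathcal D_{t-1},\bm x_t]\le g_t(\bm x_{t,a_t})+\text{(error)}$ simultaneously for all $t$; you instead apply Azuma--Hoeffding directly to $\sum_t\zeta_t$. Both are valid. Your version is a bit cleaner and in fact would give an extra $n^{-1/2}$ factor on the martingale error term (which you are free to drop to match the stated bound). As a side remark, for the log-det perturbation the paper uses concavity of $\log\det$ rather than the PSD ordering $\Phi^\top\Phi\preceq\bm H_{\mathcal S}+(\mathrm{err})\bm I$ you propose; either device gives the same additive $m^{-1/6}$-type correction.
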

We also remark that the upper bound in Lemma \ref{lemma:bound_sum} scales linearly with $\sqrt{\tilde{d}}$ instead of with $\sqrt{p}$ if a standard analysis were applied. This avoids a vacuous bound as $p$ is large with respect to $n$.

The upper bounds in Lemma \ref{lemma:subopt_single_t_bound} and Lemma \ref{lemma:bound_sum} are established for the observed contexts only. The next lemma generalizes these bounds to the entire context distribution, thanks to the online-like nature of Algorithm \ref{alg:neuralcb-sgd} and an online-to-batch argument. In particular, a key technical property of Algorithm \ref{alg:neuralcb-sgd} that makes this generalization possible without a uniform convergence is that  $\hat{\pi}_t$ is $\mathcal{D}_{t-1}$-measurable and independent of $(\bm{x}_t, a_t, r_t)$. 



\begin{lem}
For any $\delta \in (0,1)$, with probability at least $1-\delta$ over the randomness of $\mathcal{D}_n$, we have
     $\mathbb{E} \left[ \subopt(\hat{\pi}) \right] \leq \frac{1}{n} \sum_{t=1}^n \subopt(\hat{\pi}_t; \bm{x}_t) + \sqrt{ \frac{2}{n} \log(1/\delta) }. $
\label{lemma:online-to-batch}
\end{lem}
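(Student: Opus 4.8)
The plan is the standard online-to-batch conversion, exploiting the property highlighted just before the lemma: $\hat{\pi}_t$ is $\mathcal{D}_{t-1}$-measurable and independent of $(\bm{x}_t,a_t,r_t)$. First I would unpack the definition of the returned policy: since $\hat{\pi}$ is drawn uniformly from the ensemble $\{\hat{\pi}_1,\dots,\hat{\pi}_n\}$ (and this draw is the only randomness left after conditioning on $\mathcal{D}_n$ and $\bm{W}^{(0)}$), we have
\begin{align*}
\mathbb{E}\!\left[\subopt(\hat{\pi})\right] = \frac{1}{n}\sum_{t=1}^n \subopt(\hat{\pi}_t), \qquad \text{where } \subopt(\hat{\pi}_t) = \mathbb{E}_{\bm{x}\sim\rho}\!\left[v^*(\bm{x}) - v^{\hat{\pi}_t}(\bm{x})\right].
\end{align*}
The two structural facts I will use are: (i) $\hat{\pi}_t$, being built solely from $\bm{W}^{(t-1)}$ and $\bm{\Lambda}_{t-1}$, is a (deterministic) function of $\bm{W}^{(0)}$ and $\mathcal{D}_{t-1}$; and (ii) by the first part of Assumption \ref{assumption:distributional_shift}, $\bm{x}_t$ is independent of $\mathcal{D}_{t-1}$ and distributed according to $\rho$.

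Next I would introduce the martingale difference sequence $Z_t := \subopt(\hat{\pi}_t) - \subopt(\hat{\pi}_t;\bm{x}_t)$. By (i) and (ii), conditioning on $\mathcal{D}_{t-1}$ turns the expectation over $\bm{x}_t$ into the population sub-optimality of the already-fixed policy $\hat{\pi}_t$, i.e. $\mathbb{E}[\subopt(\hat{\pi}_t;\bm{x}_t)\mid \mathcal{D}_{t-1}] = \subopt(\hat{\pi}_t)$, so $\mathbb{E}[Z_t\mid\mathcal{D}_{t-1}]=0$. Thus $(Z_t)_{t=1}^n$ is a martingale difference sequence adapted to the filtration generated by $\{\mathcal{D}_t\}$. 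Since $h$ takes values in $[0,1]$, we have $v^\pi(\bm{x}),v^*(\bm{x})\in[0,1]$, hence $\subopt(\hat{\pi}_t;\cdot)\in[0,1]$ and $\subopt(\hat{\pi}_t)\in[0,1]$, which gives $|Z_t|\le 1$ almost surely.

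Finally I would invoke the Azuma--Hoeffding inequality for bounded martingale differences: with probability at least $1-\delta$ over $\mathcal{D}_n$, $\sum_{t=1}^n Z_t \le \sqrt{2n\log(1/\delta)}$, equivalently
\begin{align*}
\frac{1}{n}\sum_{t=1}^n \subopt(\hat{\pi}_t) \le \frac{1}{n}\sum_{t=1}^n \subopt(\hat{\pi}_t;\bm{x}_t) + \sqrt{\frac{2}{n}\log(1/\delta)},
\end{align*}
and combining with the first display yields the claim. There is essentially no serious obstacle here — the argument is routine. The only points requiring a little care are the measurability/independence bookkeeping (confirming that $\hat{\pi}_t$ depends only on $\mathcal{D}_{t-1}$, which is exactly the property emphasized in the text, and that $\bm{x}_t\sim\rho$ is independent of it so that the conditional expectation collapses to $\subopt(\hat{\pi}_t)$) and checking that the $|Z_t|\le1$ bound feeds through Azuma to give precisely the constant $\sqrt{(2/n)\log(1/\delta)}$ appearing in the statement.
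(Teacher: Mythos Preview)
Your proposal is correct and follows essentially the same online-to-batch argument as the paper: define $Z_t = \subopt(\hat{\pi}_t) - \subopt(\hat{\pi}_t;\bm{x}_t)$, verify it is a bounded martingale difference sequence using the $\mathcal{D}_{t-1}$-measurability of $\hat{\pi}_t$ and the independence of $\bm{x}_t$, and apply Azuma--Hoeffding. Your write-up is in fact slightly more explicit than the paper's (you spell out why the uniform draw gives $\mathbb{E}[\subopt(\hat{\pi})] = \tfrac{1}{n}\sum_t \subopt(\hat{\pi}_t)$ and why $|Z_t|\le 1$), but the route is identical.
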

We are now ready to prove the main theorem. 
\begin{proof}[Proof of Theorem \ref{main_theorem}]
Combining Lemma \ref{lemma:subopt_single_t_bound}, Lemma \ref{lemma:bound_sum}, and Lemma \ref{lemma:online-to-batch} via the union bound, we have
\begin{align*}
    n \cdot \mathbb{E}[ \subopt(\hat{\pi}) ] &\leq  \kappa \sqrt{n}  \Gamma_1 \sqrt{\tilde{d} \log(1 + nK/\lambda)  + \Gamma_2} +  \kappa \sqrt{n} \Gamma_3 + \Gamma_4 + \Gamma_5 + \sqrt{2 n \log((10n+4)/\delta)}, \\ 
    &\leq  \kappa \sqrt{n}  \sqrt{\tilde{d} \log(1 + nK/\lambda)  + 2} +  \kappa \sqrt{n}  + 2 + \sqrt{2 n \log(10n+4)/\delta)}
\end{align*}
where $m$ is chosen to be sufficiently large as a polynomial of $(n,L, K, \lambda^{-1}, \lambda_0^{-1}, \log(1/\delta))$ such that 
\begin{align*}
    \Gamma_1 &:= 2 \sqrt{2} \sqrt{\lambda + C_3^2 n L} (n^{1/2} \lambda^{1/2} + (nK)^{1/2} \lambda_0^{-1/2}) \cdot m^{-1/2} \leq 1 \\
    \Gamma_2 &:= 1 + 2 C_2 C_3^2  n^{3/2} m^{-1/6} (\log m)^{1/2} L^{23/6} \lambda^{-1/6} \leq 2 \\ 
    \Gamma_3 &:= \Gamma_1 \sqrt{n} (C_3/\sqrt{2}) L^{1/2} \lambda_0^{-1/2} \log^{1/2}((10n + 4)/\delta) \leq 1 \\ 
    \Gamma_4 & := 2 C_1 n^{5/3} m^{-1/6} (\log m)^{1/2} L^{7/3} \lambda^{-1/2} \leq 1 \\ 
    \Gamma_5 &:= 2 \sqrt{2} C_2 (nK)^{1/2} \lambda_0^{-1/2} m^{-11/6} (\log m)^{1/2} n^{7/6} L^{10/3} \lambda^{-1/6} \leq 1.
\end{align*}
\end{proof}

\renewcommand{\thesection}{B}
\section{Proof of Lemmas in Section \ref{section:main_proof}}
\label{section:proof_of_sec6}
\subsection{Proof of Lemma \ref{lemma:subopt_single_t_bound}}

We start with the following lemmas whose proofs are deferred to Appendix B. 

\begin{lem}
Let $\bm{h} = [h(\bm{x}^{(1)}), \ldots, h(\bm{x}^{(nK)})]^T \in \mathbb{R}^{nK}$. There exists $\bm{W}^* \in \mathcal{W}$ such that for any $\delta \in (0,1)$, if $m \geq \Theta(L^6 n^4 K^4 \lambda_0^{-4} \log(nKL/\delta))$, with probability at least $1- \delta$ over the randomness of $\bm{W}^{(0)}$, it holds uniformly for all $ i \in [nK]$ that
\begin{align*}
    \| \bm{W}^* - \bm{W}^{(0)} \|_F \leq \sqrt{2} m^{-1/2} \| \bm{h}\|_{\bm{H}^{-1}},  \\
    \langle \nabla f_{\bm{W}^{(0)}}(\bm{x}^{(i)}), \bm{W}^* - \bm{W}^{(0)} \rangle = h(\bm{x}^{(i)}).
\end{align*}
\label{lemma:target_function_as_linear}
\end{lem}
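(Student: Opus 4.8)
\textbf{Proof plan for Lemma \ref{lemma:target_function_as_linear}.}

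The plan is to exploit the fact that at initialization the neural tangent features $\phi^{(0)}(\bm{u}) := \vect(\nabla f_{\bm{W}^{(0)}}(\bm{u})) \in \mathbb{R}^p$ approximately reproduce the NTK Gram matrix: the $nK \times nK$ matrix with entries $\langle \phi^{(0)}(\bm{x}^{(i)}), \phi^{(0)}(\bm{x}^{(j)})\rangle / m$ is close (in operator norm, with high probability over $\bm{W}^{(0)}$, once $m$ is a sufficiently large polynomial in $n,K,L,\lambda_0^{-1}$) to $\bm{H}$. This is the standard NTK-at-initialization concentration bound (cf. \citet{arora2019exact,cao2019generalization,zhou2020neural}), and I would cite it rather than reprove it. Given this, the idea is to look for $\bm{W}^*$ of the linear form $\vect(\bm{W}^* - \bm{W}^{(0)}) = \Phi_0^T \bm{\alpha} / \sqrt{m}$ where $\Phi_0 \in \mathbb{R}^{nK \times p}$ stacks the rows $\phi^{(0)}(\bm{x}^{(i)})^T / \sqrt{m}$ and $\bm{\alpha} \in \mathbb{R}^{nK}$ is a coefficient vector to be chosen. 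This is the minimum-norm solution in the span of the feature vectors, which is the natural ansatz.

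The key steps, in order, are: (i) invoke the NTK concentration lemma to get $\|\Phi_0 \Phi_0^T - \bm{H}\|_2$ small, hence (using $\bm{H} \succeq \lambda_0 \bm{I}$ from Assumption \ref{assumption:ntk_input_data}) the Gram matrix $\bm{G} := \Phi_0 \Phi_0^T$ is invertible with $\bm{G} \succeq (\lambda_0/2)\bm{I}$; (ii) set $\bm{\alpha} = \bm{G}^{-1} \bm{h}$ so that $\Phi_0 \vect(\bm{W}^* - \bm{W}^{(0)}) = \bm{G}\bm{\alpha} = \bm{h}$, which is exactly the interpolation condition $\langle \nabla f_{\bm{W}^{(0)}}(\bm{x}^{(i)}), \bm{W}^* - \bm{W}^{(0)}\rangle = h(\bm{x}^{(i)})$ for every $i$ (note $f_{\bm{W}^{(0)}}$ is linear in $\bm{W}$ near initialization through its gradient — or rather, one uses that $\langle \nabla f_{\bm{W}^{(0)}}(\bm{u}), \bm{W} - \bm{W}^{(0)}\rangle$ is the linearized model, which at $\bm{x}^{(i)}$ must equal $h$); (iii) bound the norm: $\|\bm{W}^* - \bm{W}^{(0)}\|_F^2 = \bm{\alpha}^T \Phi_0 \Phi_0^T \bm{\alpha} / m$ — wait, more carefully $\|\vect(\bm{W}^*-\bm{W}^{(0)})\|_2^2 = \bm{h}^T \bm{G}^{-1} (\Phi_0 \Phi_0^T) \bm{G}^{-1} \bm{h} / m = \bm{h}^T \bm{G}^{-1} \bm{h}/m$, and since $\bm{G} \approx \bm{H}$ we get $\bm{h}^T \bm{G}^{-1}\bm{h} \le 2 \bm{h}^T \bm{H}^{-1}\bm{h} = 2\|\bm{h}\|_{\bm{H}^{-1}}^2$ once $m$ is large enough, giving $\|\bm{W}^* - \bm{W}^{(0)}\|_F \le \sqrt{2} m^{-1/2}\|\bm{h}\|_{\bm{H}^{-1}}$ as claimed; (iv) check the per-layer membership $\bm{W}^* \in \mathcal{W}$ (the constraint ball $\mathcal{B}(\bm{W}^{(0)}, R)$ for the appropriate $R$) — the Frobenius bound just derived controls $\|\bm{W}^* - \bm{W}^{(0)}\|_F$, which dominates each $\|\bm{W}_l^* - \bm{W}_l^{(0)}\|_F$, so this is immediate given how $\mathcal{W}$ is defined.

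The main obstacle is step (ii)'s subtlety: the relation $\langle \nabla f_{\bm{W}^{(0)}}(\bm{x}^{(i)}), \bm{W}^* - \bm{W}^{(0)}\rangle = h(\bm{x}^{(i)})$ must hold \emph{exactly}, not approximately, yet the feature map only approximately reproduces $\bm{H}$. The resolution is that we do not need the features to reproduce $\bm{H}$ exactly — we solve the linear system $\bm{G}\bm{\alpha} = \bm{h}$ using the \emph{actual} Gram matrix $\bm{G} = \Phi_0\Phi_0^T$, which is solvable exactly because $\bm{G}$ is invertible (this is where $\bm{G} \succeq (\lambda_0/2)\bm{I}$ is used); the NTK approximation is only invoked for the \emph{norm} bound in step (iii), where an approximate relation is fine. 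So the "exact interpolation, approximate norm" split is the crux to get right. A secondary technical point is making sure all the high-probability events (NTK concentration, any auxiliary bounds on $\|\phi^{(0)}(\bm{x}^{(i)})\|$) hold simultaneously with the stated failure probability $\delta$ via a union bound, which forces the stated polynomial lower bound on $m$ with its $\log(nKL/\delta)$ factor.
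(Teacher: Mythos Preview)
Your proposal is correct and follows essentially the same approach as the paper: both construct $\bm{W}^*$ as the minimum-norm interpolant $\vect(\bm{W}^* - \bm{W}^{(0)}) = m^{-1/2}\Phi_0^T(\Phi_0\Phi_0^T)^{-1}\bm{h}$ (the paper writes this via the SVD of $\Phi_0^T$, you via the Gram inverse directly), invoke the NTK concentration lemma to get $\Phi_0\Phi_0^T \succeq \tfrac{1}{2}\bm{H}$, and use this only for the norm bound while the interpolation is exact by construction. One minor bookkeeping slip: with your normalization of $\Phi_0$, the interpolation identity reads $\sqrt{m}\,\Phi_0\vect(\bm{W}^*-\bm{W}^{(0)}) = \bm{h}$ rather than $\Phi_0\vect(\bm{W}^*-\bm{W}^{(0)}) = \bm{h}$, but this does not affect the argument.
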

\begin{rem}
Lemma \ref{lemma:target_function_as_linear} shows that for a sufficiently wide network, there is a linear model that uses the gradient of the neural network at initialization as a feature vector and interpolates the reward function in the training inputs. Moreover, the weights $\bm{W}^*$ of the linear model is in a neighborhood of the initialization $\bm{W}^{(0)}$. Note that we also have
\begin{align*}
    S := \| \bm{h} \|_{\bm{H}^{-1}} \leq \| \bm{h} \|_2 \sqrt{ \| \bm{H}^{-1} \|_2} \leq \sqrt{nK} \lambda_0 ^{-1/2},
\end{align*}
where the second inequality is by Assumption \ref{assumption:ntk_input_data} and Cauchy-Schwartz inequality with $h(\bm{x}) \in [0,1], \forall \bm{x}$. 
\end{rem}

\begin{lem} 
For any $\delta \in (0,1)$, if $m$ satisfies 
\begin{align*}
    m \geq \Theta(n \lambda^{-1} L^{11} \log^6 m ) \lor \Theta( L^{-1} \lambda^{1/2} \log^{3/2}(3n^2KL^2 / \delta)),
\end{align*}
and the step sizes satisfy 
\begin{align*}
    \eta_t = \frac{\iota}{\sqrt{t}} \text{ where } \iota^{-1} = \Omega(n^{2/3} m^{5/6} \lambda^{-1/6} L^{17/6} \log^{1/2} m) \lor \Omega(R m \lambda^{1/2} \log^{1/2}(n / \delta))
\end{align*}
then with probability at least $1 - \delta$ over the randomness of $\bm{W}^{(0)}$ and $\mathcal{D}$, it holds uniformly for all  $t \in [n], l \in [L]$  that 
\begin{align*}
     \|\bm{W}^{(t)}_l - \bm{W}^{(0)}_l \|_F \leq \sqrt{\frac{t}{m\lambda L }}, \text{ and } \| \bm{\Lambda}_t \|_2 \leq \lambda + C_3 ^2 tL,
\end{align*}
where $C_3 > 0$ is an absolute constant from Lemma \ref{lemma:bounded_gradient}. 
\label{lemma:W_t_away_from_init}
\end{lem}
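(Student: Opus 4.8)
I would prove both bounds together by induction on $t$. Fix the radius $R$ up to which the wide-network estimates below are valid (a mild quantity, e.g.\ of order $\sqrt{L}$); the stated polynomial lower bound on $m$ forces $\sqrt{n/(m\lambda)} \le R$, so it suffices to run the induction with hypothesis ``$\|\bm{W}^{(s)}_l - \bm{W}^{(0)}_l\|_F \le \sqrt{s/(m\lambda L)}$ for every $l \in [L]$'' (which aggregates to $\|\bm{W}^{(s)} - \bm{W}^{(0)}\|_F \le \sqrt{s/(m\lambda)} \le R$, i.e.\ $\bm{W}^{(s)} \in \mathcal{B}(\bm{W}^{(0)}, R)$) for all $s \le t$. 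The base case $t=0$ is trivial. For the step, differentiating the regularized loss gives $\nabla_{\bm{W}_l}\mathcal{L}_t(\bm{W}) = (f_{\bm{W}}(\bm{x}_{t,a_t}) - r_t)\,\nabla_{\bm{W}_l} f_{\bm{W}}(\bm{x}_{t,a_t}) + m\lambda(\bm{W}_l - \bm{W}_l^{(0)})$, so the SGD update rearranges layerwise to
\[
\bm{W}^{(t)}_l - \bm{W}^{(0)}_l = (1 - \eta_t m\lambda)\bigl(\bm{W}^{(t-1)}_l - \bm{W}^{(0)}_l\bigr) - \eta_t\bigl(f_{\bm{W}^{(t-1)}}(\bm{x}_{t,a_t}) - r_t\bigr)\,\nabla_{\bm{W}_l} f_{\bm{W}^{(t-1)}}(\bm{x}_{t,a_t});
\]
the step-size choice keeps $1 - \eta_t m\lambda$ in $[0,1)$, so the regularizer term contracts the previous displacement, and by the triangle inequality it remains to bound the second term.

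Three ingredients, all valid on $\mathcal{B}(\bm{W}^{(0)}, R)$ under the stated lower bound on $m$, drive the step. (i) Lemma \ref{lemma:bounded_gradient} gives $\|\nabla_{\bm{W}_l} f_{\bm{W}}(\bm{u})\|_F \le C_3\sqrt{m}$ for each $l$, hence $\|\nabla f_{\bm{W}}(\bm{u})\|_F \le C_3\sqrt{mL}$, for all unit-norm $\bm{u}$. (ii) The standard near-linearity/output estimates for wide ReLU networks \citep{allen2019convergence,cao2019generalization,zhou2020neural}, together with $f_{\bm{W}^{(0)}}(\bm{x}^{(i)}) = 0$ (Assumption \ref{assumption:ntk_input_data} plus the symmetric initialization of Algorithm \ref{alg:neuralcb-sgd}), give $|f_{\bm{W}^{(s)}}(\bm{x}_{s+1,a_{s+1}})| \le C_3\sqrt{mL}\,\|\bm{W}^{(s)} - \bm{W}^{(0)}\|_F + \varepsilon_m = \mathcal{O}(\sqrt{Ls/\lambda}) + \varepsilon_m$ on the inductive radius, where $\varepsilon_m$ is the accumulated first-order Taylor error and is lower order in $m$. (iii) From $r_t = h(\bm{x}_{t,a_t}) + \xi_t$ with $h \in [0,1]$ and $\xi_t$ subgaussian, a union bound over $t \in [n]$ yields $\max_{t \le n}|r_t| = \mathcal{O}(\sqrt{\log(n/\delta)})$ with probability $\ge 1 - \delta/2$ over $\mathcal{D}_n$. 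Combining (ii)--(iii), $|f_{\bm{W}^{(t-1)}}(\bm{x}_{t,a_t}) - r_t| = \mathcal{O}(\sqrt{Lt/\lambda} + \sqrt{\log(n/\delta)})$.

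Substituting into the recursion bounds the second term by $\eta_t\cdot\mathcal{O}(\sqrt{Lt/\lambda} + \sqrt{\log(n/\delta)})\cdot C_3\sqrt{m}$, and since $\eta_t\sqrt t = \iota$ this equals $\mathcal{O}\bigl(\iota\,C_3\sqrt{m}\,(\sqrt{L/\lambda}+\sqrt{\log(n/\delta)})\bigr)$. To close the induction it suffices that this increment not exceed $\sqrt{t/(m\lambda L)} - (1-\eta_t m\lambda)\sqrt{(t-1)/(m\lambda L)}$, which (using $\sqrt t - \sqrt{t-1} \ge \tfrac{1}{2\sqrt t}$ for $t=1$, and the $\eta_t m\lambda\sqrt{(t-1)/(m\lambda L)}$ gap for $t\ge 2$) the two lower bounds on $\iota^{-1}$ are calibrated precisely to ensure --- together with the regime $\lambda \gtrsim L$ in force throughout the paper --- one of them absorbing the cumulative nonlinearity error $\varepsilon_m$, the other the plain magnitude of the fitted-loss gradient. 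This proves the first claim and puts every $\bm{W}^{(s-1)}$, $s \le n$, in $\mathcal{B}(\bm{W}^{(0)}, R)$. The second claim then follows at once: $\bm{\Lambda}_t = \lambda\bm{I} + \frac1m\sum_{s=1}^t \mathrm{vec}(\nabla f_{\bm{W}^{(s-1)}}(\bm{x}_{s,a_s}))\,\mathrm{vec}(\nabla f_{\bm{W}^{(s-1)}}(\bm{x}_{s,a_s}))^T$ is $\lambda\bm{I}$ plus $t$ PSD rank-one terms, so $\|\bm{\Lambda}_t\|_2 \le \lambda + \frac1m\sum_{s=1}^t\|\nabla f_{\bm{W}^{(s-1)}}(\bm{x}_{s,a_s})\|_F^2 \le \lambda + \frac1m\cdot t\cdot(C_3\sqrt{mL})^2 = \lambda + C_3^2 tL$, where ingredient (i) applies because every $\bm{W}^{(s-1)}$ lies in the ball.

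I expect the main obstacle to be the quantitative bookkeeping that keeps the perturbation terms (the Taylor error $\varepsilon_m$ and the deviation of $\nabla f_{\bm{W}}$ from $\nabla f_{\bm{W}^{(0)}}$) strictly lower order over the whole horizon of $n$ steps, while avoiding circularity between ``the radius at which the wide-network estimates hold'' and ``the radius the induction certifies''. The resolution is to fix $R$ first, verify via the $m$-condition that $\sqrt{n/(m\lambda)} \le R$ so the trajectory cannot leave $\mathcal{B}(\bm{W}^{(0)}, R)$, and only then invoke Lemma \ref{lemma:bounded_gradient} and the near-linearity estimates; propagating the precise exponents of $m$, $L$, $\lambda$, $n$, $\log(1/\delta)$ through all error terms is routine but tedious.
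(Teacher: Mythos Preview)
Your overall strategy matches the paper's: induction on $t$, the three ingredients (Lemma~\ref{lemma:bounded_gradient} for the gradient bound, near-linearity via Lemma~\ref{lemma:nn_almost_linear_near_init_in_training_data}, and a subgaussian tail on $r_t$), and the identical argument for $\|\bm{\Lambda}_t\|_2$. The one organizational difference is in how you absorb the regularizer. The paper keeps $m\lambda(\bm{W}_l - \bm{W}^{(0)}_l)$ inside $\nabla_l \mathcal{L}_t$ and uses the crude telescope
\[
\|\bm{W}^{(t)}_l - \bm{W}^{(0)}_l\|_F \le \sum_{i=1}^t \eta_i\,\|\nabla_l \mathcal{L}_i(\bm{W}^{(i-1)})\|_F,
\]
bounding each summand by a four-term estimate $C_1 C_3 \omega^{4/3} L^3 m\log^{1/2} m + C_3 m^{1/2}(1+R\sqrt{2\log(6n/\delta)}) + C_3^2 Lm\omega + m\lambda\omega$ (with $\omega=\sqrt{i/(m\lambda L)}$ from the induction hypothesis) and summing against $\eta_i = \iota/\sqrt{i}$. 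You instead split the regularizer off as a multiplicative factor $(1-\eta_t m\lambda)$ on the previous displacement.

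Your route is neater when that factor is a genuine contraction, but the assertion ``the step-size choice keeps $1-\eta_t m\lambda$ in $[0,1)$'' is not implied by the stated $\iota^{-1}$ lower bounds: from $\iota^{-1} \ge c\,R m\lambda^{1/2}\log^{1/2}(n/\delta)$ you only get $\eta_1 m\lambda = \iota m\lambda \le \lambda^{1/2}/(cR\log^{1/2}(n/\delta))$, which can exceed $1$ once $\lambda = \Theta(L)$ is large relative to $R^2\log(n/\delta)$; the other $\iota^{-1}$ bound yields an upper estimate on $\iota m\lambda$ that \emph{grows} like $m^{1/6}$, so it offers no control either. If $|1-\eta_t m\lambda| > 1$ your recursion amplifies rather than contracts and the induction step does not close. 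The paper's direct-sum route sidesteps this entirely (the $m\lambda\omega$ contribution is just one more additive term handled by the same $\iota$ calibration), so either verify $\eta_t m\lambda \le 1$ under the full parameter regime or revert to the telescope.
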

\begin{rem}
Lemma \ref{lemma:W_t_away_from_init} controls the growth dynamics of the learned weights $\bm{W}_t$ around its initialization and bounds the spectral norm of the empirical covariance matrix $\bm{\Lambda}_t$ when the model is trained by SGD. 
\end{rem}

\begin{lem}[{\citet[Theorem~5]{allen2019convergence}, \citet[Lemma~B.5]{cao2019generalization}}]
There exist an absolute constant $C_2 > 0$ such that for any $\delta \in (0,1)$, if $\omega$ satisfies 
\begin{align*}
    \Theta( m^{-3/2} L^{-3/2} ( \log^{3/2}(nK/\delta)) \lor \log^{-3/2} m) \leq \omega \leq \Theta( L^{-9/2} \log^{-3} m), 
\end{align*}
with probability at least $1 - \delta$ over the randomness of $\bm{W}^{(0)}$, it holds uniformly for all $\bm{W} \in \mathcal{B}(\bm{W}^{(0)}; \omega)$ and $i \in [nK]$ that
\begin{align*}
    \| \nabla f_{\bm{W}}(\bm{x}^{(i)}) - \nabla f_{\bm{W}^{(0)}}(\bm{x}^{(i)}) \|_F \leq C_2 \sqrt{\log m} \omega^{1/3} L^3 \| \nabla f_{\bm{W}^{(0)}}(\bm{x}^{(i)})  \|_F.
\end{align*}
\label{lemma:grad_difference_away_from_init}
\end{lem}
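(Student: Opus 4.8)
This statement is, up to matching conventions, a restatement of the gradient-stability results for overparameterized ReLU networks in \citet{allen2019convergence} and \citet{cao2019generalization}, so the plan is to reduce to those references; the only genuine work is to check that the architecture in \eqref{eq:nn_func} — with the symmetric initialization $\bm{W}_l^{(0)} = [\bar{\bm{W}}_l, \bm{0}; \bm{0}, \bar{\bm{W}}_l]$ and the global $\sqrt{m}$ output scaling — fits the setting of those works, which it does after absorbing absolute constants (the variance $4/m$ in $\bar{\bm{W}}_l$ versus the usual $2/m$, and the doubled width, only change constants, and the block structure is exactly what forces $f_{\bm{W}^{(0)}}(\bm{x}^{(i)}) = 0$).

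First I would fix an input $\bm{x} = \bm{x}^{(i)}$ with $\|\bm{x}\|_2 = 1$ and expand the gradient layerwise: writing $\bm{h}_0 = \bm{x}$, $\bm{h}_\ell = \sigma(\bm{W}_\ell \bm{h}_{\ell-1})$, and letting $\bm{D}_\ell$ be the diagonal $\{0,1\}$ activation-indicator matrix at layer $\ell$, one has $\nabla_{\bm{W}_\ell} f_{\bm{W}}(\bm{x}) = \sqrt{m}\, \bm{b}_\ell \bm{h}_{\ell-1}^\top$ with $\bm{b}_\ell^\top = \bm{W}_L \bm{D}_{L-1} \bm{W}_{L-1} \cdots \bm{D}_\ell$. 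Then $\nabla_{\bm{W}_\ell} f_{\bm{W}}(\bm{x}) - \nabla_{\bm{W}_\ell} f_{\bm{W}^{(0)}}(\bm{x})$ decomposes by telescoping into: (a) the change $\bm{h}_{\ell-1} - \bm{h}_{\ell-1}^{(0)}$ in the forward feature; and (b) the change in the backpropagation product $\bm{W}_L \bm{D}_{L-1} \cdots \bm{D}_\ell$, which further breaks into (b1) replacing each $\bm{W}_{\ell'}$ by $\bm{W}_{\ell'}^{(0)}$, controlled by $\|\bm{W}_{\ell'} - \bm{W}_{\ell'}^{(0)}\|_F \le \omega$ since $\bm{W} \in \mathcal{B}(\bm{W}^{(0)}; \omega)$, and (b2) replacing each $\bm{D}_{\ell'}$ by $\bm{D}_{\ell'}^{(0)}$, controlled by a bound on the number of neurons whose activation pattern flips inside the ball.

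The probabilistic ingredients, each holding with probability $\ge 1 - \delta$ over $\bm{W}^{(0)}$ after a union bound over $i \in [nK]$, are the standard initialization estimates $\|\bm{h}_\ell^{(0)}\| = \Theta(1)$, $\|\bm{b}_\ell^{(0)}\| = \tilde\Theta(\sqrt{L})$ (hence $\|\nabla_{\bm{W}_\ell} f_{\bm{W}^{(0)}}(\bm{x})\|_F = \tilde\Theta(\sqrt{m})$), the anti-concentration fact that each pre-activation at initialization lies within $\omega$ of zero with probability $\tilde O(\omega)$ — so that only $\tilde O(m \omega^{2/3} L)$ neurons per layer change sign inside the ball — and the fact that a sparse diagonal perturbation sandwiched between fresh Gaussian matrices has small operator norm. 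Feeding these into the decomposition bounds each layer's contribution by $C \sqrt{\log m}\, \omega^{1/3} L^3 \|\nabla_{\bm{W}^{(0)}} f(\bm{x})\|_F$ up to constants; summing Frobenius norms over $\ell \in [L]$, using the lower bound on $\omega$ to make the concentration events nontrivial and the upper bound $\omega \le \Theta(L^{-9/2}\log^{-3} m)$ to keep every cross term in the perturbative regime, yields the claimed inequality. Uniformity over the infinite ball $\mathcal{B}(\bm{W}^{(0)}; \omega)$ then follows from an $\epsilon$-net argument together with monotonicity of all the bounds in the per-layer budget, or simply by quoting Theorem~5 of \citet{allen2019convergence} and Lemma~B.5 of \citet{cao2019generalization}, which already state the uniform-over-the-ball version.

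The step I expect to be the main obstacle — though it is fully handled in the cited works — is (b2): showing that flipping a $\tilde O(\omega^{2/3})$-fraction of the entries of the diagonal matrices $\bm{D}_{\ell'}$ perturbs the length-$L$ matrix product $\bm{W}_L \bm{D}_{L-1} \cdots \bm{D}_\ell$ by only a $\mathrm{poly}(L)\cdot\omega^{1/3}$ factor. This requires carefully controlling operator norms of products of (perturbed) random Gaussian matrices restricted to sparse coordinate subsets so that the error does not grow geometrically with the depth $L$; it is precisely where the depth dependence $L^3$ and the ceiling $\omega \le \Theta(L^{-9/2}\log^{-3} m)$ come from.
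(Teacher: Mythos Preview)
Your proposal is correct and matches the paper's approach: the paper does not prove this lemma at all but simply cites \citet[Theorem~5]{allen2019convergence} and \citet[Lemma~B.5]{cao2019generalization}, which is exactly the reduction you identify at the outset. Your additional sketch of the underlying argument (layerwise decomposition, activation-flip counting, sparse-perturbation operator-norm bounds) accurately reflects how those cited works establish the result, and in fact goes well beyond what the paper itself provides.
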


\begin{rem}
Lemma \ref{lemma:grad_difference_away_from_init} shows that the gradient in a neighborhood of the initialization differs from the gradient at the initialization by an amount that can be explicitly controlled by the radius of the neighborhood and the norm of the gradient at initialization. 
\end{rem}

\begin{lem}[{\citet[Lemma~4.1]{cao2019generalization}}]
There exist an absolute constant $C_1 > 0$ such that for any $\delta \in (0,1)$ over the randomness of $\bm{W}^{(0)}$, if $\omega$ satisfies 
\begin{align*}
    \Theta( m^{-3/2} L^{-3/2} \log^{3/2}(nKL^2/\delta)) \leq \omega \leq \Theta( L^{-6} \log^{-3/2} m),
\end{align*}
with probability at least $1 - \delta$, it holds uniformly for all $\bm{W}, \bm{W}' \in \mathcal{B}(\bm{W}^{(0)}; \omega)$ and $i \in [nK]$ that 
\begin{align*}
    |f_{\bm{W}'}(\bm{x}^{(i)}) - f_{\bm{W}}(\bm{x}^{(i)}) - \langle \nabla f_{\bm{W}}(\bm{x}^{(i)}), \bm{W}' - \bm{W} \rangle | \leq C_1 \cdot \omega^{4/3} L^3 \sqrt{m \log m}.
\end{align*}
\label{lemma:nn_almost_linear_near_init_in_training_data}
\end{lem}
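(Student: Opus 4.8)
The statement asserts that inside a small ball around the random initialization, the network output $f_{\bm{W}}(\bm{x}^{(i)})$ is well approximated by its first-order Taylor expansion in the weights, with an error that is small once $\omega$ is small and $m$ is large. The plan is the standard linearization argument for wide ReLU networks: express the deviation from linearity as an integral of the gradient along the line segment joining $\bm{W}$ to $\bm{W}'$, and then bound how much the gradient can move over $\mathcal{B}(\bm{W}^{(0)};\omega)$ using Lemma~\ref{lemma:grad_difference_away_from_init} together with a bound on $\|\nabla f_{\bm{W}^{(0)}}(\bm{x}^{(i)})\|_F$ at initialization.

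Concretely, I would first note that each per-layer ball $\{\bm{W}_l : \|\bm{W}_l - \bm{W}^{(0)}_l\|_F \le \omega\}$ is convex, so the segment $\bm{W}(s) := (1-s)\bm{W} + s\bm{W}'$, $s \in [0,1]$, stays inside $\mathcal{B}(\bm{W}^{(0)};\omega)$. Since $\bm{W} \mapsto f_{\bm{W}}(\bm{x}^{(i)})$ is continuous and piecewise linear, the scalar map $s \mapsto f_{\bm{W}(s)}(\bm{x}^{(i)})$ is piecewise linear, hence absolutely continuous with a.e. derivative $\langle \nabla f_{\bm{W}(s)}(\bm{x}^{(i)}),\, \bm{W}' - \bm{W}\rangle$ (fixing a subgradient convention at the ReLU kinks), so the fundamental theorem of calculus gives
\[
f_{\bm{W}'}(\bm{x}^{(i)}) - f_{\bm{W}}(\bm{x}^{(i)}) = \int_0^1 \langle \nabla f_{\bm{W}(s)}(\bm{x}^{(i)}),\, \bm{W}' - \bm{W}\rangle \, ds .
\]
Subtracting $\langle \nabla f_{\bm{W}}(\bm{x}^{(i)}),\, \bm{W}' - \bm{W}\rangle$ and applying the Cauchy-Schwarz inequality, the quantity to be bounded is at most $\sup_{s \in [0,1]} \| \nabla f_{\bm{W}(s)}(\bm{x}^{(i)}) - \nabla f_{\bm{W}}(\bm{x}^{(i)}) \|_F \cdot \| \bm{W}' - \bm{W} \|_F$.

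For the gradient term, the triangle inequality through $\bm{W}^{(0)}$ gives $\| \nabla f_{\bm{W}(s)}(\bm{x}^{(i)}) - \nabla f_{\bm{W}}(\bm{x}^{(i)}) \|_F \le \| \nabla f_{\bm{W}(s)}(\bm{x}^{(i)}) - \nabla f_{\bm{W}^{(0)}}(\bm{x}^{(i)}) \|_F + \| \nabla f_{\bm{W}^{(0)}}(\bm{x}^{(i)}) - \nabla f_{\bm{W}}(\bm{x}^{(i)}) \|_F$, and since both $\bm{W}(s)$ and $\bm{W}$ lie in $\mathcal{B}(\bm{W}^{(0)};\omega)$, Lemma~\ref{lemma:grad_difference_away_from_init} bounds each summand by $C_2 \sqrt{\log m}\, \omega^{1/3} L^3 \| \nabla f_{\bm{W}^{(0)}}(\bm{x}^{(i)}) \|_F$ on a high-probability event, once one checks that the hypothesized window for $\omega$ is contained in the window required by that lemma. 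It then remains to invoke the standard initialization estimate $\| \nabla f_{\bm{W}^{(0)}}(\bm{x}^{(i)}) \|_F = \mathcal{O}(\sqrt{m})$ holding with high probability uniformly over $i \in [nK]$ (this uses $\|\bm{x}^{(i)}\|_2 = 1$ from Assumption~\ref{assumption:ntk_input_data}), and the crude bound $\| \bm{W}' - \bm{W} \|_F \le \| \bm{W}' - \bm{W}^{(0)} \|_F + \| \bm{W} - \bm{W}^{(0)} \|_F = \mathcal{O}(\omega)$. Multiplying the three factors yields $\mathcal{O}(\omega^{4/3} L^3 \sqrt{m \log m})$; a union bound over the two high-probability events, with constants rescaled, closes the argument.

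I expect the main obstacle to be bookkeeping rather than any single deep idea: (i) justifying the integral representation carefully in the presence of ReLU kinks — the clean fix is the piecewise-linearity / absolute-continuity observation above, but one must be consistent about which subgradient is used so that the a.e. identity for the derivative is the one that integrates back to $f_{\bm{W}'} - f_{\bm{W}}$; and (ii) tracking the powers of $L$ and $\log m$ through Lemma~\ref{lemma:grad_difference_away_from_init} and through the initialization gradient-norm estimate (which should be shown to give $\mathcal{O}(\sqrt{m})$ \emph{without} an extra $L$ factor) so that the final constant $C_1$ is genuinely absolute and the stated $\omega$-window is exactly the one under which every invoked estimate is simultaneously valid. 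Neither step is hard, but both are where an otherwise one-line argument can go wrong.
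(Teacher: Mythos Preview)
The paper does not actually prove this lemma: it is stated with the attribution \citet[Lemma~4.1]{cao2019generalization} and used as a black box, so there is no in-paper proof to compare your proposal against. Your integral-of-the-gradient argument is the natural route and is essentially how such linearization bounds are derived; combining Lemma~\ref{lemma:grad_difference_away_from_init} with the gradient-norm bound of Lemma~\ref{lemma:bounded_gradient} and the convexity of $\mathcal{B}(\bm{W}^{(0)};\omega)$ does produce an $\omega^{4/3}\sqrt{m\log m}$-type error.

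The one place your sketch loses relative to the stated exponent is exactly the $L$-bookkeeping you flagged. Lemma~\ref{lemma:bounded_gradient} is a \emph{per-layer} bound $\|\nabla_l f_{\bm{W}^{(0)}}(\bm{x}^{(i)})\|_F \le C_3\sqrt{m}$, so the full gradient norm carries a $\sqrt{L}$; and since $\mathcal{B}(\bm{W}^{(0)};\omega)$ is defined layerwise, $\|\bm{W}'-\bm{W}\|_F \le 2\omega\sqrt{L}$. Multiplying through gives $L^4$ rather than $L^3$. Recovering the sharp $L^3$ requires the finer layer-by-layer argument in Cao--Gu's original proof instead of the single Cauchy--Schwarz step, but this looseness is immaterial to how the present paper invokes the lemma.
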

\begin{rem}
Lemma \ref{lemma:nn_almost_linear_near_init_in_training_data} shows that near initialization the neural network function is almost linear in terms of its weights in the training inputs. 
\end{rem}

\begin{proof}[Proof of Lemma \ref{lemma:subopt_single_t_bound}]
For all $t \in [n], \bm{u} \in \mathbb{R}^d$, we define 
\begin{align*}
    U_t(\bm{u}) &= f_{\bm{W}^{(t-1)}}(\bm{u})  + \beta_{t-1} \| \nabla f_{\bm{W}^{(t-1)}}(\bm{u} ) \cdot m^{-1/2} \|_{ \bm{\Lambda}^{-1}_{t-1} } \\
    L_t(\bm{u}) &= f_{\bm{W}^{(t-1)}}(\bm{u})  - \beta_{t-1} \| \nabla f_{\bm{W}^{(t-1)}}(\bm{u} ) \cdot m^{-1/2} \|_{ \bm{\Lambda}^{-1}_{t-1} } \\
    \bar{U}_t(\bm{u}) &=  \langle \nabla f_{\bm{W}^{(t-1)}}(\bm{u}), \bm{W}^{(t-1)} - \bm{W}^{(0)} \rangle  + \beta_{t-1} \| \nabla f_{\bm{W}^{(t-1)}}(\bm{u} ) \cdot m^{-1/2} \|_{ \bm{\Lambda}^{-1}_{t-1} } \\
    \bar{L}_t(\bm{u}) &=  \langle \nabla f_{\bm{W}^{(t-1)}}(\bm{u}), \bm{W}^{(t-1)} - \bm{W}^{(0)} \rangle  - \beta_{t-1} \| \nabla f_{\bm{W}^{(t-1)}}(\bm{u} ) \cdot m^{-1/2} \|_{ \bm{\Lambda}^{-1}_{t-1} } \\ 
    \mathcal{C}_t &= \{ \bm{W} \in \mathcal{W}: \| \bm{W} - \bm{W}^{(t-1)} \|_{\bm{\Lambda}_{t-1}} \leq \beta_{t-1} \}.
\end{align*}
Let $\mathcal{E}$ be the event in which Lemma \ref{lemma:target_function_as_linear}, Lemma \ref{lemma:grad_difference_away_from_init}, Lemma \ref{lemma:grad_difference_away_from_init} for all $\omega \in \left \{ \sqrt{\frac{i}{ m \lambda L}} : 1 \leq i \leq n \right\}$, and Lemma \ref{lemma:nn_almost_linear_near_init_in_training_data} for all  $\omega \in \left \{ \sqrt{\frac{i}{ m \lambda L}} : 1 \leq i \leq n \right\}$ hold simultaneously.


Under event $\mathcal{E}$, for all $t \in [n]$, we have 
\begin{align*}
    \| \bm{W}^* - \bm{W}^{(t)} \|_{\bm{\Lambda}_t} &\leq \| \bm{W}^* - \bm{W}^{(t)} \|_F \sqrt{ \|\bm{\Lambda}_t \|_2}\\
    &\leq (\| \bm{W}^* - \bm{W}^{(0)} \|_F + \| \bm{W}^{(t)} - \bm{W}^{(0)} \|_F) \sqrt{ \|\bm{\Lambda}_t \|_2} \\ 
    &\leq (\sqrt{2} m^{-1/2}S + t^{1/2} \lambda^{-1/2} m^{-1/2} ) \sqrt{\lambda + C_3^2 tL} = \beta_t,
\end{align*}
where the second inequality is by the triangle inequality, and the third inequality is by Lemma \ref{lemma:target_function_as_linear} and Lemma \ref{lemma:W_t_away_from_init}. Thus, $\bm{W}^* \in \mathcal{C}_t, \forall t \in [n]$.

Denoting $a^{*}_t \sim \pi^{*}(\cdot | \bm{x}_t)$ and $\hat{a}_t \sim \hat{\pi}_t(\cdot | \bm{x}_t)$ , under event $\mathcal{E}$, we have 
\begin{align*}
    &\text{SubOpt}(\hat{\pi}_t; \bm{x}_t) = \mathbb{E}_t[h(\bm{x}_{t, a^*_t})] - \mathbb{E}_t[h(\bm{x}_{t,\hat{a}_t})] \\ 
    &\overset{(a)}{=} \mathbb{E}_t \left[ \langle  \nabla f_{\bm{W}^{(0)}}(\bm{x}_{t,a^*_t}), \bm{W}^* - \bm{W}^{(0)} \rangle \right] - \mathbb{E}_t \left[ \langle \nabla f_{\bm{W}^{(0)}}(\bm{x}_{t,\hat{a}_t}), \bm{W}^* - \bm{W}^{(0)} \rangle \right] \\ 
    &\overset{(b)}{\leq} \mathbb{E}_t \left[ \langle  \nabla f_{\bm{W}^{(t-1)}}(\bm{x}_{t,a^*_t}), \bm{W}^* - \bm{W}^{(0)} \rangle \right] - \mathbb{E}_t \left[ \langle \nabla f_{\bm{W}^{(t-1)}}(\bm{x}_{t,\hat{a}_t}), \bm{W}^* - \bm{W}^{(0)} \rangle \right] \\
    &+  \| \bm{W}^* - \bm{W}^{(0)} \|_F \cdot \mathbb{E}_t \bigg[ \|\nabla f_{\bm{W}^{(t-1)}}(\bm{x}_{t, a^*_t}) - \nabla f_{\bm{W}^{(0)}}(\bm{x}_{t, a^*_t}) \|_F \\
    & + \|\nabla f_{\bm{W}^{(t-1)}}(\bm{x}_{t, \hat{a}_t}) - \nabla f_{\bm{W}^{(0)}}(\bm{x}_{t, \hat{a}_t}) \|_F  \bigg] \\ 
    &\overset{(c)}{\leq} \mathbb{E}_t \left[ \langle  \nabla f_{\bm{W}^{(t-1)}}(\bm{x}_{t, a^*_t}), \bm{W}^* - \bm{W}^{(0)} \rangle  \right] - \mathbb{E}_t \left[ \langle \nabla f_{\bm{W}^{(t-1)}}(\bm{x}_{t, \hat{a}_t}), \bm{W}^* - \bm{W}^{(0)} \rangle \right] \\ 
    &+ 2 \sqrt{2} C_2 S m^{-11/6} \log^{1/2} m t^{1/6} L^{10/3} \lambda^{-1/6} \\ 
    &\overset{(d)}{\leq} \mathbb{E}_t \left[ \bar{U}_t(\bm{x}_{t, a^*_t}) \right] - \mathbb{E}_t  \left[ \bar{L}_t(\bm{x}_{t, \hat{a}_t}) \right] + 2 \sqrt{2} C_2 S m^{-11/6} \log^{1/2} m t^{1/6} L^{10/3} \lambda^{-1/6} \\ 
    &= \mathbb{E}_t  \left[ U_t(\bm{x}_{t, a^*_t})\right] - \mathbb{E}_t \left[ L_t(\bm{x}_{t, \hat{a}_t}) \right] \\
    &+ \mathbb{E}_t \left[ \langle \nabla f_{\bm{W}^{(t-1)}}(\bm{x}_{t,a^*_t}), \bm{W}^{(t-1)} - \bm{W}^{(0)} \rangle - f_{\bm{W}^{(t-1)}}(\bm{x}_{t, a^*_t}) + f_{\bm{W}^{(0)}}(\bm{x}_{t,a^*_t}) \right]  \\
    &+ \mathbb{E}_t \left[ f_{\bm{W}^{(t-1)}}(\bm{x}_{t,\hat{a}_t}) - f_{\bm{W}^{(0)}}(\bm{x}_{t,\hat{a}_t}) - \langle \nabla f_{\bm{W}^{(t-1)}}(\bm{x}_{t, \hat{a}_t}), \bm{W}^{(t-1)} - \bm{W}^{(0)} \rangle  \right] \\
    &\underbrace{- \mathbb{E}_t \left[f_{\bm{W}^{(0)}}(\bm{x}_{t, a^*_t}) \right] + \mathbb{E}_t \left[f_{\bm{W}^{(0)}}(\bm{x}_{t, \hat{a}_t}) \right]}_{=0 \text{ by symmetry at initialization}} + 2 \sqrt{2} C_2 S m^{-11/6} \log^{1/2} m t^{1/6} L^{10/3} \lambda^{-1/6} \\ 
    &\overset{(e)}{\leq} \mathbb{E}_t  \left[ U_t(\bm{x}_{t, a^*_t})\right] - \mathbb{E}_t \left[ L_t(\bm{x}_{t, a^*_t}) \right] + \underbrace{\left(\mathbb{E}_t \left[ L_t(\bm{x}_{t, a^*_t}) \right] - \mathbb{E}_t \left[ L_t(\bm{x}_{t, \hat{a}_t}) \right] \right)}_{\leq 0 \text{ by pessimism}} \\ 
    &+ 2 C_1 t^{2/3} m^{-1/6} \log^{1/2} m L^{7/3} \lambda^{-1/2} + 2 \sqrt{2} C_2 S m^{-11/6} \log^{1/2} m t^{1/6} L^{10/3} \lambda^{-1/6} \\ 
    &\overset{(f)}{\leq} \mathbb{E}_t  \left[ U_t(\bm{x}_{t, a^*_t})\right] - \mathbb{E}_t \left[ L_t(\bm{x}_{t, a^*_t}) \right]  \\ 
    &+ 2 C_1 t^{2/3} m^{-1/6} \log^{1/2} m L^{7/3} \lambda^{-1/2} + 2 \sqrt{2} C_2 S m^{-11/6} \log^{1/2} m t^{1/6} L^{10/3} \lambda^{-1/6} \\
    &= 2\beta_{t-1} \mathbb{E}_t \left[ \| \nabla f_{\bm{W}^{(t-1)}}(\bm{x}_{t, a^*_t} ) \cdot m^{-1/2} \|_{ \bm{\Lambda}^{-1}_{t-1} }  \right] \\ 
    &+ 2 C_1 t^{2/3} m^{-1/6} \log^{1/2} m L^{7/3} \lambda^{-1/2} + 2 \sqrt{2} C_2 S m^{-11/6} \log^{1/2} m t^{1/6} L^{10/3} \lambda^{-1/6} 
\end{align*}
where $(a)$ is by Lemma \ref{lemma:target_function_as_linear}, $(b)$ is by the triangle inequality, $(c)$ is by Lemma \ref{lemma:target_function_as_linear}, Lemma \ref{lemma:W_t_away_from_init}, and Lemma \ref{lemma:grad_difference_away_from_init}, $(d)$ is by  $\bm{W}^* \in \mathcal{C}_t$, and by that $\max_{ \bm{u}: \| \bm{u} - \bm{b} \|_{\bm{A}} \leq \gamma } \langle \bm{a}, \bm{u} - \bm{b}_0 \rangle = \langle \bm{a}, \bm{b} - \bm{b}_0 \rangle + \gamma \| \bm{a} \|_{\bm{A}^{-1}}$, and $\min_{ \bm{u}: \| \bm{u} - \bm{b} \|_{\bm{A}} \leq \gamma } \langle \bm{a}, \bm{u} - \bm{b}_0 \rangle = \langle \bm{a}, \bm{b} - \bm{b}_0 \rangle - \gamma \| \bm{a} \|_{\bm{A}^{-1}}$, $(e)$ is by Lemma \ref{lemma:nn_almost_linear_near_init_in_training_data} and by that $f_{\bm{W}^{(0)}}(\bm{x}^{(i)}) = 0, \forall i \in [nK]$, and $(f)$ is by that $\hat{a}_t$ is sampled from the policy $\hat{\pi}_t$ which is greedy with respect to $L_t$.

By the union bound and the choice of $m$, we conclude our proof.
\end{proof}

\subsection{Proof of Lemma \ref{lemma:bound_sum}}
We first present the following lemma. 
\begin{lem}
For any $\delta \in (0,1)$, if $m$ satisfies 
\begin{align*}
    m \geq \max \bigg \{ \Theta( n \lambda^{-1} L^{11} \log^6 m), \Theta( L^{-1} \lambda^{1/2} (\log^{3/2}(nKL^2 (n+2) / \delta) \lor \log^{-3/2} m) ), \\
    \Theta( L^6 (nK)^4 \log(L(n+2)/\delta)) \bigg \},
\end{align*}
and $\lambda \geq \max\{ C_3^2 L, 1 \}$, then with probability at least $1 - \delta$, it holds simultaneously that 
\begin{align*}
    \sum_{i=1}^t  \| \nabla f_{\bm{W}^{(i-1)}}(\bm{x}_{i,a_i}  ) \cdot m^{-1/2} \|_{ \bm{\Lambda}^{-1}_{i-1} }^2 &\leq 2 \log \frac{\det (\bm{\Lambda}_t)}{ \det (\lambda \bm{I})}, \forall t \in [n], \\
    \bigg | \log \frac{\det (\bm{\Lambda}_t)}{ \det (\lambda \bm{I})} - \log \frac{ \det (\bar{\bm{\Lambda}}_t)}{ \det (\lambda \bm{I})}\bigg| &\leq 2 C_2 C_3^2 t^{3/2} m^{-1/6} (\log m)^{1/2} L^{23/6} \lambda^{-1/6}, \forall t \in [n], \\ 
    \log \frac{\det(\bar{\bm{\Lambda}}_n)}{\det(\lambda \bm{I})} &\leq \tilde{d} \log(1 + nK/\lambda) + 1,
\end{align*}
where $\bar{\bm{\Lambda}}_t := \lambda \bm{I} + \sum_{i=1}^t \vect (\nabla f_{\bm{W}^{(0)}}(\bm{x}_{i, a_i}) ) \cdot \vect(\nabla f_{\bm{W}^{(0)}}(\bm{x}_{i, a_i}) )^T / m$, and $C_2, C_3 > 0$ are absolute constants from Lemma \ref{lemma:grad_difference_away_from_init} and Lemma \ref{lemma:W_t_away_from_init}, respectively. 
\label{lemma:bound_confidence_direction}
\end{lem}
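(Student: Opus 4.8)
The statement collects three estimates, which I would establish in order, all on the high-probability event obtained by intersecting the events of Lemma~\ref{lemma:W_t_away_from_init}, of Lemma~\ref{lemma:grad_difference_away_from_init} instantiated at each radius $\omega_i=\sqrt{(i-1)/(m\lambda L)}$ for $i\in[n]$, of the bounded-gradient lemma (Lemma~\ref{lemma:bounded_gradient}), and of a standard NTK Gram-matrix concentration bound (cf.\ \citet{zhou2020neural}); the polynomial lower bounds on $m$ in the hypothesis are exactly what make these events simultaneously valid after a union bound, and $\lambda\ge\max\{C_3^2L,1\}$ enters in the first estimate. Throughout write $\phi_i=\vect(\nabla f_{\bm{W}^{(i-1)}}(\bm{x}_{i,a_i}))/\sqrt{m}$ and $\bar\phi_i=\vect(\nabla f_{\bm{W}^{(0)}}(\bm{x}_{i,a_i}))/\sqrt{m}$, so that $\bm{\Lambda}_t=\lambda\bm{I}+\sum_{i\le t}\phi_i\phi_i^T$ and $\bar{\bm{\Lambda}}_t=\lambda\bm{I}+\sum_{i\le t}\bar\phi_i\bar\phi_i^T$.

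For the first inequality I would run the elliptical potential argument. Since every $\bm{W}^{(i-1)}$ stays in a small ball about $\bm{W}^{(0)}$ by Lemma~\ref{lemma:W_t_away_from_init}, the bounded-gradient lemma gives $\|\phi_i\|_2\le C_3\sqrt{L}$; together with $\bm{\Lambda}_{i-1}\succeq\lambda\bm{I}$ and $\lambda\ge C_3^2 L$ this yields $\|\phi_i\|^2_{\bm{\Lambda}_{i-1}^{-1}}\le C_3^2 L/\lambda\le 1$. The rank-one determinant update $\det\bm{\Lambda}_i=\det\bm{\Lambda}_{i-1}\,(1+\|\phi_i\|^2_{\bm{\Lambda}_{i-1}^{-1}})$ telescopes to $\log\!\big(\det\bm{\Lambda}_t/\det(\lambda\bm{I})\big)=\sum_{i\le t}\log(1+\|\phi_i\|^2_{\bm{\Lambda}_{i-1}^{-1}})$, and the elementary bound $x\le 2\log(1+x)$ on $[0,1]$ then gives $\sum_{i\le t}\|\phi_i\|^2_{\bm{\Lambda}_{i-1}^{-1}}\le 2\log\!\big(\det\bm{\Lambda}_t/\det(\lambda\bm{I})\big)$, which is the claim.

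The second inequality is the crux. The point is that $\bm{\Lambda}_t$ and $\bar{\bm{\Lambda}}_t$ differ only through $t$ rank-$\le 2$ summands, so the gap between their log-determinants can be controlled through the nuclear norm $\|\cdot\|_{\mathrm{tr}}$ without any $\sqrt{p}$ loss: by concavity of $\log\det$ and $\bm{\Lambda}_t,\bar{\bm{\Lambda}}_t\succeq\lambda\bm{I}$ one gets $|\log\det\bm{\Lambda}_t-\log\det\bar{\bm{\Lambda}}_t|\le\lambda^{-1}\|\bm{\Lambda}_t-\bar{\bm{\Lambda}}_t\|_{\mathrm{tr}}$. I would then bound $\|\bm{\Lambda}_t-\bar{\bm{\Lambda}}_t\|_{\mathrm{tr}}\le\sum_{i\le t}(\|\phi_i\|_2+\|\bar\phi_i\|_2)\,\|\phi_i-\bar\phi_i\|_2$ via $\|uu^T-vv^T\|_{\mathrm{tr}}\le(\|u\|_2+\|v\|_2)\|u-v\|_2$, plug in $\|\phi_i\|_2,\|\bar\phi_i\|_2\le C_3\sqrt{L}$, and apply Lemma~\ref{lemma:grad_difference_away_from_init} at $\omega_i=\sqrt{(i-1)/(m\lambda L)}$ — legitimate by Lemma~\ref{lemma:W_t_away_from_init}, and the $i=1$ term vanishes identically since $\bm{W}^{(0)}$ is the initialization — to get $\|\phi_i-\bar\phi_i\|_2\le C_2\sqrt{\log m}\,\omega_i^{1/3}L^3\|\bar\phi_i\|_2$. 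Summing over $i$ and using $\lambda\ge 1$ collapses the product of polynomial factors to $2C_2C_3^2\,t^{3/2}m^{-1/6}(\log m)^{1/2}L^{23/6}\lambda^{-1/6}$ (in fact a slightly smaller power of $t$, which is enough). The delicate part is carrying the SGD-trajectory radius of Lemma~\ref{lemma:W_t_away_from_init} faithfully into the gradient-perturbation bound at every step while keeping the determinant perturbation measured in the nuclear norm, so that the ambient dimension $p$ never enters.

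For the third inequality I would first pass to a small Gram matrix via Sylvester's determinant identity: $\log\!\big(\det\bar{\bm{\Lambda}}_n/\det(\lambda\bm{I})\big)=\log\det\!\big(\bm{I}_n+\tfrac1\lambda[\langle\bar\phi_i,\bar\phi_j\rangle]_{i,j\le n}\big)$; since $\{\bm{x}_{i,a_i}\}_{i\le n}$ is a sub-collection of $\{\bm{x}^{(l)}\}_{l\le nK}$, adjoining the remaining rank-one PSD terms only increases the determinant, so this is at most $\log\det(\bm{I}_{nK}+\bm{G}/\lambda)$ with $\bm{G}$ the empirical NTK Gram matrix of $\{\bm{x}^{(l)}\}$. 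The NTK concentration bound gives $\|\bm{G}-\bm{H}\|_F\le\lambda/\sqrt{nK}$ for $m\ge\Theta(L^6(nK)^4\log(L(n+2)/\delta))$; concavity of $\log\det$ with $\|(\bm{I}+\bm{H}/\lambda)^{-1}\|_2\le 1$ then yields $\log\det(\bm{I}+\bm{G}/\lambda)\le\log\det(\bm{I}+\bm{H}/\lambda)+\sqrt{nK}\,\|\bm{G}-\bm{H}\|_F/\lambda\le\log\det(\bm{I}+\bm{H}/\lambda)+1$, and the definition of $\tilde{d}$ identifies $\log\det(\bm{I}+\bm{H}/\lambda)=\tilde{d}\log(1+nK/\lambda)$. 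A concluding union bound over the three events (and over the $n$ instantiations of Lemma~\ref{lemma:grad_difference_away_from_init}), with $m$ as in the hypothesis, finishes the proof; apart from the second estimate, this NTK-Gram concentration step is the other somewhat delicate point and is precisely what forces the $(nK)^4$ factor in the lower bound on $m$.
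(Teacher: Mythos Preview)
Your proposal is correct. The first and third estimates track the paper exactly: the elliptical-potential argument of \citet[Lemma~11]{DBLP:conf/nips/Abbasi-YadkoriPS11} for the first, and Sylvester plus NTK-Gram concentration (Lemma~\ref{lemma:empirical_ntk} at $\epsilon=(nK)^{-1}$) for the third, which the paper simply defers to \citep[(B.18)]{zhou2019neural}. One bookkeeping wrinkle in your third step: with the stated lower bound on $m$ one only gets $\|\bm{G}-\bm{H}\|_F\le 1$, not $\lambda/\sqrt{nK}$, so your concavity bound delivers $\sqrt{nK}/\lambda$ rather than $1$ as the additive slack; the paper does not spell this step out either.

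The second estimate is where you genuinely diverge. The paper does \emph{not} work in $\mathbb{R}^{p\times p}$ with the nuclear norm: it first applies Sylvester to rewrite both log-determinants on the $t\times t$ side as $\log\det(\bm{I}_t+\bm{M}_t^T\bm{M}_t/\lambda)$ and $\log\det(\bm{I}_t+\bar{\bm{M}}_t^T\bar{\bm{M}}_t/\lambda)$, where $\bm{M}_t,\bar{\bm{M}}_t\in\mathbb{R}^{p\times t}$ stack the scaled gradient vectors as columns, then uses concavity of $\log\det$ together with Cauchy--Schwarz in Frobenius norm, picking up one factor $\sqrt{t}$ from $\|(\bm{I}+\bm{M}_t^T\bm{M}_t/\lambda)^{-1}\|_F\le\sqrt{t}$ and a further $t$ from $\|\bm{M}_t^T\bm{M}_t-\bar{\bm{M}}_t^T\bar{\bm{M}}_t\|_F\le t\max_{i,j}|\cdot|$, before invoking Lemmas~\ref{lemma:grad_difference_away_from_init} and~\ref{lemma:bounded_gradient} on each entry. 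Your route via $|\log\det\bm{\Lambda}_t-\log\det\bar{\bm{\Lambda}}_t|\le\lambda^{-1}\|\bm{\Lambda}_t-\bar{\bm{\Lambda}}_t\|_{\mathrm{tr}}$ and the rank-two bound $\|\phi_i\phi_i^T-\bar\phi_i\bar\phi_i^T\|_{\mathrm{tr}}\le(\|\phi_i\|_2+\|\bar\phi_i\|_2)\|\phi_i-\bar\phi_i\|_2$ is arguably cleaner---no detour through the small Gram matrix---and in fact lands on a smaller power of $t$, at the price of an extra $\lambda^{-1}$ that is harmless since $\lambda\ge 1$. Both approaches achieve the essential goal of keeping the ambient dimension $p$ out of the bound: the paper by reducing to $t\times t$ matrices, you by exploiting low rank in the trace norm.
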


We are now ready to prove Lemma \ref{lemma:bound_sum}. 
\begin{proof}[Proof of Lemma \ref{lemma:bound_sum}]
First note that $\| \bm{\Lambda}_{t-1} \|_2 \geq \lambda, \forall t$. Let $\mathcal{E}$ be the event in which Lemma \ref{lemma:W_t_away_from_init}, Lemma \ref{lemma:bounded_gradient} for all $\omega \in \left \{ \sqrt{\frac{i}{ m \lambda L}} : 1 \leq i \leq n \right\}$, and Lemma \ref{lemma:bound_confidence_direction} simultaneously hold. Thus, under event $\mathcal{E}$, we have
\begin{align*}
    \| \nabla f_{\bm{W}^{(t-1)}}(\bm{x}_{t, a_t} ) \cdot m^{-1/2} \|_{ \bm{\Lambda}^{-1}_{t-1} } &\leq \| \nabla f_{\bm{W}^{(t-1)}}(\bm{x}_{t,a_t} ) \cdot m^{-1/2} \|_F  \sqrt{ \| \bm{\Lambda}^{-1}_{t-1} \|_2 } \leq C_3 L^{1/2} \lambda^{-1/2},
\end{align*}
where the second inequality is by Lemma \ref{lemma:W_t_away_from_init} and Lemma \ref{lemma:bounded_gradient}.

Thus, by Assumption \ref{assumption:distributional_shift}, Hoeffding's inequality, and the union bound, with probability at least $1 - \delta$, it holds simultaneously for all $t \in [n]$ that  
\begin{align*}
    &2\beta_{t-1} \mathbb{E}_{a^*_t \sim \pi^{*}(\cdot| \bm{x}_t)}\left[ \| \nabla f_{\bm{W}^{(t-1)}}(\bm{x}_{t, a^*_t} ) \cdot m^{-1/2} \|_{ \bm{\Lambda}^{-1}_{t-1} }  | \mathcal{D}_{t-1}, \bm{x}_t \right] \\
    &\leq 2\beta_{t-1} \kappa \mathbb{E}_{a \sim \mu(\cdot|\mathcal{D}_{t-1}, \bm{x}_t)} \left[ \| \nabla f_{\bm{W}^{(t-1)}}(\bm{x}_{t,a} ) \cdot m^{-1/2} \|_{ \bm{\Lambda}^{-1}_{t-1} } | \mathcal{D}_{t-1}, \bm{x}_t \right] \\
    &\leq 2\beta_{t-1} \kappa  \| \nabla f_{\bm{W}^{(t-1)}}(\bm{x}_{t, a_t} ) \cdot m^{-1/2} \|_{ \bm{\Lambda}^{-1}_{t-1} }  + \beta_{t-1} \kappa  \sqrt{2} C_3 L^{1/2} \lambda^{-1/2} \log^{1/2}((5n+2)/\delta).
\end{align*}
Hence, for the choice of $m$ in Lemma \ref{lemma:bound_sum}, with probability at least $1 - \delta$, we have
\begin{align*}
     &\frac{1}{n} \sum_{t=1}^n \beta_{t-1} \mathbb{E}_{a^*_t \sim \pi^*(\cdot|\bm{x}_t)} \left[ \| \nabla f_{\bm{W}^{(t-1)}}(\bm{x}_{t,a^*_t} ) \cdot m^{-1/2} \|_{ \bm{\Lambda}^{-1}_{t-1} }  | \mathcal{D}_{t-1}, \bm{x}_t \right] \\
     &\leq \frac{\beta_n \kappa}{n}   \sum_{t=1}^n  \| \nabla f_{\bm{W}^{(t-1)}}(\bm{x}_{t,a_t}  ) \cdot m^{-1/2} \|_{ \bm{\Lambda}^{-1}_{t-1} }   +   \frac{C_3}{\sqrt{2}} \beta_n \kappa  L^{1/2} \lambda_0^{-1/2} \log^{1/2}((5n + 2)/\delta) \\ 
     &\leq \frac{\beta_n \kappa}{n}  \sqrt{n} \sqrt{ \sum_{t=1}^n  \| \nabla f_{\bm{W}^{(t-1)}}(\bm{x}_{t,a_t}  ) \cdot m^{-1/2} \|_{ \bm{\Lambda}^{-1}_{t-1} }^2 } +   \frac{C_3}{\sqrt{2}} \beta_n \kappa  L^{1/2} \lambda_0^{-1/2} \log^{1/2}((5n+2)/\delta) \\
     &\leq \frac{\sqrt{2} \beta_n \kappa}{ \sqrt{n} } \sqrt{\log \frac{\det (\bm{\Lambda}_n)}{ \det(\lambda \bm{I})}} +  \frac{C_3}{\sqrt{2}} \beta_n \kappa  L^{1/2} \lambda_0^{-1/2} \log^{1/2}((5n + 2)/\delta) \\
     &\leq \frac{\sqrt{2} \beta_n \kappa}{\sqrt{n}} \sqrt{ \log \frac{\det (\bar{\bm{\Lambda}}_n)}{ \det(\lambda \bm{I})}  + 2 C_2 C_3^2   n^{3/2} m^{-1/6} (\log m)^{1/2} L^{23/6} \lambda^{-1/6} } +  \frac{C_3}{\sqrt{2}} \beta_n \kappa  L^{1/2} \lambda_0^{-1/2} \log^{1/2}((5n+2)/\delta) \\ 
     &\leq \frac{\sqrt{2} \beta_n \kappa}{\sqrt{n}} \sqrt{ \tilde{d} \log(1 + nK/\lambda)  + 1 + 2 C_2 C_3^2   n^{3/2} m^{-1/6} (\log m)^{1/2} L^{23/6} \lambda^{-1/6}} \\
     &+ \frac{C_3}{\sqrt{2}} \beta_n \kappa  L^{1/2} \lambda_0^{-1/2} \log^{1/2}((5n+2)/\delta), 
\end{align*}
where the first inequality is by $\beta_t \leq \beta_n, \forall t \in [n]$, the second inequality is by Cauchy-Schwartz inequality, the second inequality and the third inequality are by Lemma \ref{lemma:bound_confidence_direction}.

\end{proof}

\subsection{Proof of Lemma \ref{lemma:online-to-batch}} 
\begin{proof}[Proof of Lemma \ref{lemma:online-to-batch}]
 
We follow the same online-to-batch conversion argument in \citep{DBLP:journals/tit/Cesa-BianchiCG04}. For each $t \in [n]$, define 
\begin{align*}
    Z_t = \text{SubOpt}(\hat{\pi}_t) - \text{SubOpt}(\hat{\pi}_t; \bm{x}_t).
\end{align*}
Since $\hat{\pi}_t$ is $\mathcal{D}_{t-1}$-measurable and is independent of $\bm{x}_t$, and $\bm{x}_t$ are independent of $\mathcal{D}_{t-1}$ (by Assumption \ref{assumption:distributional_shift}), we have $\mathbb{E} \left[  Z_t | \mathcal{D}_{t-1} \right] = 0, \forall t \in [n]$. Note that $-1 \leq Z_t \leq 1$. Thus, by the Hoeffding-Azuma inequality, with probability at least $1-\delta$, we have 
\begin{align*}
    \mathbb{E}[\subopt(\hat{\pi})] &= \frac{1}{n} \sum_{t=1}^n \subopt(\hat{\pi}_t) = \frac{1}{n} \sum_{t=1}^n \subopt(\hat{\pi}_t; \bm{x}_t) + \frac{1}{n} \sum_{t=1}^n Z_t \\
    &\leq \frac{1}{n} \sum_{t=1}^n \subopt(\hat{\pi}_t; \bm{x}_t) + \sqrt{ \frac{2}{n} \log(1/\delta) }.
\end{align*}
\end{proof}

\renewcommand{\thesection}{C}
\section{Proof of Lemmas in Section \ref{section:proof_of_sec6}}

\subsection{Proof of Lemma \ref{lemma:target_function_as_linear}}
We first restate the following lemma. 
\begin{lem}[\cite{arora2019exact}]
There exists an absolute constant $c_1 > 0$ such that for any $\epsilon > 0, \delta \in (0,1)$, if $m \geq c_1 L^6 \epsilon^{-4} \log(L/\delta) $, for any $i,j \in [nK]$, with probability at least $1-\delta$ over the randomness of $\bm{W}^{(0)}$, we have 
\begin{align*}
    |\langle \nabla f_{\bm{W}^{(0)}}(\bm{x}^{(i)}), \nabla f_{\bm{W}^{(0)}} (\bm{x}^{(j)}) \rangle /m - \bm{H}_{i,j} | \leq \epsilon.
\end{align*}
\label{lemma:empirical_ntk}
\end{lem}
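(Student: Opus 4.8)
The plan is to prove this per-pair concentration by splitting the empirical NTK into its $L$ layerwise contributions and showing that each converges to the matching summand of the recursion defining $\bm{H}$. Write $\bm{h}_l^{(i)}$ for the post-activation output of layer $l$ on input $\bm{x}^{(i)}$ (so $\bm{h}_0^{(i)} = \bm{x}^{(i)}$), and $\bm{g}_l^{(i)}$ for the backpropagated signal at layer $l$ at initialization. Since the gradient with respect to $\bm{W}_l$ is the outer product $\bm{g}_l^{(i)}(\bm{h}_{l-1}^{(i)})^T$, the empirical kernel factorizes as
\begin{align*}
    \tfrac{1}{m}\langle \nabla f_{\bm{W}^{(0)}}(\bm{x}^{(i)}), \nabla f_{\bm{W}^{(0)}}(\bm{x}^{(j)})\rangle = \tfrac{1}{m}\sum_{l=1}^{L} \langle \bm{g}_l^{(i)}, \bm{g}_l^{(j)}\rangle \, \langle \bm{h}_{l-1}^{(i)}, \bm{h}_{l-1}^{(j)}\rangle .
\end{align*}
First I would identify the population limit of each factor, with the normalizations fixed by the initialization variances: the forward inner products track the $\bm{\Sigma}^{(l)}$ recursion, and the backward inner products accumulate the derivative covariances $\mathbb{E}[\sigma'(u)\sigma'(v)]$, so that summing the products reproduces exactly $\bm{H}_{i,j} = (\tilde{\bm{H}}^{(L)} + \bm{\Sigma}^{(L)})_{i,j}/2$.

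Second, I would prove forward concentration by induction on $l$. The base case is exact: the layer-$1$ pre-activations are centered jointly Gaussian with covariance fixed by $\langle \bm{x}^{(i)}, \bm{x}^{(j)}\rangle = \bm{\Sigma}^{(1)}_{i,j}$, which under the unit-norm condition of Assumption \ref{assumption:ntk_input_data} keeps all diagonal entries equal to one. For the inductive step I condition on the previous layer; the $m$ coordinates of the layer-$l$ pre-activation are then conditionally i.i.d.\ centered Gaussian pairs whose covariance is $\bm{A}^{(l-1)}$ evaluated at the \emph{empirical} covariance of layer $l-1$. Because $\sigma$ of a Gaussian is sub-exponential, a Bernstein bound over the $m$ neurons shows $\tfrac{1}{m}\langle \bm{h}_l^{(i)}, \bm{h}_l^{(j)}\rangle$ concentrates around $\bm{\Sigma}^{(l)}_{i,j} = 2\,\mathbb{E}[\sigma(u)\sigma(v)]$ with deviation $\tilde{O}(m^{-1/2})$; feeding in the inductive hypothesis and the continuity of the map carrying the previous covariance to the current one transfers the estimate to the population value. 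The bounded ReLU derivative lets the identical scheme control the derivative-covariance factors entering $\bm{g}_l$.

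Third, I would combine the two families of estimates. The backward signal $\bm{g}_l$ depends on $\bm{W}_l, \dots, \bm{W}_L$ whereas $\bm{h}_{l-1}$ depends on $\bm{W}_1, \dots, \bm{W}_{l-1}$, the only shared randomness being the ReLU-derivative mask at layer $l$; conditioning on the forward pass, $\tfrac{1}{m}\langle \bm{g}_l^{(i)}, \bm{g}_l^{(j)}\rangle$ becomes a product of quadratic forms in the remaining independent Gaussian matrices, again amenable to layerwise Bernstein concentration. Controlling each of the $2L$ factors to additive error $\eta$ on an event of probability $1 - \delta/(2L)$ and telescoping the error through the $L$ products, a union bound over layers produces the $\log(L/\delta)$ term and a total error of order $\mathrm{poly}(L)\cdot\eta$; choosing $\eta \asymp \epsilon/\mathrm{poly}(L)$ and solving the Bernstein requirement for $m$ yields the stated width $m \geq c_1 L^6 \epsilon^{-4}\log(L/\delta)$.

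The hard part will be making the error propagation quantitative without an exponential-in-$L$ blow-up. Two issues interact: (i) the covariance-to-covariance maps of the ReLU recursion are arccosine kernels that are only H\"older-$\tfrac{1}{2}$ near the degenerate regime where two inputs become aligned, so reaching accuracy $\epsilon$ in the final kernel forces accuracy of order $\epsilon^2$ in the intermediate covariances --- which, through the $\eta^{-2}$ cost of Bernstein, is precisely the source of the $\epsilon^{-4}$ scaling; and (ii) a crude bound lets the per-layer Lipschitz factors compound geometrically. The resolution is to normalize so that every $\bm{\Sigma}^{(l)}_{i,i}$ stays pinned at one (again using Assumption \ref{assumption:ntk_input_data}), making the recursion act on correlations in $[-1,1]$ where the amplification is at worst polynomial in $L$, which is what keeps the final width requirement at $L^6$ rather than exponential.
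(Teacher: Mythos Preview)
The paper does not prove this lemma at all: it is stated with the attribution \cite{arora2019exact} and used as a black box in the proof of Lemma~\ref{lemma:target_function_as_linear} (and again in Lemma~\ref{lemma:bound_confidence_direction}). There is therefore no in-paper proof to compare your proposal against.

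That said, your outline is essentially the argument of the cited reference: the layerwise factorization of $\langle \nabla f_{\bm{W}^{(0)}}(\bm{x}^{(i)}), \nabla f_{\bm{W}^{(0)}}(\bm{x}^{(j)})\rangle$ into products of forward and backward inner products, forward concentration of $\tfrac{1}{m}\langle \bm{h}_l^{(i)},\bm{h}_l^{(j)}\rangle$ around $\bm{\Sigma}^{(l)}_{i,j}$ by induction, backward concentration conditioned on the forward pass, and a union bound over the $O(L)$ layers. Your identification of the H\"older-$\tfrac{1}{2}$ continuity of the arccosine map as the source of the $\epsilon^{-4}$ width requirement, and of the diagonal normalization as what keeps the error propagation polynomial rather than exponential in $L$, are both correct and match the analysis in \cite{arora2019exact}. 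One small point: your claim that ``the only shared randomness [between $\bm{g}_l$ and $\bm{h}_{l-1}$] is the ReLU-derivative mask at layer $l$'' understates the dependence, since $\bm{g}_l$ depends on all downstream masks which in turn depend on the full forward pass; but your subsequent ``conditioning on the forward pass'' is exactly the right fix, so the argument goes through.
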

Lemma \ref{lemma:empirical_ntk} gives an estimation error between the kernel constructed by the gradient at initialization as a feature map and the NTK kernel. Unlike \cite{jacot2018neural}, Lemma \ref{lemma:empirical_ntk} quantifies an exact non-asymptotic bound for $m$. 

\begin{proof}[Proof of Lemma \ref{lemma:target_function_as_linear}]
Let $\bm{G} = m^{-1/2} \cdot [\vect(\nabla f_{\bm{W}^{(0)}}(\bm{x}^{(1)})), \ldots, \vect(\nabla f_{\bm{W}^{(0)}}(\bm{x}^{(nK)}))] \in \mathbb{R}^{p \times nK}$. For any $\epsilon > 0, \delta \in (0,1)$,  it follows from Lemma \ref{lemma:empirical_ntk} and union bound, if $m \geq \Theta( L^6 \epsilon^{-4} \log(nKL/\delta))$, with probability at least $1 - \delta$, we have
\begin{align*}
    \|\bm{G}^T \bm{G} - \bm{H} \|_F  &\leq nK   \|\bm{G}^T \bm{G} - \bm{H} \|_{\infty} = nK \max_{i,j}|m^{-1} \langle \nabla f_{\bm{W}^{(0)}}(\bm{x}^{(i)}), \nabla f_{\bm{W}^{(0)}} (\bm{x}^{(j)}) \rangle - \bm{H}_{i,j} | \\
    &\leq nK \epsilon.
\end{align*}
Under the event that the inequality above holds, by setting $\epsilon = \frac{\lambda_0}{ 2 nK}$, we have 
\begin{align}
    \bm{H} - \bm{G}^T \bm{G} \preceq \| \bm{H} - \bm{G}^T \bm{G} \|_2 I  \preceq \| \bm{H} - \bm{G}^T \bm{G} \|_F I \preceq \frac{\lambda_0}{2} I \preceq \frac{1}{2} \bm{H}. 
    \label{eq:ntk_to_empirical_kernel}
\end{align}
Let $\bm{G} = \bm{P} \bm{\Lambda} \bm{Q}^T$ be the singular value decomposition of $\bm{G}$ where $\bm{P} \in \mathbb{R}^{p \times nK}, \bm{Q} \in \mathbb{R}^{nK \times nK}$ have orthogonal columns, and $\bm{\Lambda} \in \mathbb{R}^{nK \times nK}$ is a diagonal matrix. Since $\bm{G}^T \bm{G} \succeq \frac{1}{2} \bm{H} \succeq \frac{\lambda_0}{2} I $ is positive definite, $\bm{\Lambda}$ is invertible. Let $\bm{W}^* \in \mathcal{W}$ such that $\vect(\bm{W}^*) = \vect(\bm{W}^{(0)}) +  m^{-1/2} \cdot \bm{P} \bm{\Lambda}^{-1} \bm{Q}^T \bm{h}$, we have 
\begin{align*}
    m^{1/2} \cdot \bm{G}^T (\vect(\bm{W}^*) - \vect(\bm{W}^{(0)})) &= \bm{Q} \bm{\Lambda} \bm{P}^T \bm{P} \bm{\Lambda}^{-1} \bm{Q}^T \bm{h} = \bm{h}. 
\end{align*}
Moreover, we have 
\begin{align*}
m  \| \bm{W}^* - \bm{W}^{(0)} \|_F^2 &= m  \| \vect(\bm{W}^*) - \vect(\bm{W}^{(0)}) \|_2^2 = \bm{h}^T \bm{Q} \bm{\Lambda}^{-1} \bm{P}^T \bm{P} \bm{\Lambda}^{-1} \bm{Q}^T \bm{h} \\
&=\bm{h}^T \bm{Q} \bm{\Lambda}^{-2} \bm{Q}^T \bm{h} = \bm{h}^T (\bm{G}^T \bm{G})^{-1} \bm{h} \leq 2 \bm{h}^T \bm{H}^{-1} \bm{h},
\end{align*}
where the inequality is by Equation (\ref{eq:ntk_to_empirical_kernel}).
\end{proof}

\subsection{Proof of Lemma \ref{lemma:W_t_away_from_init}}
We present the following lemma that will be used in this proof. 

\begin{lem}[{\citet[Lemma~B.3]{cao2019generalization}}]
There exist an absolute constant $C_3 > 0$ such that for any $\delta \in (0,1)$ over the randomness of $\bm{W}^{(0)}$, if $\omega$ satisfies 
\begin{align*}
    \Theta(m^{-3/2} L^{-3/2} \log^{3/2}(nKL^2/\delta)) \leq \omega \leq \Theta( L^{-6} \log^{-3} m),
\end{align*}
with probability at least $1 - \delta$, it holds uniformly for all $\bm{W} \in \mathcal{B}(\bm{W}^{(0)}; \omega)$, $i \in [nK], l \in [L]$ that 
\begin{align*}
    \| \nabla_l f_{\bm{W}}(\bm{x}^{(i)}) \|_F \leq C_3 \cdot \sqrt{m}.
\end{align*}
\label{lemma:bounded_gradient}
\end{lem}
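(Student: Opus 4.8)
The plan is to bound $\|\nabla_l f_{\bm{W}}(\bm{x}^{(i)})\|_F$ by factoring the layer-wise gradient into a forward activation and a backward error signal, controlling each factor first at the initialization $\bm{W}^{(0)}$ and then under the $\omega$-perturbation. Fix an input $\bm{x} = \bm{x}^{(i)}$ and write the forward pass as $\bm{a}_0 = \bm{x}$, $\bm{a}_l = \sigma(\bm{W}_l \bm{a}_{l-1})$ for $l \in [L-1]$, with diagonal activation-pattern matrices $\bm{D}_l = \mathrm{diag}(\sigma'(\bm{W}_l \bm{a}_{l-1}))$. By the chain rule, for $l < L$ the layer-$l$ gradient is the rank-one matrix $\nabla_l f_{\bm{W}}(\bm{x}) = \sqrt{m}\, \bm{b}_l\, \bm{a}_{l-1}^\top$, where $\bm{b}_l := \bm{D}_l \big(\prod_{r=l+1}^{L-1} \bm{W}_r^\top \bm{D}_r\big)\bm{W}_L$ is the back-propagated signal in the standard ordering (the boundary case $l=L$ gives $\nabla_L f_{\bm{W}}(\bm{x}) = \sqrt{m}\,\bm{a}_{L-1}$, which is covered directly by the forward bound below). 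Since the Frobenius norm of an outer product factorizes, $\|\nabla_l f_{\bm{W}}(\bm{x})\|_F = \sqrt{m}\,\|\bm{b}_l\|_2\,\|\bm{a}_{l-1}\|_2$, so it suffices to show $\|\bm{a}_{l-1}\|_2 = O(1)$ and $\|\bm{b}_l\|_2 = O(1)$ uniformly over $l \in [L]$, $i \in [nK]$, and $\bm{W} \in \mathcal{B}(\bm{W}^{(0)};\omega)$.

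First I would establish both bounds at initialization. Under the NTK-scale initialization in Algorithm~\ref{alg:neuralcb-sgd} (entries of variance $\Theta(1/m)$) together with the unit-norm inputs of Assumption~\ref{assumption:ntk_input_data}, each pre-activation coordinate at layer $l$ is a centered Gaussian whose variance is proportional to $\|\bm{a}_{l-1}^{(0)}\|_2^2/m$; summing the $m$ coordinates and invoking a standard sub-exponential concentration bound shows $\|\bm{a}_l^{(0)}\|_2 = \Theta(1)$ with high probability, uniformly across layers after a union bound over $l$ and $i$. The symmetric block structure of $\bm{W}^{(0)}$ (with input duplication from the second part of Assumption~\ref{assumption:ntk_input_data}) is exactly what keeps these forward norms bounded above and below by absolute constants independent of $L$ and $m$, and simultaneously forces $f_{\bm{W}^{(0)}}(\bm{x}^{(i)})=0$. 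The same layer-by-layer concentration, run backward, gives $\|\bm{b}_l^{(0)}\|_2 = O(1)$: one has $\|\bm{W}_L^{(0)}\|_2 = O(1)$, and each factor $\bm{W}_r^{\top}\bm{D}_r$ approximately preserves the norm of the fixed backward vector, so the product over the $O(L)$ layers does not blow up. Combining the two factors yields $\|\nabla_l f_{\bm{W}^{(0)}}(\bm{x}^{(i)})\|_F = O(\sqrt{m})$.

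Next I would transfer these bounds from $\bm{W}^{(0)}$ to an arbitrary $\bm{W}\in\mathcal{B}(\bm{W}^{(0)};\omega)$ by a perturbation argument. Writing $\bm{a}_l = \bm{a}_l^{(0)} + \Delta\bm{a}_l$ and peeling off one layer at a time, the $1$-Lipschitzness of $\sigma$ gives a recursion $\|\Delta\bm{a}_l\|_2 \le \|\bm{W}_l^{(0)}\|_2\,\|\Delta\bm{a}_{l-1}\|_2 + \|\bm{W}_l-\bm{W}_l^{(0)}\|_2\,\|\bm{a}_{l-1}\|_2$, which unrolls to $\|\Delta\bm{a}_l\|_2 = O(\omega\,\mathrm{poly}(L))$ because $\|\bm{W}_l^{(0)}\|_2 = O(1)$ and $\|\bm{W}_l-\bm{W}_l^{(0)}\|_2 \le \|\bm{W}_l-\bm{W}_l^{(0)}\|_F \le \omega$. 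An analogous recursion controls the perturbation of the backward signal $\bm{b}_l$. The prescribed upper bound $\omega \le \Theta(L^{-6}\log^{-3} m)$ is precisely the regime in which these accumulated perturbations stay below a constant, so the forward and backward norms remain $\Theta(1)$ at $\bm{W}$; hence $\|\nabla_l f_{\bm{W}}(\bm{x}^{(i)})\|_F \le C_3\sqrt{m}$ for a suitable absolute constant $C_3$, and a final union bound over $i \in [nK]$, $l\in[L]$ closes the argument.

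The main obstacle I anticipate is the non-smoothness of the ReLU: the activation-pattern matrices $\bm{D}_l$ can flip entries as $\bm{W}$ moves within the ball, so the clean layerwise recursions above are not literally valid and must be replaced by bounds that absorb the error from sign changes. The standard remedy is to bound the number of coordinates whose activation pattern differs between $\bm{W}$ and $\bm{W}^{(0)}$ (a count of order $\omega^{2/3} L\, m$), and to show that the norm contribution of those flipped coordinates is lower order relative to the $\Theta(1)$ forward and backward norms, provided $\omega$ lies below the stated threshold. This is exactly the content of the overparameterized-network stability estimates of \citet{allen2019convergence,cao2019generalization}, whose Lemma~B.3 this statement restates; the lower bound on $\omega$ in the hypothesis is what guarantees the underlying concentration events hold with the claimed probability $1-\delta$.
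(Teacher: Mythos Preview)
The paper does not supply its own proof of this lemma: it is stated verbatim as a citation of \citet[Lemma~B.3]{cao2019generalization} and invoked as a black box. Consequently there is no ``paper's proof'' to compare against beyond the original source.

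Your sketch is a faithful outline of how that result is actually proved in \citet{cao2019generalization} and \citet{allen2019convergence}: factor the layer-$l$ gradient as $\sqrt{m}\,\bm{b}_l\,\bm{a}_{l-1}^\top$, control the forward activations $\|\bm{a}_{l-1}\|_2=\Theta(1)$ and backward signals $\|\bm{b}_l\|_2=O(1)$ at initialization via layerwise concentration, and then propagate both bounds to the $\omega$-ball by a perturbation recursion, handling the ReLU sign-flips by bounding the number of changed activation coordinates. You also correctly identify the role of the upper bound on $\omega$ (keeping the accumulated perturbation below a constant) and of the lower bound (making the union-bounded concentration events hold with probability $1-\delta$). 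One minor refinement: the statement that ``each factor $\bm{W}_r^{\top}\bm{D}_r$ approximately preserves the norm'' needs the caveat that this holds for the \emph{specific} backward vector at hand, not as an operator-norm bound (the spectral norm of $\bm{W}_r^{(0)}$ is $\Theta(1)$ but strictly larger than $1$, so a naive product over $L$ layers would blow up); the original proofs handle this by tracking the norm of the running vector rather than composing operator norms. With that caveat, your proposal matches the approach of the cited reference.
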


\begin{proof}[Proof of Lemma \ref{lemma:W_t_away_from_init}]
Let $\delta \in (0,1)$. Let $L_t(\bm{W}) = \frac{1}{2}(f_{\bm{W}}(\bm{x}_{t, a_t}) - r_t)^2  + \frac{m \lambda}{2} \| \bm{W} - \bm{W}^{(0)} \|_F^2$ be the regularized squared loss function on the data point $(\bm{x}_{t, a_t}, r_t)$. Recall that $\bm{W}^{(t)} = \bm{W}^{(t-1)} - \eta_t \nabla L_t(\bm{W}^{(t-1)})$. By Hoelfding's inequality and that $r_t$ is $R$-subgaussian, for any $t \in [n]$, with probability at least $1 - \delta$, we have 
\begin{align}
    |r_t| &\leq |\mathbb{E}_t[r_t]| + R \sqrt{2 \log(2/\delta)} = |\mathbb{E} \left[ \mathbb{E} [r_t | \mathcal{D}_{t-1}, \bm{x}_t, a_t ] \right]| + R \sqrt{2 \log(2/\delta)} \nonumber\\
    &= \mathbb{E} \left[ h(\bm{x}_{t, a_t}) |  \mathcal{D}_{t-1}, \bm{x}_t, a_t \right ] + R \sqrt{2 \log(2/\delta)} \leq 1 + R \sqrt{2 \log(2/\delta)}.
    \label{eq:bound_r_t}
\end{align}

By union bound and (\ref{eq:bound_r_t}), for any sequence $\{\omega_t\}_{t \in [n]}$ such that $\Theta( m^{-3/2} L^{-3/2} \log^{3/2}(3n^2KL^2/\delta)) \leq \omega_t \leq \Theta( L^{-6} \log^{-3} m ) \land \Theta( L^{-6} \log^{-3/2} m), \forall t \in [n]$, with probability at least $1 - \delta$, it holds uniformly for all $\bm{W} \in \mathcal{B}(\bm{W}^{(0)}; \omega_t), l \in [L], t \in [n]$ that
\begin{align}
    \|\nabla_l L_t(\bm{W})\|_F &= \| \nabla_l f_{\bm{W}}(\bm{x}_{t, a_t}) (f_{\bm{W}}(\bm{x}_{t, a_t}) - r_t) + m\lambda (\bm{W}_l - \bm{W}^{(0)}_l) \|_F \nonumber\\
    &= \| \nabla_l f_{\bm{W}}(\bm{x}_{t, a_t}) (f_{\bm{W}}(\bm{x}_{t, a_t}) - f_{\bm{W}^{(0)}}(\bm{x}_{t, a_t}) - \langle \nabla f_{\bm{W}^{(0)}}(\bm{x}_{t, a_t}), \bm{W} - \bm{W}^{(0)} \rangle) \nonumber\\ 
    &+ \nabla_l f_{\bm{W}}(\bm{x}_{t, a_t}) f_{\bm{W}^{(0)}}(\bm{x}_{t, a_t}) + \nabla_l f_{\bm{W}}(\bm{x}_{t, a_t}) \langle \nabla f_{\bm{W}^{(0)}}(\bm{x}_{t, a_t}), \bm{W} - \bm{W}^{(0)} \rangle \nonumber\\ 
    &- r_t \nabla_l f_{\bm{W}}(\bm{x}_{t, a_t}) + m\lambda (\bm{W}_l - \bm{W}^{(0)}_l) \|_F \nonumber\\ 
    &\leq \| \nabla_l f_{\bm{W}}(\bm{x}_{t, a_t}) (f_{\bm{W}}(\bm{x}_{t, a_t}) - f_{\bm{W}^{(0)}}(\bm{x}_{t, a_t}) - \langle \nabla f_{\bm{W}^{(0)}}(\bm{x}_{t, a_t}), \bm{W} - \bm{W}^{(0)} \rangle) \|_F \nonumber \\ 
    &+ \underbrace{\|\nabla_l f_{\bm{W}}(\bm{x}_{t, a_t}) f_{\bm{W}^{(0)}}(\bm{x}_{t, a_t})\|_F}_{=0} + \|\nabla_l f_{\bm{W}}(\bm{x}_{t, a_t}) \langle \nabla f_{\bm{W}^{(0)}}(\bm{x}_{t, a_t}), \bm{W} - \bm{W}^{(0)} \rangle \|_F \nonumber\\ 
    &+ |r_t|  \| \nabla_l f_{\bm{W}}(\bm{x}_{t, a_t}) \|_F + m\lambda \| \bm{W}_l - \bm{W}^{(0)}_l \|_F \nonumber \\
    &\leq C_1 C_3 \omega_t^{4/3} L^3 m \log^{1/2} m + C_3 m^{1/2} (1 + R \sqrt{2 \log(6n/\delta)}) + C_3^2 L m \omega + m \lambda \omega_t,
    \label{eq:loss_grad_bound}
\end{align}
where the first inequality is triangle's inequality, and the second inequality is by Lemma \ref{lemma:nn_almost_linear_near_init_in_training_data}, Lemma \ref{lemma:bounded_gradient} and (\ref{eq:bound_r_t}). 


We now prove by induction that under the same event that (\ref{eq:loss_grad_bound}) with $\omega_t = \sqrt{\frac{t}{m \lambda L }}$ holds, 
$\bm{W}^{(t)} \in \mathcal{B}(\bm{W}^{(0)}; \sqrt{\frac{t}{m \lambda L }}), \forall t \in [n]$. It trivially holds for $t = 0$. Assume $\bm{W}^{(i)} \in \mathcal{B}(\bm{W}^{(0)}; \sqrt{\frac{i}{m \lambda L }}), \forall i \in [t-1]$, we will prove that $\bm{W}^{(t)} \in \mathcal{B}(\bm{W}^{(0)}; \sqrt{\frac{t}{m \lambda L }})$. Indeed, it is easy to verify that there exist absolute constants $\{C_i\}_{i=1}^2 > 0$ such that if $m$ satisfies the inequalities in Lemma \ref{lemma:W_t_away_from_init}, then $\Theta( m^{-3/2} L^{-3/2} \log^{3/2}(3n^2KL^2/\delta)) \leq \sqrt{\frac{i}{m\lambda L}} \leq \Theta( L^{-6} \log^{-3} m) \land \Theta( L^{-6} \log^{-3/2} m), \forall i \in [n]$. Thus, under the same event that (\ref{eq:loss_grad_bound}) holds, we have
\begin{align*}
    \| \bm{W}^{(t)}_l -  \bm{W}^{(0)}_l\|_F &\leq \sum_{i=1}^t \| \bm{W}^{(i)}_l -  \bm{W}^{(i-1)}_l\|_F =  \sum_{i=1}^t \eta_i \|  \nabla_l  L_i(\bm{W}^{(i-1)}) \|_F \\ 
    &\leq \iota t C_1 C_3 m^{1/3} \lambda^{-2/3} L^{7/3} n^{1/6} \log^{1/2} m + \iota t m^{-1/2} \lambda^{-1/2} L^{-1/2} (C_3^2 L + \lambda) \\
    &+ 2 C_3 \iota  \sqrt{t} m^{1/2} (1 +  R \sqrt{2} \log^{1/2}(6n/\delta)) \\ 
    &\leq \frac{1}{2}\sqrt{\frac{t}{m \lambda L}} + \frac{1}{2}\sqrt{\frac{t}{m \lambda L}} = \sqrt{\frac{t}{m \lambda L}},
\end{align*}
where the first inequality is by triangle's inequality, the first equation is by the SGD update for each $\bm{W}^{(i)}$, the second inequality is by (\ref{eq:loss_grad_bound}) and the last inequality is due to $\iota$ be chosen as 
\begin{align*}
    \begin{cases}
    \iota^{-1} \geq 4 C_3 m \lambda^{1/2} L^{1/2} \left( 1 +R\sqrt{2} \log^{1/2}(6n/\delta) \right) \\ 
    \iota^{-1} \geq 2 n^{1/2} m^{1/2} \lambda^{1/2} L^{1/2} \left( C_1 C_3 m^{1/3} \lambda^{-2/3} L^{7/3} n^{1/6} \log^{1/2}m + m^{-1/2} \lambda^{-1/2} L^{-1/2}(C_3^2 L + \lambda)\right),
    \end{cases}
\end{align*}
which is satisfied for $\iota^{-1} = \Omega(n^{2/3} m^{5/6} \lambda^{-1/6} L^{17/6} \log^{1/2} m) \lor \Omega(R m \lambda^{1/2} \log^{1/2}( n / \delta))$.

For the second part of the lemma, we have 
\begin{align*}
    \| \bm{\Lambda}_t \|_2 &= \| \lambda \bm{I} + \sum_{i=1}^t \vect(\nabla f_{\bm{W}^{(i-1)}}(\bm{x}_{i, a_i}) ) \cdot \vect(\nabla f_{\bm{W}^{(i-1)}}(\bm{x}_{i, a_i}) )^T / m\|_2 \\
    &\leq \lambda + \sum_{i=1}^t \| \nabla f_{\bm{W}^{(i-1)}}(\bm{x}_{i, a_i}) \|_F^2 / m \leq \lambda + C_3 ^2 tL,
\end{align*}
where the first inequality is by triangle's inequality and the second inequality is by $\|\bm{W}^{(i)} - \bm{W}^{(0)} \|_F \leq \sqrt{\frac{i}{m\lambda }}, \forall i \in [n]$ and Lemma \ref{lemma:bounded_gradient}. 
\end{proof}

\subsection{Proof of Lemma \ref{lemma:bound_confidence_direction}}
\begin{proof}[Proof of Lemma \ref{lemma:bound_confidence_direction}]
Let $\mathcal{E}(\delta)$ be the event in which the following $(n+2)$ events hold simultaneously: the events in which Lemma \ref{lemma:grad_difference_away_from_init} for each $\omega \in \left \{ \sqrt{\frac{i}{ m \lambda L}} : 1 \leq i \leq n \right\}$ holds, the event in which Lemma \ref{lemma:empirical_ntk} for $\epsilon = (nK)^{-1}$ holds, and the event in which Lemma \ref{lemma:bounded_gradient} holds. 

Under event $\mathcal{E}(\delta)$, we have 
\begin{align*}
    \| \nabla f_{\bm{W}^{(t-1)}}(\bm{x}_{t,a_t} ) \cdot m^{-1/2} \|_F \leq C_3 L^{1/2}, \forall t \in [n].
\end{align*}
Thus, by \citep[Lemma~11]{DBLP:conf/nips/Abbasi-YadkoriPS11}, if we choose $\lambda \geq \max \{1, c_6^2 L \}$, we have 
\begin{align*}
    \sum_{i=1}^t  \| \nabla f_{\bm{W}^{(i-1)}}(\bm{x}_{i,a_i}  ) \cdot m^{-1/2} \|_{ \bm{\Lambda}^{-1}_{i-1} }^2
    &\leq 2 \log \frac{\det (\bm{\Lambda}_t)}{ \det (\lambda \bm{I})}. 
\end{align*}

For the second part of Lemma \ref{lemma:bound_confidence_direction}, for any $t \in [n]$, we define 
\begin{align*}
    \bar{\bm{M}}_t &= m^{-1/2} \cdot [ \vect(\nabla f_{\bm{W}^{(0)}}(\bm{x}_{1, a_1}) ), \ldots, \vect(\nabla f_{\bm{W}^{(0)}}(\bm{x}_{t, a_t}) ) ] \in \mathbb{R}^{p \times t}, \\
    \bm{M}_t &= m^{-1/2} \cdot [ \vect(\nabla f_{\bm{W}^{(t-1)}}(\bm{x}_{1, a_1}) ), \ldots, \vect(\nabla f_{\bm{W}^{(t-1)}}(\bm{x}_{t, a_t}) ) ] \in \mathbb{R}^{p \times t}.
\end{align*}
We have $\bar{\bm{\Lambda}}_t = \lambda \bm{I} + \bar{\bm{M}}_t \bar{\bm{M}}_t^T$ and $\bm{\Lambda}_t = \lambda \bm{I} + \bm{M}_t \bm{M}_t^T$, and 
\begin{align*}
     &\bigg | \log \frac{\text{det} (\bm{\Lambda}_t)}{ \text{det}(\lambda \bm{I})} - \log \frac{\text{det} (\bar{\bm{\Lambda}}_t)}{ \text{det}(\lambda \bm{I})} \bigg | = | \log \det( \bm{I} + \bm{M}_t \bm{M}_t^T / \lambda) - \log \det( \bm{I} + \bar{\bm{M}}_t \bar{\bm{M}}_t^T / \lambda) | \\
     &= | \log \det( \bm{I} + \bm{M}_t^T \bm{M}_t  / \lambda) - \log \det( \bm{I} + \bar{\bm{M}}_t^T \bar{\bm{M}}_t  / \lambda) | \\ 
     &\leq \max\{ \| \langle  (\bm{I} + \bm{M}_t^T \bm{M}_t  / \lambda)^{-1}, \bm{M}_t^T \bm{M}_t - \bar{\bm{M}}_t^T \bar{\bm{M}}_t \rangle \|, \| \langle  (\bm{I} + \bar{\bm{M}}_t^T \bar{\bm{M}}_t  / \lambda)^{-1}, \bm{M}_t^T \bm{M}_t - \bar{\bm{M}}_t^T \bar{\bm{M}}_t \rangle \| \} \\ 
     &= \| \langle  (\bm{I} + \bm{M}_t^T \bm{M}_t  / \lambda)^{-1}, \bm{M}_t^T \bm{M}_t - \bar{\bm{M}}_t^T \bar{\bm{M}}_t \rangle \|\\
     &\leq \| (\bm{I} + \bm{M}_t^T \bm{M}_t  / \lambda)^{-1} \|_F \cdot \| \bm{M}_t^T \bm{M}_t - \bar{\bm{M}}_t^T \bar{\bm{M}}_t \|_F \\ 
     &\leq \sqrt{t} \| (\bm{I} + \bm{M}_t^T \bm{M}_t  / \lambda)^{-1} \|_2 \cdot \| \bm{M}_t^T \bm{M}_t - \bar{\bm{M}}_t^T \bar{\bm{M}}_t \|_F \\ 
     &\leq \sqrt{t}  \| \bm{M}_t^T \bm{M}_t - \bar{\bm{M}}_t^T \bar{\bm{M}}_t \|_F \\
     &\leq \sqrt{t} t \max_{1 \leq i,j \leq t} m^{-1} \cdot \bigg | \langle \nabla f_{\bm{W}^{(t-1)}}(\bm{x}_{i, a_i} ), \nabla f_{\bm{W}^{(t-1)}}(\bm{x}_{j, a_j} ) \rangle - \langle \nabla f_{\bm{W}^{(0)}}(\bm{x}_{i, a_i} ), \nabla f_{\bm{W}^{(0)}}(\bm{x}_{j, a_j} ) \rangle
     \bigg | \\ 
     &\leq t^{3/2} m^{-1} \max_{1 \leq i,j \leq t} \bigg( \bigg | \langle \nabla f_{\bm{W}^{(t-1)}}(\bm{x}_{i, a_i} ) - \nabla f_{\bm{W}^{(0)}}(\bm{x}_{i, a_i} ), \nabla f_{\bm{W}^{(t-1)}}(\bm{x}_{j, a_j} ) \rangle 
     \bigg | \\
     &+ \bigg | \langle \nabla f_{\bm{W}^{(t-1)}}(\bm{x}_{j, a_j} ) - \nabla f_{\bm{W}^{(0)}}(\bm{x}_{j, a_j} ), \nabla f_{\bm{W}^{(t-1)}}(\bm{x}_{i, a_i} ) \rangle 
     \bigg |
     \bigg) \\
     &\leq 2 t^{3/2} m^{-1} \max_{i} \| \nabla f_{\bm{W}^{(t-1)}}(\bm{x}_{i, a_i} ) - \nabla f_{\bm{W}^{(0)}}(\bm{x}_{i, a_i} ) \|_F \cdot \max_{i} \| \nabla f_{\bm{W}^{(t-1)}}(\bm{x}_{i, a_i} )  \|_F \\
     &\leq 2 C_2 C_3^2 t^{3/2} m^{-1/6} (\log m)^{1/2} L^{23/6} \lambda^{-1/6} 
\end{align*}
where the second equality is by $\det(\bm{I} + \bm{A} \bm{A}^T) = \det(\bm{I} + \bm{A}^T \bm{A})$, the first inequality is by that $\log\det$ is {concave}, {the third equality is assumed without loss of generality}, the second inequality is by that $\langle \bm{A}, \bm{B} \rangle \leq \| \bm{A} \|_F \| \bm{B} \|_F$, the third inequality is by that $\| \bm{A}\|_F \leq \sqrt{t} \| \bm{A}\|_2$ for $\bm{A} \in \mathbb{R}^{t \times t}$, the fourth inequality is by that  $\bm{I} + \bm{M}_t^T \bm{M}_t  / \lambda \succeq \bm{I}$, the fifth inequality is by that $\| \bm{A} \|_F \leq t \| \bm{A} \|_{\infty} $ for $\bm{A} \in \mathbb{R}^{t \times t}$, the sixth inequality is by the triangle inequality, and the last inequality is by Lemma \ref{lemma:grad_difference_away_from_init}, and Lemma \ref{lemma:bounded_gradient}. 

The third part of Lemma \ref{lemma:bound_confidence_direction} directly follows the argument in \citep[(B.18)]{zhou2019neural} and uses Lemma \ref{lemma:empirical_ntk} for $\epsilon = (nK)^{-1}$. 

Finally, it is easy to verify that the condition of $m$ in Lemma \ref{lemma:bound_confidence_direction} satisfies the condition of $m$ for $\mathcal{E}(\delta/(n+2))$, and union bound we have $\mathbb{P}(\mathcal{E}(\delta/(n+2))) \geq 1 - \delta$.
\end{proof}

\renewcommand{\thesection}{D}
\section{Details of baseline algorithms in Section \ref{section:experiment}}
\label{section:algo_baseline}
In this section, we present the details of each representative baseline methods and of the B-mode version of NeuraLCB in Section \ref{section:experiment}. We summarize the baseline methods in Table \ref{tab:summary_baseline}.

\begin{table}[h!]
    \caption{A summary of the baseline methods.}
    \label{tab:summary_baseline}
    \centering
    \begin{tabular}{c|c|c|c|}
    \hline
       Baseline  & Algorithm & Type & Function approximation \\
       \hline 
       \hline
        LinLCB & Algorithm \ref{alg:linlcb} & Pessimism & Linear \\
        KernLCB & Algorithm \ref{alg:kernlcb} & Pessimism & RKSH \\ 
        NeuralLinLCB & Algorithm \ref{alg:neurallinlcb} & Pessimism & Linear w/ fixed neural features \\ 
        NeuralLinGreedy & Algorithm \ref{alg:neurallingreedy} & Greedy & Linear w/ fixed neural features \\ 
        NeuralGreedy & Algorithm \ref{alg:neuralgreedy} & Greedy & Neural networks \\ 
        \hline
    \end{tabular}
\end{table}

\begin{algorithm}[h]
\caption{\textbf{LinLCB}}
\begin{algorithmic}[1]

\Require  Offline data $\mathcal{D}_n = \{(\bm{x}_t, a_t, r_t)\}_{t=1}^n$, reg. parameter $\lambda > 0$, confidence parameters $\beta > 0$.

\State $\bm{\Lambda}_n \leftarrow \lambda \bm{I} + \sum_{t=1}^n \bm{x}_{t, a_t} \bm{x}_{t, a_t}^T $
\State $\hat{\bm{\theta}}_n \leftarrow \bm{\Lambda}_n^{-1} \sum_{t=1}^n \bm{x}_{t,a_t} r_t$
\State $L(\bm{u}) \leftarrow \langle \hat{\bm{\theta}}_n, \bm{u} \rangle - \beta \| \bm{u} \|_{\bm{\Lambda}_n^{-1}}, \forall \bm{u} \in \mathbb{R}^d$
\Ensure  $\hat{\pi}(\bm{x}) \leftarrow \argmax_{a \in [K]} L(\bm{x}_a)$
\end{algorithmic}
\label{alg:linlcb}
\end{algorithm}

\begin{algorithm}[h]
\caption{\textbf{KernLCB}}
\begin{algorithmic}[1]

\Require  Offline data $\mathcal{D}_n = \{(\bm{x}_t, a_t, r_t)\}_{t=1}^n$, reg. parameter $\lambda > 0$, confidence parameters $\beta > 0$, kernel $k: \mathbb{R}^d \times \mathbb{R}^d \rightarrow \mathbb{R}$.
\State $\bm{k}_n(\bm{u}) \leftarrow [k(\bm{u}, \bm{x}_{1,a_1}), \ldots, k(\bm{u}, \bm{x}_{n,a_n})]^T, \forall \bm{u} \in \mathbb{R}^d$
\State $\bm{K}_n \leftarrow [k(\bm{x}_{i,a_i}, \bm{x}_{j,a_j})]_{1 \leq i,j \leq n}$
\State $\bm{y}_n \leftarrow [r_1, \ldots, r_n]^T$
\State $L(\bm{u}) \leftarrow \bm{k}_n(\bm{u})^T (\bm{K}_n + \lambda I)^{-1} \bm{y}_n - \beta \sqrt{k(\bm{u},\bm{u}) - \bm{k}_n^T(\bm{u}) (\bm{K}_n + \lambda I)^{-1} \bm{k}_n}, \forall \bm{u} \in \mathbb{R}^d$
\Ensure  $\hat{\pi}(\bm{x}) \leftarrow \argmax_{a \in [K]} L(\bm{x}_a)$ 
\end{algorithmic}
\label{alg:kernlcb}
\end{algorithm}

\begin{algorithm}[h!]
\caption{\textbf{NeuralLinLCB}}
\begin{algorithmic}[1]

\Require  Offline data $\mathcal{D}_n = \{(\bm{x}_t, a_t, r_t)\}_{t=1}^n$, regularization parameter $\lambda > 0$, confidence parameters $\beta > 0$.

\State Initialize the same neural network with the same initialization scheme as in NeuraLCB to obtain the neural network function $f_{\bm{W}^{(0)}}$ at initialization
\State $\phi(\bm{u}) \leftarrow \vect(\nabla f_{\bm{W}^{(0)}}(\bm{u})) \in \mathbb{R}^p, \forall \bm{u} \in \mathbb{R}^d$
\State $\bm{\Lambda}_n \leftarrow \lambda \bm{I} + \sum_{t=1}^n \phi(\bm{x}_{t, a_t}) \phi(\bm{x}_{t, a_t})^T $
\State $\hat{\bm{\theta}}_n \leftarrow \bm{\Lambda}_n^{-1} \sum_{t=1}^n \phi(\bm{x}_{t, a_t}) r_t$
\State $L(\bm{u}) \leftarrow \langle \hat{\bm{\theta}}_n, \phi(\bm{u}) \rangle - \beta \| \phi(\bm{u}) \|_{\bm{\Lambda}_n^{-1}}, \forall \bm{u} \in \mathbb{R}^d$
\Ensure  $\hat{\pi}(\bm{x}) \leftarrow \argmax_{a \in [K]} L(\bm{x}_a)$
\end{algorithmic}
\label{alg:neurallinlcb}
\end{algorithm}

\begin{algorithm}[h!]
\caption{\textbf{NeuralLinGreedy}}
\begin{algorithmic}[1]

\Require  Offline data $\mathcal{D}_n = \{(\bm{x}_t, a_t, r_t)\}_{t=1}^n$, regularization parameter $\lambda > 0$.

\State Initialize the same neural network with the same initialization scheme as in NeuraLCB to obtain the neural network function $f_{\bm{W}^{(0)}}$ at initialization
\State $\phi(\bm{u}) \leftarrow \vect(\nabla f_{\bm{W}^{(0)}}(\bm{u})) \in \mathbb{R}^p, \forall \bm{u} \in \mathbb{R}^d$
\State $\hat{\bm{\theta}}_n \leftarrow \bm{\Lambda}_n^{-1} \sum_{t=1}^n \phi(\bm{x}_{t, a_t}) r_t$
\State $L(\bm{u}) \leftarrow \langle \hat{\bm{\theta}}_n, \phi(\bm{u}) \rangle, \forall \bm{u} \in \mathbb{R}^d$
\Ensure  $\hat{\pi}(\bm{x}) \leftarrow \argmax_{a \in [K]} L(\bm{x}_a)$ 
\end{algorithmic}
\label{alg:neurallingreedy}
\end{algorithm}

\begin{algorithm}[h!]
\caption{\textbf{NeuralGreedy}}
\begin{algorithmic}[1]

\Require  Offline data $\mathcal{D}_n = \{(\bm{x}_t, a_t, r_t)\}_{t=1}^n$, step sizes $ \{\eta_t\}_{t=1}^n$ , regularization parameter $\lambda > 0$.

\State Initialize $\bm{W}^{(0)}$ as follows: set $\bm{W}_l^{(0)} = [\bar{\bm{W}}_l, \hspace{0.1cm} \bm{0};  \bm{0}, \hspace{0.1cm} \bar{\bm{W}}_l ], \forall l \in [L-1]$ where each entry of $\bar{\bm{W}}_l$ is generated independently from $\mathcal{N}(0, 4/m)$, and set $\bm{W}_L^{(0)} = [\bm{w}^T, -\bm{w}^T]$ where each entry of $\bm{w}$ is generated independently from $\mathcal{N}(0, 2/m)$. 


\For{$t = 1, \ldots, n$}
\State Retrieve $(\bm{x}_t, a_t, r_t)$ from $\mathcal{D}_n$. 
\State $\hat{\pi}_t(\bm{x}) \leftarrow \argmax_{a \in [K]}f_{\bm{W}^{(t-1)}}(\bm{x}_a ), \forall \bm{x}$. 


\State $\bm{W}^{(t)} \leftarrow \bm{W}^{(t-1)} - \eta_t \nabla \mathcal{L}_t(\bm{W}^{(t-1)})$ where $\mathcal{L}_t(\bm{W}) = \frac{1}{2}(f_{\bm{W}}(\bm{x}_{t,a_t}) - r_t)^2 + \frac{m \lambda}{2} \| \bm{W} - \bm{W}^{(0)} \|^2_F$. 
\EndFor

\Ensure  Randomly sample $\hat{\pi}$ uniformly from $\{\hat{\pi}_1, \ldots, \hat{\pi}_n\}$. 
\end{algorithmic}
\label{alg:neuralgreedy}
\end{algorithm}

\begin{algorithm}[h!]
\caption{\textbf{NeuraLCB (B-mode)}}
\begin{algorithmic}[1]

\Require  Offline data $\mathcal{D}_n = \{(\bm{x}_t, a_t, r_t)\}_{t=1}^n$, step sizes $ \{\eta_t\}_{t=1}^n$ , regularization parameter $\lambda > 0$, confidence parameters $\{ \beta_t\}_{t=1}^n$, batch size $B > 0$, epoch number $J > 0$. 

\State Initialize $\bm{W}^{(0)}$ as follows: set $\bm{W}_l^{(0)} = [\bar{\bm{W}}_l, \hspace{0.1cm} \bm{0};  \bm{0}, \hspace{0.1cm} \bar{\bm{W}}_l ], \forall l \in [L-1]$ where each entry of $\bar{\bm{W}}_l$ is generated independently from $\mathcal{N}(0, 4/m)$, and set $\bm{W}_L^{(0)} = [\bm{w}^T, -\bm{w}^T]$ where each entry of $\bm{w}$ is generated independently from $\mathcal{N}(0, 2/m)$. 

\State $\bm{\Lambda}_0 \leftarrow \lambda \bm{I}$. 

\For{$t = 1, \ldots, n$}
\State Retrieve $(\bm{x}_t, a_t, r_t)$ from $\mathcal{D}_n$. 
\State $L_t(\bm{u}) \leftarrow f_{\bm{W}^{(t-1)}}(\bm{u} ) - \beta_{t-1} \| \nabla f_{\bm{W}^{(t-1)}}(\bm{u} ) \cdot m^{-1/2} \|_{ \bm{\Lambda}^{-1}_{t-1} }, \forall \bm{u} \in \mathbb{R}^d$ 
\State $\hat{\pi}_t(\bm{x}) \leftarrow \argmax_{a \in [K]} L_t(\bm{x}_a)$, for all $\bm{x} = \{\bm{x}_a \in \mathbb{R}^d: a \in [K]\}$. 

\State $\bm{\Lambda}_t \leftarrow \bm{\Lambda}_{t-1} + \text{vec}(\nabla f_{\bm{W}^{(t-1)}}(\bm{x}_{t, a_t}) ) \cdot \text{vec}(\nabla f_{\bm{W}^{(t-1)}}(\bm{x}_{t, a_t}) )^T / m $. 

\State $\tilde{\bm{W}}^{(0)} \leftarrow \bm{W}^{(t-1)}$
\For{$j = 1, \ldots, J$}

    \State Sample a batch of data $B_t = \{\bm{x}_{t_q, a_{t_q}}, r_{t_q}\}_{q=1}^B$ from $\mathcal{D}_t$
    
    \State $\mathcal{L}_t^{(j)}(\bm{W}) \leftarrow \sum_{q =1}^B \frac{1}{2 B}(f_{\bm{W}}(\bm{x}_{t_q,a_{t_q}}) - r_{t_q})^2 + \frac{m \lambda}{2} \| \bm{W} - \bm{W}^{(0)} \|^2_F$
    
    \State $\tilde{\bm{W}}^{(j)} \leftarrow \tilde{\bm{W}}^{(j-1)} - \eta_t \nabla \mathcal{L}_t^{(j)}(\tilde{\bm{W}}^{(j-1)})$

\EndFor

\State $\bm{W}^{(t)} \leftarrow \tilde{\bm{W}}^{(J)} $
\EndFor

\Ensure  Randomly sample $\hat{\pi}$ uniformly from $\{\hat{\pi}_1, \ldots, \hat{\pi}_n\}$. 
\end{algorithmic}
\label{alg:neuralcb-sgd-bmode}
\end{algorithm}

\newpage
\renewcommand{\thesection}{E}
\section{Datasets}
\label{sec:datasets}
We present a detailed description about the UCI datasets used in our experiment. 
\begin{itemize}
    \item \textbf{Mushroom data}: Each sample represents a set of attributes of a mushroom. There are two actions to take on each mushroom sample: eat or no eat. Eating an editable mushroom generates a reward of $+5$ while eating
    a poisonous mushroom yields a reward of $+5$ with probability $0.5$ and a reward of $-35$ otherwise. No eating gives a reward of $0$. 
    \item \textbf{Statlog data}: The shuttle dataset contains the data about a space shuttle flight where the goal is to predict the state of the radiator subsystem of the shuttle. There are total $K=7$ states to predict where approximately $80\%$ of the data belongs to one state. A learner receives a reward of $1$ if it selects the correct state and $0$ otherwise. 
    \item \textbf{Adult data}: The Adult dataset contains personal information from the US Census Bureau database. Following \citep{riquelme2018deep}, we use the $K=14$ different occupations as actions and $d = 94$ covariates as contexts. As in the Statlog data, a learner obtains a reward of $1$ for making the right prediction, and $0$ otherwise.
    
    \item {MNIST data: The MNIST data contains images of various handwritten digits from $0$ to $9$. We use $K = 10$ different digit classes as actions and $d= 784$ covariates as contexts. As in the Statlog and Adult data, a learner obtains a reward of $1$ for making the right prediction, and $0$ otherwise. }
\end{itemize}
We summarizes the statistics of the above datasets in Table \ref{tab:data_statistics}.
\begin{table}[t]
    \caption{The real-world dataset statistics}
    \label{tab:data_statistics}
    \centering
    \begin{tabular}{c|c|c|c|c|}
    \hline 
      Dataset   & Mushroom & Statlog & Adult & \textcolor{red}{MNIST}  \\
     \hline
     \hline 
      Context dimension & 22 & 9 & 94 & 784 \\
      
      Number of classes & 2 & 7 & 14 & 10 \\ 
      
      Number of instances & 8,124 & 43,500 & 45,222 & 70,000\\
      \hline
    \end{tabular}
\end{table}

\renewcommand{\thesection}{F}
\section{Additional Experiments}
\label{sec:add_expr}
In this section, we complement the experimental results in the main paper with additional experiments regarding the learning ability of our algorithm on dependent data and the different behaviours of S-mode and B-mode training. 
\subsection{Dependent data}
As the sub-optimality guarantee in Theorem \ref{main_theorem} does not require the offline policy to be stationary, we evaluate the empirical performance of our algorithm and the baselines on a new setup of offline data collection that represents dependent actions. In particular, instead of using a stationary policy to collect offline actions as in Section \ref{section:experiment}, here we used an adaptive policy $\mu$ defined as 
\begin{align*}
    \mu(a|\mathcal{D}_{t-1}, \bm{x}_t) = (1-\epsilon) * \pi^*(a|\bm{x}_t) + \epsilon * \pi_{\text{LinUCB}}(a | \mathcal{D}_{t-1}, \bm{x}_t), 
\end{align*}
where $\pi^*$ is the optimal policy and $\pi_{\text{LinUCB}}$ is the linear UCB learner \citep{DBLP:conf/nips/Abbasi-YadkoriPS11}. This weighted policy makes the collected offline data $a_t$ dependent on $\mathcal{D}_{t-1}$ while making sure that the offline data has a sufficient coverage over the data of the optimal policy, as LinUCB does not perform well in several non-linear data. We used $\epsilon = 0.9$ in this experiment. 

\begin{figure}[h!]
\centering
\begin{minipage}[p]{0.48\linewidth}
\centering
\includegraphics[width=\linewidth]{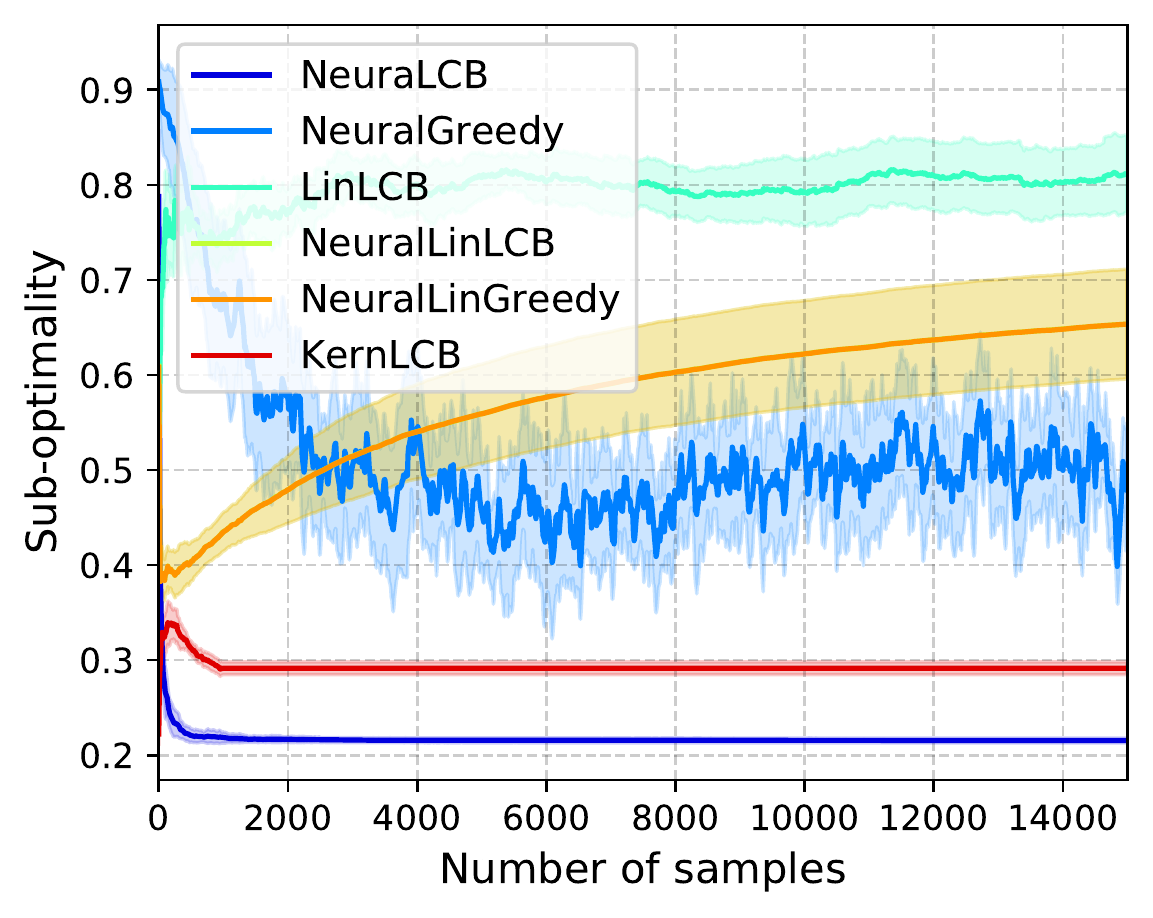}
\subcaption{Statlog} 
\label{fig:first}
\end{minipage}
\begin{minipage}[p]{0.48\linewidth}
\centering
\includegraphics[width=\linewidth]{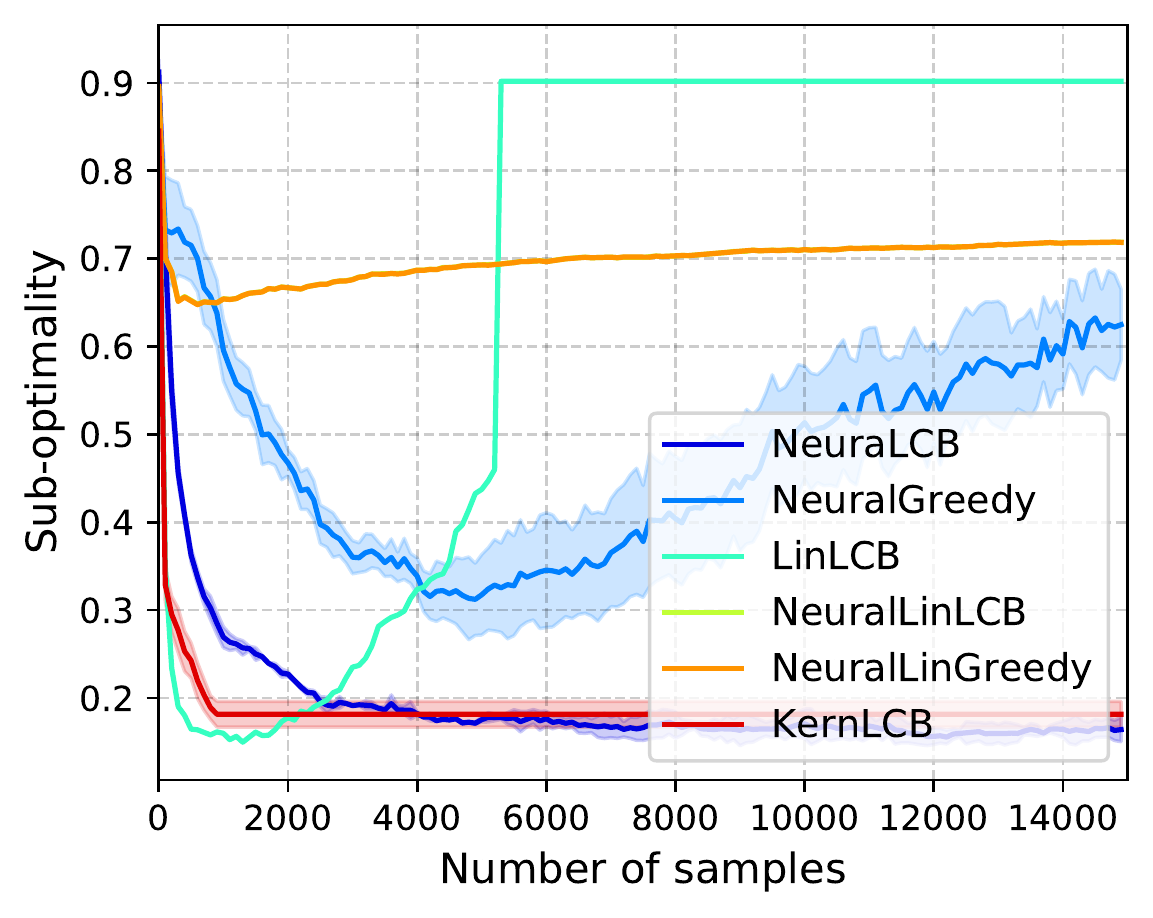}
\subcaption{MNIST}
\end{minipage}
\caption{The sub-optimality of NeuraLCB versus the baseline algorithms on real-world datasets with correlated structures.}
\label{fig:dependent_data}
\end{figure}

The results on Statlog and MNIST are shown in Figure \ref{fig:dependent_data}. We make two important observations. First, on this dependent data, the baseline methods with linear models (LinLCB, NeuralLinLCB and NeuralLinGreedy) \footnote{KernLCB also does not change its performance much, but its computational complexity is still an major issue.} perform almost as bad as when they learn on the independent data in Figure \ref{fig:realworld}, suggesting that linear models are highly insufficient to learn complex rewards in real-world data, regardless of how offline data were collected. Secondly, the main competitor of our method, NeuralGreedy suffers an apparent performance degradation (especially in a larger dataset like MNIST) while NeuraLCB maintains a superior performance, suggesting the effectiveness of pessimism in our method on dealing with offline data and the robustness of our method toward dependent data.

\subsection{S-mode versus B-mode training}
As in Section \ref{section:experiment} we implemented two different training modes: S-mode (Algorithm \ref{alg:neuralcb-sgd}) and B-mode (Algorithm \ref{alg:neuralcb-sgd-bmode}). We compare the empirical performances of S-mode and B-mode on various datasets. As this variant is only applicable to NeuralGreedy and NeuraLCB, we only depict the performances of these algorithms. The results on Cosine (synthetic dataset), Statlog, Adult and MNIST are shown in Figure \ref{fig:mode_training}. 

We make the following observations, which are consistent with the conclusion in the main paper while giving more insights. While the B-mode outperforms the S-mode on Cosine, the S-mode significantly outperforms the B-mode in all the tested real-world datasets. Moreover, the B-mode of NeuraLCB outperforms or at least is compatible to the S-mode of NeuralGreedy in the real-world datasets. First, these observations suggest the superiority of our method on the real-world datasets. Second, to explain the performance difference of S-mode and B-mode on synthetic and real-world datasets in our experiment, we speculate that the online-like nature of our algorithm tends to reduce the need of B-mode in practical datasets because B-mode in the streaming data might cause over-fitting (as there could be some data points in the past streaming that has been fitted for multiple times). In synthetic and simple data such as Cosine, over-fitting tends to associate with strong prediction of the underlying reward function as the simple synthetic reward function such as Cosine is sufficiently smooth, unlike practical reward functions in the real-world datasets. 


\begin{figure}[h!]
\centering
\begin{minipage}[p]{0.24\linewidth}
\centering
\includegraphics[width=\linewidth]{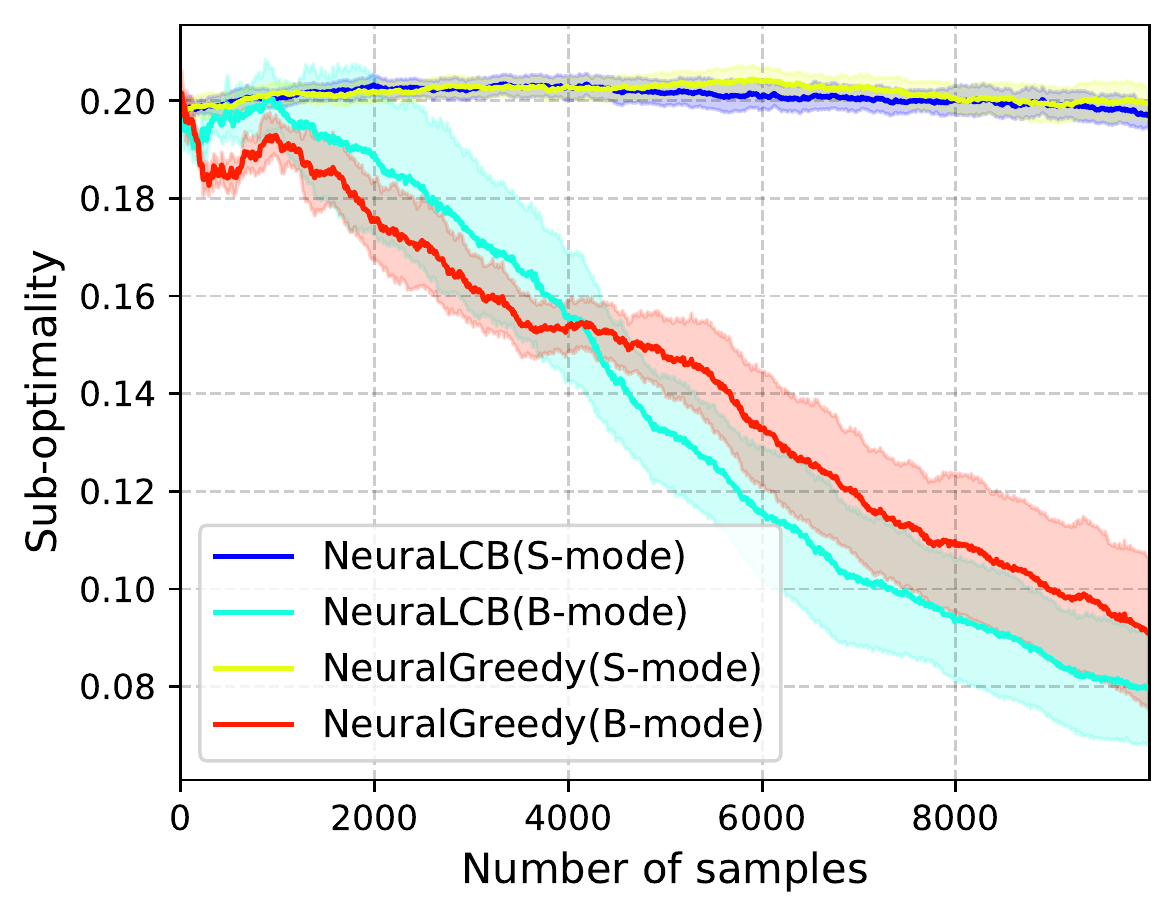}
\subcaption{Cosine} 
\label{fig:first}
\end{minipage}
\begin{minipage}[p]{0.24\linewidth}
\centering
\includegraphics[width=\linewidth]{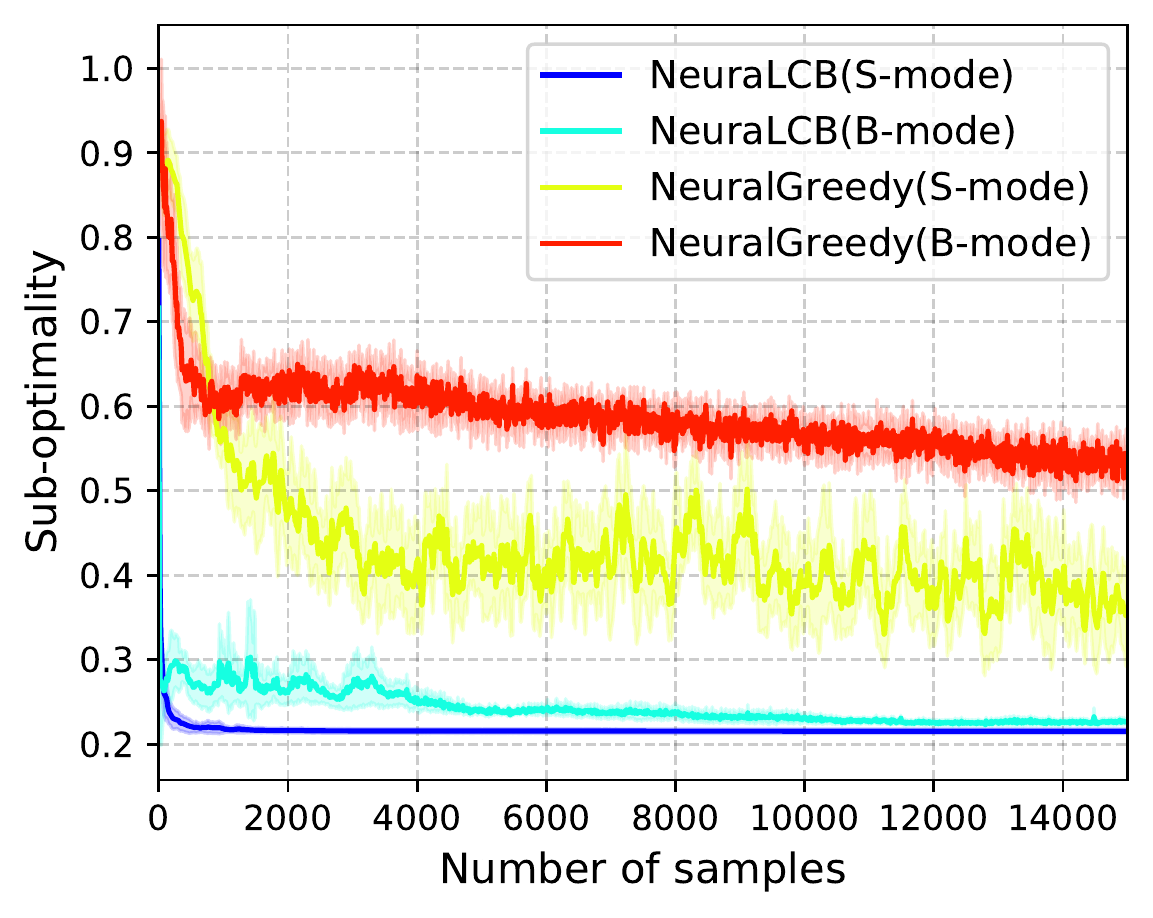}
\subcaption{Statlog}
\end{minipage}
\begin{minipage}[p]{0.24\linewidth}
\centering
\includegraphics[width=\linewidth]{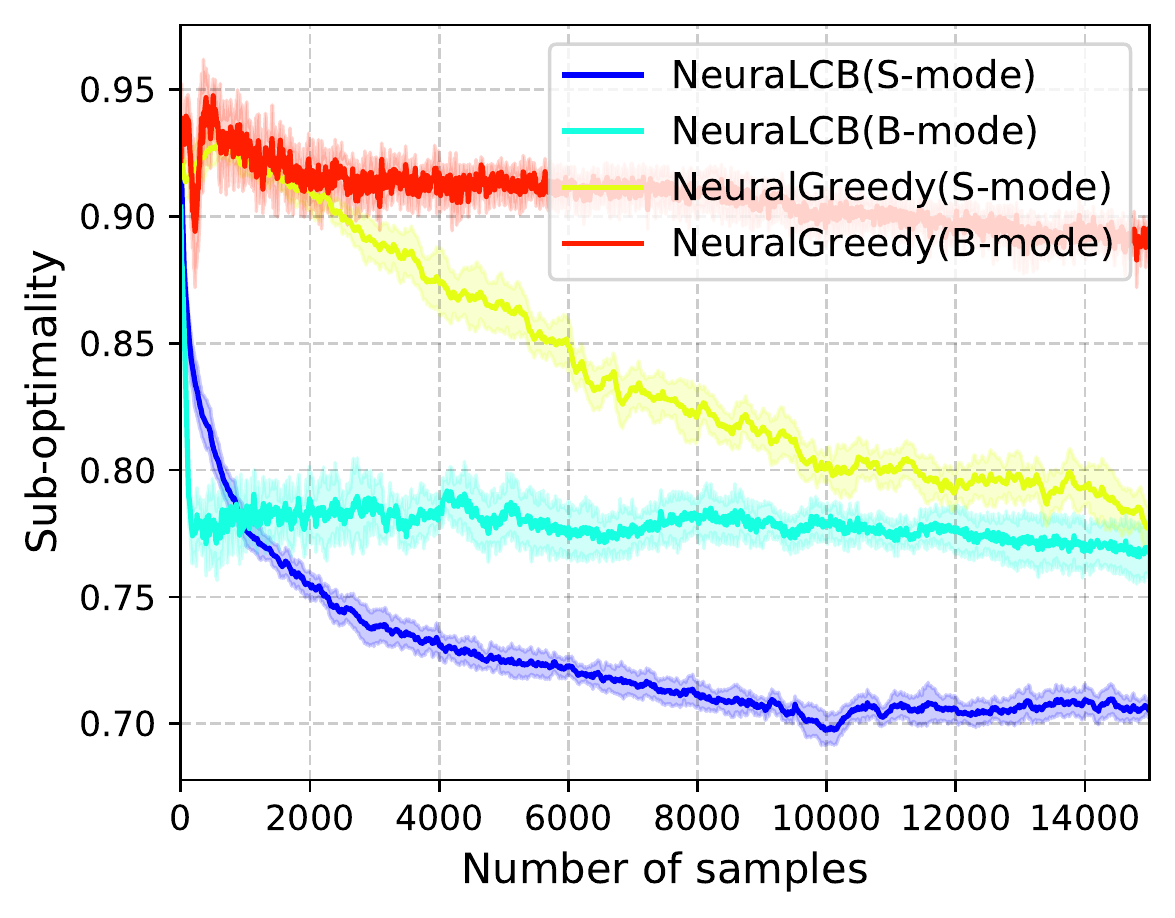}
\subcaption{Adult}
\end{minipage}
\begin{minipage}[p]{0.24\linewidth}
\centering
\includegraphics[width=\linewidth]{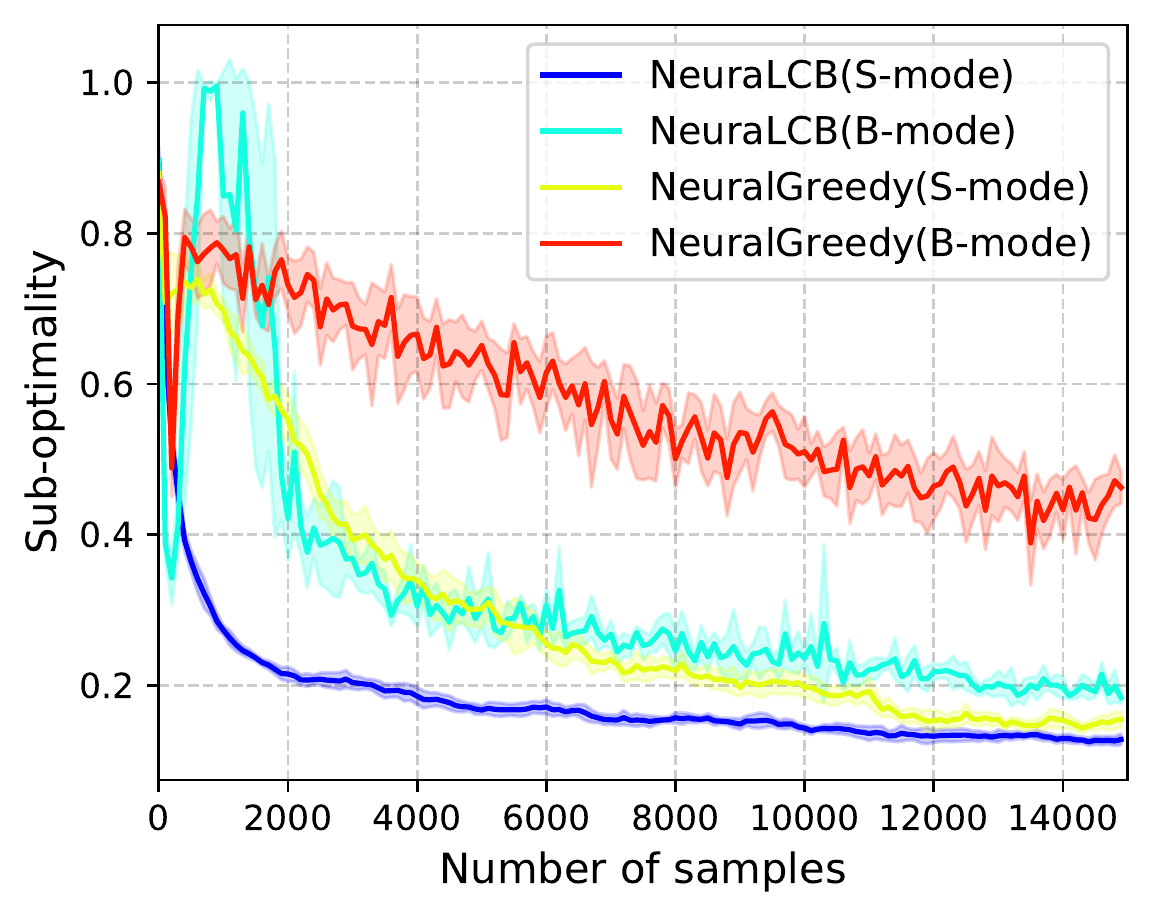}
\subcaption{MNIST}
\end{minipage}
\caption{Comparison of S-mode and B-mode training.}
\label{fig:mode_training}
\end{figure}

\end{document}